\documentclass[11pt]{article}

\usepackage[a4paper,margin=1in]{geometry}

\usepackage[round]{natbib}

\usepackage{amsmath, amsthm, amssymb }
\usepackage[utf8]{inputenc} 
\usepackage[T1]{fontenc}    
\usepackage{lmodern}          
\usepackage[backref=page, colorlinks = True, linkcolor = mydarkblue, citecolor = mydarkblue, urlcolor = mydarkblue]{hyperref}
\usepackage[capitalize,noabbrev]{cleveref}
\usepackage{url}            
\usepackage{booktabs}       
\usepackage{amsfonts}       
\usepackage{nicefrac}       
\usepackage{microtype}      
\usepackage{xcolor}         
\usepackage{multirow}
\usepackage{wrapfig}
\usepackage{authblk}
\usepackage{algorithm}
\usepackage{algpseudocode}

\usepackage[makeroom]{cancel}
\usepackage{graphicx}
\usepackage{svg}

\usepackage{caption}
\usepackage{subfigure}
\usepackage{pifont}
\usepackage{arydshln}
\usepackage{float}
\newtheorem{theorem}{Theorem}

\newtheorem{lemma}{Lemma}
\newtheorem{corollary}{Corollary}
\newtheorem{proposition}{Proposition}

\newtheorem{assumption}{Assumption}
\renewcommand{\theassumption}{A\arabic{assumption}}
\crefname{assumption}{}{}

\definecolor{gencolor}{HTML}{bd453c} 
\definecolor{c1color}{HTML}{942192}      
\definecolor{c2color}{RGB}{0,120,80}       
\definecolor{nlecolor}{HTML}{4477AA}
\definecolor{datacolor}{HTML}{5F5AA5} 
\definecolor{acecolor}{HTML}{CCBB44}
\definecolor{gbisrcolor}{HTML}{FF6B35}
\definecolor{rsnlecolor}{HTML}{FF69B4}
\definecolor{nperscolor}{HTML}{228833}
\definecolor{nplmmdcolor}{HTML}{66CCEE}
\definecolor{darkgreen}{HTML}{1B7F12}

\newcommand{\legendbox}[1]{%
  \textcolor{#1}{\rule{0.55em}{0.55em}}%
}

\usepackage{etoolbox}

\makeatletter

\newcommand{\appendix@addcontentsline}[3]{%
  \edef\@tempa{#1}%
  \edef\@tempb{toc}%
  \ifx\@tempa\@tempb
    \orig@addcontentsline{atoc}{#2}{#3}%
  \else
    \orig@addcontentsline{#1}{#2}{#3}%
  \fi
}

\newcommand{\startappendixtoc}{%
  \let\orig@addcontentsline\addcontentsline
  \let\addcontentsline\appendix@addcontentsline
}

\newcommand{\stopappendixtoc}{%
  \let\addcontentsline\orig@addcontentsline
}

\newcommand{\printappendixtoc}{%
  \clearpage
  \section*{Table of Contents}
  \begingroup
    \let\clearpage\relax
    \@starttoc{atoc}%
  \endgroup
  \clearpage
}

\makeatother

\crefname{oldassumption}{}{}

\def\X{\mathcal{X}}

\def\P{\mathbb{P}}

\definecolor{mydarkblue}{rgb}{0,0.08,0.45}

\DeclareMathOperator{\pif}{\operatorname{PIF}}

\def\KL{\text{KL}}

\newcommand{\xmark}{\ding{55}}

\newenvironment{talign*}
 {\csname align*\endcsname}
 {\endalign}
\newenvironment{talign}
{\align}
{\endalign}

\renewcommand{\thefootnote}{\fnsymbol{footnote}}

\title{Amortised and provably-robust simulation-based inference}

\author{%
Ayush Bharti$^{1,}$\thanks{Equal contribution.}\quad
Charita Dellaporta$^{2,*}$\quad
Yuga Hikida$^{1}$\quad
Fran\c{c}ois-Xavier Briol$^{2}$\\\vspace{0.6ex}
\small $^{1}$Department of Computer Science, Aalto University, Finland\\
\small $^{2}$Department of Statistical Science, University College London, UK\\\vspace{0.4ex}
\small Corresponding author: \texttt{ayush.bharti@aalto.fi}
}

\date{}
\begin{document}

\maketitle

\renewcommand{\thefootnote}{\arabic{footnote}}
\setcounter{footnote}{0}

\begin{abstract}
Complex simulator-based models are now routinely used to perform inference across the sciences and engineering, but existing inference methods are often unable to account for outliers and other extreme values in data which occur due to faulty measurement instruments or human error. In this paper, we introduce a novel approach to simulation-based inference grounded in generalised Bayesian inference and a neural approximation of a weighted score-matching loss. This leads to a method that is both amortised and provably robust to outliers, a combination not achieved by existing approaches. Furthermore, through a carefully chosen conditional density model, we demonstrate that inference can be further simplified and performed without the need for Markov chain Monte Carlo sampling, thereby offering significant computational advantages, with complexity that is only a small fraction of that of current state-of-the-art approaches.
\end{abstract}

\section{Introduction}

Simulation-based inference (SBI, \cite{Cranmer2020}) has emerged as a powerful framework for addressing complex Bayesian inference problems where likelihood functions are intractable but simulation is feasible. Its flexibility and scalability have led to widespread adoption in domains such as particle physics \citep{brehmer2021simulation}, epidemiology \citep{Kypraios2017}, wireless communications \citep{Bharti2022}, and cosmology \citep{alsing2018massive,Jeffrey2021}. In particular, recent advances have focused on conditional density estimation, often enabling amortised inference where a model trained once can be reused for multiple observations \citep{zammitmangion2024neural}.

Despite these successes, a major challenge remains: SBI methods often lack robustness to model misspecification \citep{Cannon2022,Nott2024,Kelly2025}, which can arise from corrupted or incomplete data, outliers, or inadequacies in the simulator. The recent review of \citet{Martin2024} identifies misspecification as one of the most urgent open problems in Bayesian computation, while \citet{Hermans2022} caution that over-confident posteriors, an issue amplified by misspecification, are at risk of precipitating a broader crisis of trust in SBI. Such is the current interest in this open challenge that \citet{Dellaporta2022} were awarded the Best Paper Award at AISTATS 2022 for making concrete progress toward its resolution.

\begin{figure}
    \centering
    \includegraphics[width=0.55\linewidth]{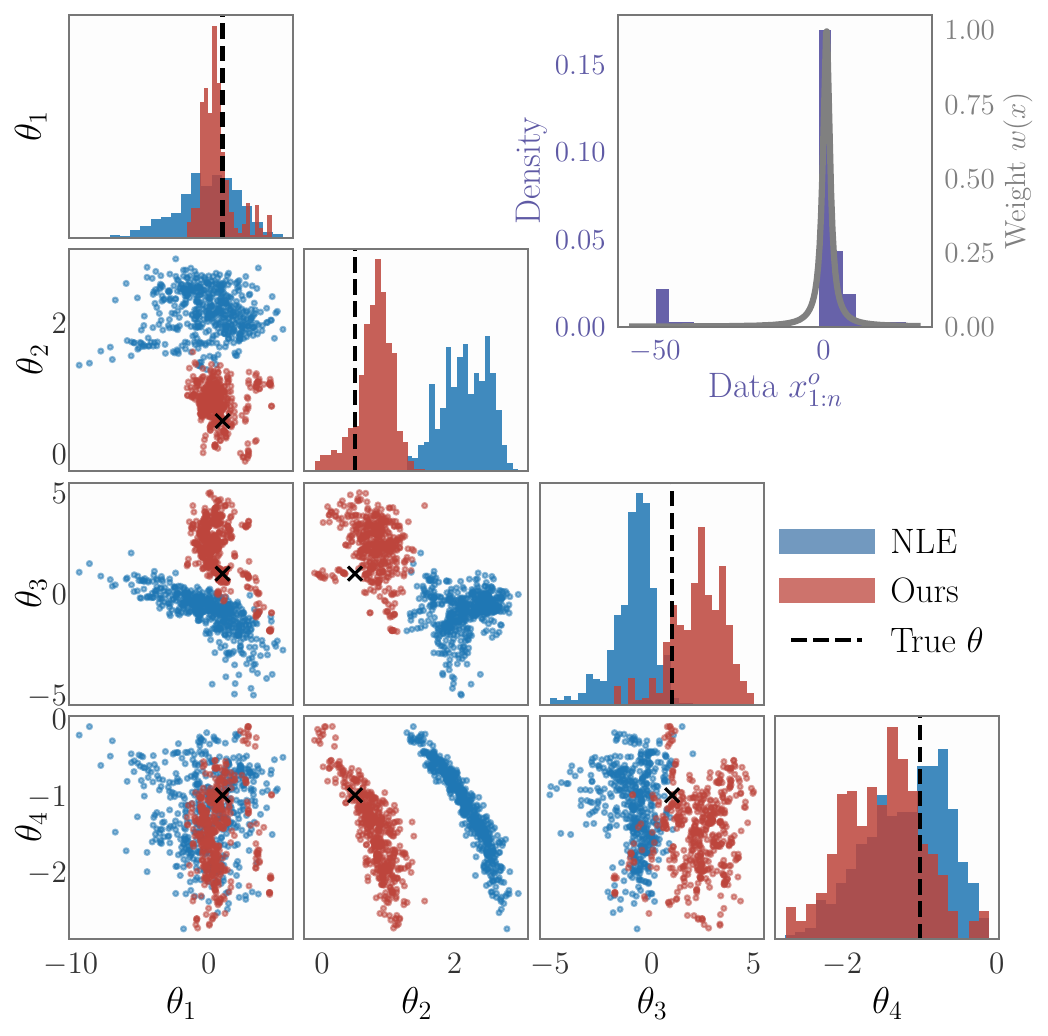}
    \caption{\textbf{Amortised SBI in the presence of outliers.} With a few one-sided outliers (10\%) in the observed data (\legendbox{datacolor}), shown in the top-right plot, the neural likelihood estimation (NLE \legendbox{nlecolor}) posterior of the g-and-k distribution is significantly impacted, specifically parameters $\theta_2$ and $\theta_3$ that govern the scale and skewness of the distribution (details in \Cref{sec:gnk}). In contrast, our proposed method, named neural score-matching Bayes (\legendbox{gencolor}), is robust as we use the function $w(x)$ (\legendbox{gray}) to down-weight the effect of the outliers.}
    \label{fig:illustration}
\end{figure}

Unfortunately, whilst various approaches have been proposed, they either \emph{lack formal robustness guarantees} \citep{Ward2022, Bharti22a, Huang2023,Gloeckler2023,Gao2023,quera2023bayesian, dyer2023gradient, Wehenkel2024,Kelly2024, wu2024testing, swierc2024domain, Mishra2025robustABI, Verma2025, Senouf2025, callaghan2025robust, ruhlmann2025flow, Sadasivan2025, elsemuller2025does, mao2025robust, sun2026amortized}, \emph{are incompatible with amortised inference} \citep{Briol2019,Kisamori2018,Frazier2020, 2021Fujisawa, Dellaporta2022, Frazier2021synthetic,Thomas2020, Pacchiardi2024, Legramanti2022}, or  both. Apart from lacking robustness guarantees, some of these existing  methods also require side information in the form of additional data to achieve robustness \citep{Wehenkel2024, Senouf2025,Mishra2025robustABI}, and therefore have more limited applicability.

We introduce a method that addresses these shortcomings and is specialised for robustness to outliers. Our method is centred around the score-matching objective of \citet{Hyvarinen2006} and resembles the two-step approach of neural likelihood estimation (NLE, \cite{Papamakarios2019}). The first step consists of training a \emph{neural} conditional density model using simulated data, typically implemented via a flexible architecture such as a normalising flow \citep{papamakarios2021normalizing} or an energy-based model. The second step corresponds to updating prior beliefs using generalised Bayesian inference \citep{Bissiri2016} with a weighted score-matching (SM) loss  \citep{Altamirano2023} evaluated on the observed data and approximated with the aforementioned neural surrogate. This results in \emph{neural score-matching Bayes} (\emph{NSM-Bayes}), a method that is both amortised and provably-robust to outliers. This is illustrated in \Cref{fig:illustration} for a g-and-k simulator, where the NLE posterior is severely biased in the presence of outliers, whilst NSM-Bayes remains centred around the correct parameter value. 

We propose two versions of our approach. The first version (\Cref{sec:SM-Bayes-SBI}) is the most general, wherein any flexible conditional density model and prior can be used, akin to NLE. The second version is a conjugate special case of NSM-Bayes, termed \emph{NSM-Bayes-conj} (\Cref{sec:SM-Bayes-SBI-conjugate}). NSM-Bayes-conj considers Gaussian priors and  restricts the flexibility of the conditional density model to a specific form of energy-based model, resulting in substantial reductions in the computational cost of inference. In \Cref{sec:theory}, we present theoretical guarantees that these two methods are robust to outliers, providing the first such guarantees for amortised SBI methods.
In \Cref{sec:experiments}, we then consider simulators from epidemiology and telecommunications engineering and demonstrate that our methods provide reliable inference under the presence of outliers, with NSM-Bayes-conj simultaneously leading to significant computational cost savings at inference time. 

\section{Background} \label{sec:background}

Let $\mathcal{X} = \mathbb{R}^{d_{\mathcal{X}}}$ be some domain and  $\mathcal{P}(\mathcal{X})$ be the space of probability distributions on $\mathcal{X}$. We consider Bayesian inference for some parameter $\theta \in \Theta \subseteq \mathbb{R}^{d_{\Theta}}$ of a model $\{\P_\theta\}_{\theta \in \Theta} \subset \mathcal{P}(\mathcal{X})$ with (typically unknown) density $p(\cdot|\theta)$, based on some independent and identically distributed (iid) observed data $x^o_{1:n} = \{x^o_1,\ldots,x^o_n\} \in \mathcal{X}^n$ obtained from the (unknown) data-generating process $\P_0 \in \mathcal{P}(\mathcal{X})$ with density $p_0$, which we assume for now to be uncorrupted. Assuming a prior density $\pi(\theta)$, the posterior density is
\begin{align}\label{eq:Bayes}
    \pi(\theta|x^o_{1:n}) \propto \prod_{i=1}^n { p(x^o_i|\theta)} \pi(\theta).
\end{align}
We now briefly review approximate Bayesian (\Cref{sec:background_sbi}) and generalised Bayesian (\Cref{sec:background_gbi}) inference methods for simulators, with an emphasis on amortisation and robustness.
\subsection{Approximate Bayesian Inference for Simulators}
\label{sec:background_sbi}

In simulation-based inference (SBI; also called likelihood-free inference), performing exact Bayesian inference is not feasible since the density $p(\cdot|\theta)$ (or equivalently, the likelihood) is intractable, typically due to unobserved latent variables or the computational cost of evaluation. Instead, we assume  one can simulate data from this model for any parameter value through a simulator, and we then use such simulations to construct an approximate posterior. 
The most established SBI approach is approximate Bayesian computation (ABC, \cite{Beaumont2019}), which can be made robust; see \citet{Kisamori2018,Frazier2020, 2021Fujisawa, Legramanti2022}. Another approach is Bayesian synthetic likelihood \citep{Price2018}, whose robustness has been extensively studied in the works of \citet{Frazier2021,Frazier2021synthetic}. Finally, \citet{Dellaporta2022} proposed to use the posterior bootstrap of \citet{Lyddon2018,Fong2019} with the minimum distance estimator from \cite{Briol2019} to get a robust non-parametric posterior which is highly parallelisable. Although most of these approaches are provably-robust to outliers, they unfortunately \emph{do not permit amortisation}; i.e. given new observations, the entire method needs to be re-run to obtain an approximate posterior, which can  be extremely computationally demanding.

Neural likelihood estimation (NLE, \cite{Papamakarios2019, lueckmann2019likelihood, boelts2022flexible, radev2023jana}) is an alternative approach to SBI. It consists of (i) using a conditional density model based on some parametric class $\{q_\phi(\cdot|\theta)\}_{\phi \in \Phi}$ with $\Phi \subseteq \mathbb{R}^{d_\Phi}$ (typically based on a neural network; e.g. a mixture density network or normalising flow) to estimate $p(\cdot|\theta)$ for any  $\theta \in \Theta$, then (ii) replacing the unknown likelihood with the fitted model to perform Bayesian inference. Step (i) is performed by obtaining samples $\{(\theta_i, x_i)\}_{i=1}^m$ through simulating parameter values from the prior $\theta_i \sim \pi$, then simulating data from the simulator $x_i \sim \P_{\theta_i}$. Given these simulations, the optimal value $\phi \in \Phi$ is usually obtained by minimising the empirical negative log-likelihood loss:
\begin{align}\label{eq:NLE_loss}
    \hat{\phi}_m := \arg\min_{\phi \in \Phi} - \frac{1}{m} \sum_{i=1}^m  \log q_\phi(x_{i} | \theta_i).
\end{align}
Given $\hat{\phi}_m$, step (ii) consists of replacing $p(\cdot|\theta)$ by $q_{\hat{\phi}_m}(\cdot|\theta)$ in \Cref{eq:Bayes} and approximating this posterior, denoted $\pi_\text{NLE}(\theta|x_{1:n}^o, \hat{\phi}_m)$, typically through Markov chain Monte Carlo (MCMC,  \cite{Robert2004}). NLE is often called \emph{partially amortised} since inference on a new set of observations only requires the MCMC step (i.e. step (ii)), without the need to re-train $q_\phi$ from scratch (i.e. we do not have to repeat step (i)). Moreover, a new observation $x_{n+1}^o \in \mathcal{X}$ can be included simply by adding a term $q_{\hat{\phi}_m}(x_{n+1}^o|\theta)$ in step (ii), which again requires no additional simulation. 
Alternatively, neural posterior estimation (NPE, \citep{Papamakarios2016, Lueckmann2017, Greenberg2019, Radev2022}) approximates the posterior directly using conditional density models, and is thus called \emph{fully amortised} as it does not require any MCMC post-training.

Several neural SBI methods have been shown \textit{empirically} to provide robustness to model misspecification. These are based on explicit modelling of the misspecification \citep{Ward2022,Kelly2024,Verma2025, callaghan2025robust}, penalisation of the learning objectives \citep{Huang2023,Gloeckler2023}, or side information \citep{Wehenkel2024,Mishra2025robustABI,Krouglova2025,Hikida2025}. Unfortunately, \emph{none of these methods have theoretical guarantees of robustness to outliers} or other forms of model misspecification.

Although less directly relevant to our paper, we note that an interesting complementary line of work has investigated diagnostics for prior or simulator misspecification, including formal tests \citep{Key2021,Ramirez-Hassan2024} and empirical checks \citep{Leclercq2022,Elsemuller2023,Schmitt2024,montel2024tests,yuyan2025robust}.

\subsection{Generalised Bayesian Inference for Simulators}
\label{sec:background_gbi}

An alternative to Bayesian inference which can provide robustness is generalised Bayesian Inference (GBI) \citep{Bissiri2016,Knoblauch2019}. This consists of updating beliefs through an empirical loss function $\mathcal{L}:\Theta \times \mathcal{X}^n \rightarrow \mathbb{R}$ computed for the observed data $x^o_{1:n}$:
\begin{align}\label{eq:genBayes}
    \pi_\mathcal{L}(\theta|x^o_{1:n}) \propto \exp\left(- \beta n \mathcal{L}(\theta;x^o_{1:n})\right) \pi(\theta),
\end{align}
where $\beta >0$ is a temperature/learning rate parameter \citep{Wu2023} which weights the relative importance of the loss and the prior. Several special cases are of interest: $\mathcal{L}_{\text{Bayes}}(\theta;x^o_{1:n}) := -\frac{1}{n}\sum_{i=1}^n \log p(x_i^o|\theta)$ and $\beta=1$ recovers standard Bayesian inference as given in \Cref{eq:Bayes},  $\mathcal{L}_{\text{NLE}}(\theta;x^o_{1:n}, \hat{\phi}_m) := - \frac{1}{n}\sum_{i=1}^n \log q_{\hat{\phi}_m}(x_i^o|\theta)$ with $\beta=1$ recovers NLE (as well as Bayesian synthetic likelihoods, which can be viewed as a special case), and \cite{Schmon2020} showed how to recover ABC by marginalising a generalised posterior on an extended space.

However, none of these losses provide robustness to model misspecification, and alternative robust losses that have been proposed for SBI have significant limitations. \citet{Matsubara2021,Matsubara2022} showed that loss functions based on score-based discrepancies are well-suited to performing GBI with intractable likelihoods, but their approach still requires access to an unnormalised expression of the likelihood. \citet{Pacchiardi2024} showed that kernel-based scoring-rule losses give posteriors akin to the maximum mean discrepancy (MMD) Bayes posterior of \citet{Cherief-Abdellatif2019-MMDBayes} and can resolve this limitation. Unfortunately, their approach requires the use of advanced Monte Carlo methods which can be hard to tune, and it is also not amortised.
\citet{Gao2023} proposed a method that amortises MMD-Bayes across different observed datasets available during training, but their method has no formal robustness guarantees. Concurrently to the present paper, \cite{sun2026amortized} proposed a power-posterior version of NPE, but power-posteriors are not provably robust to outliers and have the downside that they equally downweight corrupted and uncorrupted observations. These limitations leave open the question of how to construct an amortised SBI method which is provably robust.

\section{Methodology}
\label{sec:method}

We now present our novel approach, called neural score-matching Bayes (NSM-Bayes). A key tool used for this is the score matching (SM) divergence, which was popularised for parameter estimation by \citet{Hyvarinen2006}, and further extended by \citet{Hyvarinen2007,Lyu2009,Barp2019,Yu2019,Yu2022,Xu2022}. We use  the weighted (or `diffusion') SM divergence of \citet{Barp2019} as the weights will give us the additional flexibility needed to achieve robustness. For two  densities $p ,q$ on $\mathcal{X} = \mathbb{R}^{d_\X}$ 
and an invertible matrix-valued weight function $W:\mathcal{X} \rightarrow \mathbb{R}^{d_{\mathcal{X}} \times d_{\mathcal{X}}}$, this is given by:  
\begin{align}\label{eq:score-matching-definition}
    \text{SM}_W(p || q) :=  \mathbb{E}_{X \sim p}\left[\left\|W(X)^\top(\nabla_x \log q(X) - \nabla_x \log p(X))\right\|_2^2\right].
\end{align}
This divergence is well-defined when $p$ and $q$ are positive and continuously differentiable, $\nabla_x \log p - \nabla_x \log q \in L^2(q)$, and the weight $W$ is bounded and continuously differentiable.
Although we operate on  $\mathbb{R}^{d_{\mathcal{X}}}$ throughout,  these conditions can be extended to bounded domains \citep[see][Appendix B.5]{Altamirano2023}.
Under certain regularity conditions on $p$ and $q$ \citep{Barp2019} and using integration-by-parts, the SM divergence can be expressed as:
\begin{align}
    \text{SM}_W(p || q) = \mathbb{E}_{X \sim p}\left[\left\|W(X)^\top \nabla_x \log q(X) \right\|_2^2 + 2 \nabla_x \cdot \left( W(X) W(X)^\top \nabla_x \log q(X) \right)\right] + C_p,
    \label{eq:score-matching-divergence}
\end{align}
where $C_p \in \mathbb{R}$ is a constant with respect to $q$. This expression of the SM divergence forms the basis of our NSM-Bayes (\Cref{sec:SM-Bayes-SBI}) and NSM-Bayes-conj (\Cref{sec:SM-Bayes-SBI-conjugate}) method. In \Cref{sec:hyperparameter-selection}, we discuss how to select the hyperparameters of our method.

\subsection{Neural Score-matching Bayes for Simulation-based Inference}\label{sec:SM-Bayes-SBI}

We first present the most general case of our method, and assume the observed data $x^o_{1:n} \sim \P_0$ may contain outliers which can arbitrarily bias the standard Bayes posterior. One way to obtain robustness is through a specific form of GBI called weighted SM-Bayes \citep{Altamirano2023,Altamirano2024,Reimann2024,Laplante2025,Rooijakkers2025,Ezzerg2025}, which is inspired by the score-based approach to generalised Bayes of \citet{Matsubara2021,Matsubara2022}:
\begin{align}\label{eq:score_matching_posterior}
    \pi_\text{SM}(\theta|x^o_{1:n}) \propto \exp \left( - \beta n \mathcal{L}_{\text{SM}}\left(\theta;x_{1:n}^o \right) \right) \pi(\theta).
\end{align}
Here, the loss function $\mathcal{L}_{\text{SM}}(\theta;x_{1:n}^o)$ can be expressed as:
\begin{align}\label{eq:SMloss}
   \mathcal{L}_{\text{SM}}(\theta;x_{1:n}^o) 
     := \frac{1}{n} \sum_{i=1}^n \left\|W(x_i^o)^\top \nabla_x \log p(x_i^o|\theta) \right\|_2^2 +  2 \nabla_x \cdot \left( W(x_i^o)W(x_i^o)^\top \nabla_x \log p(x_i^o|\theta) \right).
\end{align}
This corresponds to a Monte Carlo estimator of $\text{SM}_W(p_0||p(\cdot|\theta))$, based on \Cref{eq:score-matching-divergence}, up to the additive constant which can be dropped from the loss term since it can be incorporated into the normalisation constant of the SM-Bayes posterior. 
Throughout this paper, we will assume the following about the matrix-valued weight function $W$:
\begin{assumption}\label{assumption:weight}
    $W(x) := w(x) I_{d_\X}$ for some $w: \mathcal{X} \rightarrow \mathbb{R}_{+}$ and $d_\X$-dimensional identity matrix $I_{d_\X}$, $\sup_{x \in \mathcal{X}} w(x) <\infty$ and $\sup_{x \in \mathcal{X}} \| \nabla_x w(x) \|_{2} < \infty$.
\end{assumption}
The diagonal form of $W$ is the most common choice in practice \citep{Altamirano2024}, and the boundedness of $w$ and $\nabla w$ are mild regularity conditions necessary to guarantee robustness, as formalised in \Cref{sec:theory}. Though we recommend a specific $W$ satisfying $\Cref{assumption:weight}$ later in the section, we keep the presentation general.
In SBI, the exact SM loss is intractable since $p(\cdot | \theta)$ is itself intractable. However, assuming we have a surrogate $q_{\hat \phi_m}(\cdot | \theta)$ satisfying certain regularity conditions, this loss can be easily approximated.

\begin{assumption} \label{assumption:regularity-q}
 For all $\phi \in \Phi$, $q_\phi(\cdot | \theta)$ is a twice-differentiable density and $p_0(\cdot) w(\cdot)^2 \nabla_x \log q_\phi(\cdot | \theta), \nabla_x \cdot (p_0(\cdot) w(\cdot)^2  \nabla_x \log q_\phi(\cdot| \theta))$ are integrable  on $\mathbb{R}^{d_{\mathcal{X}}}$.
\end{assumption}
Under \Cref{assumption:weight} and \Cref{assumption:regularity-q}, we call this approximation the \emph{neural SM} (NSM) loss, and it is given by
\begin{align}\label{eq:step2loss_simplified}
   \mathcal{L}_{\text{NSM}}\left(\theta;x_{1:n}^o, \hat \phi_m \right) 
     := &\frac{1}{n} \sum_{i=1}^n w(x_i^o)^2
 \left\| \nabla_x \log q_{\hat{\phi}_m}(x_i^o|\theta)\right\|_2^2
     +  2 \left(\nabla_x w(x_i^o)^2\right)^\top \nabla_x \log q_{\hat{\phi}_m}(x_i^o|\theta) \nonumber \\
     & +2w(x_i^o)^2 \text{Tr}\left(\nabla^2_x \log q_{\hat{\phi}_m}(x_i^o|\theta)\right).
\end{align}
Doing GBI with $\mathcal{L}_{\text{NSM}}$ results in our neural-surrogate approximation of SM-Bayes, called \emph{NSM-Bayes}:
\begin{align}
   \pi_\text{NSM}\left(\theta \big| x^o_{1:n}, \hat \phi_m \right)  \propto  \exp{\left( - \beta n \mathcal{L}_{\text{NSM}}\left(\theta;x_{1:n}^o, \hat \phi_m \right)\right)} \pi(\theta) \label{eq:nsm-Bayes_posterior}
\end{align}
Thus, similarly to NLE, our NSM-Bayes approach proceeds in two steps: we first train a conditional density model $q_\phi$ using simulated data $\{(\theta_i, x_i)\}_{i=1}^m$, and then update prior beliefs using the NSM loss in \Cref{eq:nsm-Bayes_posterior}. Importantly, we do not put any restrictions on how $q_\phi$ is selected or how $\hat{\phi}_m$ has been obtained. In our experiments, $q_\phi$ will either be a mixture density network (MDN) \citep{bishop1994mixture} or a normalising flow, e.g. a masked autoregressive flow (MAF) \citep{Papamakarios2017_maf} trained through \Cref{eq:NLE_loss}, but any alternative model where the score function is available could be used.
Note that NSM-Bayes is partially amortised; once $\hat{\phi}_m$ is estimated by training the neural surrogate, we can directly use $q_{\hat \phi_m}$ to add additional observations or perform inference on a new dataset, without ever needing to run additional simulations.

\subsection{Conjugate Neural Score-matching Bayes}
\label{sec:SM-Bayes-SBI-conjugate}

We now consider a special case of the framework above. assume, without loss of generality, that our model is based on an unnormalised family; i.e. $ q_\phi(x|\theta) = \tilde{q}_\phi(x|\theta) / C(\theta,\phi)$,
where $\tilde{q}_\phi(x|\theta)$ can be evaluated in closed-form, but ${C(\theta,\phi)}>0$ is a (possibly intractable) normalisation constant. 
In particular, we consider models which are (natural) exponential families in $\theta$.
\begin{assumption}\label{assumption:exptheta}
Let $q_{\phi}(x|\theta) \propto \exp(T_{\phi}(x)^\top \theta + b_{\phi}(x))$,
where the statistic $T_{\phi}:\mathcal{X} \rightarrow \mathbb{R}^{d_\Theta}$ and $b_\phi:\mathcal{X} \rightarrow \mathbb{R}$ are both twice-differentiable and such that $q_{\phi}$ satisfies \Cref{assumption:regularity-q}.
\end{assumption}

Examples include energy-based models which are linear in $\theta$, as well as certain instances of \citet{Arbel2017,Sasaki2018,Pacchiardi2020}. This form is restrictive in $\theta$, but $T_{\phi}$ and $b_\phi$ can be flexible non-linear maps, such as neural networks with smooth activation functions (e.g. tanh, softplus, and GELU). 

Since the model in \Cref{assumption:exptheta} is unnormalised, we will not be able to use $\mathcal{L}_\text{NLE}$ in \textbf{step (i)}. Instead, we utilise score-matching to train $\phi$ since scores do not require any knowledge of the normalisation constant: $\nabla_x \log q_\phi(x|\theta) = \nabla_x \log \tilde{q}_\phi(x|\theta)$. As $q_\phi$ is trained on simulated data $\{(\theta_i, x_i)\}_{i=1}^m$ which does not contain any outliers, we use the unweighted SM objective for conditional densities due to \citet{Altamirano2024} as the loss function $J(\phi) := \mathbb{E}_{\theta \sim \pi} \left[\text{SM} \left(p(\cdot|\theta) ||q_\phi(\cdot|\theta)\right) \right],$ and estimate the parameters of our conditional density estimator as $\hat{\phi}_m := \arg\min_{\phi \in \Phi} J_{m}(\phi)$, where $J_{m}(\phi)$ is a Monte Carlo estimate of $J(\phi)$ obtained through the simulated data, with the additive constant dropped and simplified:
\begin{align}\label{eq:step1SM_estimator}
   J_{m}(\phi)  = \frac{1}{m} \sum_{i=1}^m  \left\|\nabla_x \log q_\phi(x_i|\theta_i) \right\|_2^2   + 2 \text{Tr}\left(  \nabla_x^2 \log  q_\phi(x_i|\theta_i)\right).
\end{align}
This approach is similar to other score-based methods already proposed in the SBI literature \citep{Pacchiardi2020,Geffner2023, simons2023neural, linhart2024diffusion, Sharrock2024, Gloeckler24_simformer,  Khoo2025, nautiyal2025condisim, jiang2025simulation}. In particular, \citet{Pacchiardi2020} also propose learning a neural exponential family surrogate via score-matching. However, they use MCMC to sample from the Bayes posterior instead of using GBI, and their method is therefore not robust.

The advantage of  \Cref{assumption:exptheta} is that \textbf{step (ii)} of obtaining the NSM-Bayes posterior can now be performed in a fully conjugate manner leading to the so-called NSM-Bayes-conj posterior.
\begin{proposition}\label{prop:conjugacy}
Suppose \Cref{assumption:exptheta} holds, $\pi(\theta) \propto \mathcal{N}(\theta;\mu,\Sigma)$, and $W(x) = I_{d_\X} w(x)$ for some $w: \X \rightarrow \mathbb{R}$. Then  $\pi_{\text{NSM}}(\theta|x_{1:n}^o, \hat \phi_m) \propto \mathcal{N}(\theta;\mu_{n,m},\Sigma_{n,m})$ where:
\begin{align*}
& \mu_{n,m} := \Sigma_{n,m} \left( \Sigma^{-1} \mu  - 2 \beta \left( \sum_{i=1}^n w(x_i^o)^2 \nabla_x T_{\hat{\phi}_m}(x_i^o) \nabla_x b_{\hat{\phi}_m}(x_i^o)  - \nabla_x \cdot \left(w(x_i^o)^2 \nabla_x T_{\hat{\phi}_m}(x_i^o)^\top\right)^\top \right) \right),  \\
& \Sigma_{n,m}^{-1}  := \Sigma^{-1}+ 2 \beta \sum_{i=1}^n w(x_i^o)^2\nabla_x T_{\hat{\phi}_m}(x_i^o) \nabla_x T_{\hat{\phi}_m}(x_i^o)^\top. 
\end{align*}
\end{proposition}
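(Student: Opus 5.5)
The plan is a direct computation: show that, under \Cref{assumption:exptheta}, the NSM loss in \Cref{eq:step2loss_simplified} is a quadratic function of $\theta$. Then $\exp(-\beta n \mathcal{L}_{\text{NSM}})$ is an unnormalised Gaussian in $\theta$, and multiplying by the Gaussian prior and completing the square gives the stated $\mu_{n,m}$ and $\Sigma_{n,m}$. The normalisation constant $C(\theta,\hat\phi_m)$ plays no role, because only $x$-derivatives of $\log q$ enter the loss.

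\textbf{Step 1: the score.} Write $\nabla_x T_{\hat\phi_m}(x)\in\mathbb{R}^{d_\Theta\times d_\X}$ for the matrix whose $j$-th row is $\nabla_x T_{\hat\phi_m,j}(x)^\top$; this is the convention under which $\nabla_x T\,\nabla_x T^\top$ is $d_\Theta\times d_\Theta$, as in the statement. Then
\[
\nabla_x \log q_{\hat\phi_m}(x|\theta) = \nabla_x T_{\hat\phi_m}(x)^\top\theta + \nabla_x b_{\hat\phi_m}(x),
\]
which is affine in $\theta$.

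\textbf{Step 2: rewrite the loss.} I would first note that, with $W=wI_{d_\X}$, the last two terms of \Cref{eq:step2loss_simplified} combine as
\[
2(\nabla_x w^2)^\top s + 2w^2\,\mathrm{Tr}(\nabla_x s) = 2\nabla_x\cdot(w^2 s), \qquad s=\nabla_x\log q .
\]
Substituting the affine score gives three pieces:
\begin{itemize}
\item The squared-norm term is $w^2\|\nabla_x T^\top\theta+\nabla_x b\|^2 = \theta^\top\big(w^2\nabla_x T\,\nabla_x T^\top\big)\theta + 2\theta^\top w^2\nabla_x T\,\nabla_x b + \text{const}$.
\item By linearity of the divergence in $\theta$, the divergence term is $2\,\theta^\top\big[\nabla_x\cdot\big(w^2\nabla_x T^\top\big)\big]^\top + \text{const}$, with the divergence of the $d_\X\times d_\Theta$ matrix $w^2\nabla_x T^\top$ taken column-wise.
\item All constants are independent of $\theta$ and are absorbed into the normalisation.
\end{itemize}

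\textbf{Step 3: complete the square.} Summing over $i$ and multiplying by $-\beta n\cdot\frac1n=-\beta$, the log-posterior becomes
\[
-\tfrac12\theta^\top A\theta+\theta^\top v+\text{const},
\]
where
\[
A=\Sigma^{-1}+2\beta\sum_i w(x_i^o)^2\nabla_x T\,\nabla_x T^\top
\]
collects the prior precision and twice the quadratic coefficient, and $v$ equals $\Sigma^{-1}\mu$ minus $2\beta$ times the sum of the linear coefficients. Completing the square gives precision $A=\Sigma_{n,m}^{-1}$ and mean $A^{-1}v=\mu_{n,m}$. Since each added term $w^2\nabla_x T\,\nabla_x T^\top$ is positive semidefinite, $A\succ0$, so the result is a proper Gaussian. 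This holds for any real-valued $w$, which is why only $W=wI_{d_\X}$ is needed rather than the full \Cref{assumption:weight}; \Cref{assumption:exptheta} only guarantees that $\mathcal{L}_{\text{NSM}}$ is well defined.

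\textbf{Main obstacle.} The only delicate part is the sign and transpose bookkeeping in the divergence term: which index the divergence contracts, and the transposes needed to make $\nabla_x\cdot(w^2\nabla_x T^\top)$ a $d_\Theta$-vector. The sign with which it enters $\mu_{n,m}$ depends on these conventions. I would fix the convention by checking the one-dimensional case $d_\X=d_\Theta=1$ explicitly. There $T'(x)$ is a scalar and the linear coefficient is $2\big(w^2T'b'+(w^2T')'\big)$, which I would match against the displayed formula. If it disagrees, I would reconcile it by identifying the divergence operator convention the displayed formula presupposes.
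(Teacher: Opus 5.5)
Your proposal is correct and follows the paper's proof essentially line for line. You substitute the affine score $\nabla_x T_{\hat\phi_m}(x)^\top\theta + \nabla_x b_{\hat\phi_m}(x)$ into $\mathcal{L}_{\text{NSM}}$, read off the quadratic and linear coefficients in $\theta$ (the paper's $M_1$, $\ell_1$, $M_2$), and complete the square against the Gaussian prior. Your positive-definiteness remark is a small addition the paper omits.

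On the sign you flagged, your own bookkeeping already gives
$v = \Sigma^{-1}\mu - 2\beta\sum_i\big(w(x_i^o)^2\nabla_x T_{\hat\phi_m}(x_i^o)\nabla_x b_{\hat\phi_m}(x_i^o) + \nabla_x\cdot(w(x_i^o)^2\nabla_x T_{\hat\phi_m}(x_i^o)^\top)^\top\big)$.
The one-dimensional check confirms that $(w^2T')'$ enters with the same sign as $w^2T'b'$. This is also exactly what the paper's appendix derives in \eqref{eq:closed-form-mu}. No divergence convention can reverse that sign, so the minus sign in the displayed statement is a typo. You should state and prove the result with the plus sign rather than look for a convention that reconciles the two.
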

Here we use the convention: $\nabla_x T_{\hat \phi_m}(x_i^o) \in \mathbb{R}^{d_\Theta \times d_\X}$.
This result is a direct application of Proposition 3.1 in \citet{Altamirano2023} with a diagonal weight function and surrogate likelihood, see \Cref{appendix:proof_conjugacy_prop} for the proof.
This result makes our approach fully amortised; once the model is trained in step (i), we can introduce new observed data points and/or change the (Gaussian-like) prior, and we directly obtain an expression for the NSM-Bayes posterior without the need for MCMC, which may have tuning parameters that are difficult to select. NSM-Bayes-conj also amortises over the choice of the weight function and the learning rate, allowing the method to be adapted for robustness according to the specific needs of the observed dataset. 
This computational advantage comes at the cost of restrictions on the choice of surrogate $q_\phi$ and the prior $\pi$, though we found that this did not have a large impact on performance in our numerical experiments.

\subsection{Hyperparameter Selection}
\label{sec:hyperparameter-selection}

We select the weight as $W(x) := w(x) I_{d_\X}$ so as to induce robustness, following the suggestions in  \citet{Barp2019,Altamirano2023,Altamirano2024,Duran-Martin2024,Liu2024}. In particular, we take 
$w:\mathcal{X} \rightarrow \mathbb{R}$ to be the inverse multi-quadratic (IMQ) function:
\begin{align} \label{eq:imq-weight}
 w_\zeta(x) := \left( 1+ \|x-\hat{\nu}_n\|_{\hat{\Xi}_n^{-1}}^2\right)^{-\frac{1}{\zeta}}
\end{align}
for some $\zeta > 0$, since this choice satisfies \Cref{assumption:weight}. Here, $\hat{\nu}_n, \hat{\Xi}_n$ are robust estimators of the mean and covariance \citep{rousseeuw1999fast} of $\P_0$ based on $x_{1:n}^o$, and robustness of these estimators is essential since we expect outliers. The function $w_\zeta$ down-weights the outlying data points that are far from $\hat{\nu}_n$ (since $w_\zeta(x) \rightarrow 0$ as $x \rightarrow \pm \infty)$, thereby reducing their impact on the generalised posterior. The strength of this down-weighting is controlled by $\zeta$, with most existing work fixing $\zeta=2$. However, the extra flexibility provided by $\zeta$ will be essential in our case, with $0 < \zeta \leq 1$ guaranteeing outlier-robustness when using masked autoregressive flows for $q_\phi$, as we will see theoretically in \Cref{sec:theory} and empirically in \Cref{sec:experiments}.

We select the temperature/learning rate $\beta>0$ with a focus on calibration.
This is important since \citet{Hermans2022} have highlighted issues with the calibration of SBI methods. To do so, we can rely on prior work by \citet{Lyddon2019,Syring2019,Wu2023,Matsubara2022}. Specifically, we use the method of \citet{Syring2019} and select $\beta$ such that the $100(1-\alpha)\%$ credible region of our generalised posterior has empirical frequentist coverage close to $1-\alpha$; see \Cref{alg:learning-rate-calibration-conjugate}. We made this choice because it is one of the most widespread in the GBI literature, but we note that our approach could also be used with some of the other calibration approaches. Following \citet{Syring2019}, we first solve $\hat{\theta}_n = \arg \min_\theta \mathcal{L}_{\text{NSM}}(\theta; x^o_{1:n}, \hat{\phi}_m)$ and then define a stochastic approximation procedure over $\beta$ using estimates of the credible region and empirical coverage from bootstrapped samples of $x^o_{1:n}$ and $\hat \theta_n$. At each iteration, we increase $\beta$ if the empirical coverage exceeds $1-\alpha$, and decrease $\beta$ otherwise.
Note that $\hat{\theta}_n$ can be obtained in closed-form for NSM-Bayes-conj \citep{laplante2025conjugate}, while for NSM-Bayes, we estimate $\hat{\theta}_n$ using the Adam optimiser \citep{kingma2014adam}. As repeated posterior inference for different bootstrapped data sample is computationally intensive for NSM-Bayes, we run MCMC once for an initial $\beta$ value and then adopt an importance sampling strategy to estimate the posterior at other $\beta$ values \citep{Syring2019}. As large updates to $\beta$ can make this procedure unstable, we rerun MCMC to refresh the samples at the current $\beta$ if the effective sample size falls below a preset threshold.
Once $\beta$ is selected, we sample again from $\pi_\text{NSM}(\theta|x^o_{1:n}, \hat{\phi}_m)$. Thus, NSM-Bayes requires (at least) two MCMC runs to calibrate $\beta$ and perform inference; see \Cref{app:setting-learning-rate} for further details. 

\begin{algorithm}[t]
\caption{Learning Rate Calibration}
\label{alg:learning-rate-calibration-conjugate}
\begin{algorithmic}[1] 
\Require Data $x_{1:n}^o$, GBI loss $\mathcal{L}$, initial learning rate $\beta_0$, iterations $T$, number of bootstrap samples $B$, step-size $\kappa_t$, target coverage $1-\alpha$
\State Compute $\hat \theta_n = \arg \min_{\theta 
    \in \Theta} \mathcal{L}(\theta;x_{1:n}^o )$ for any GBI loss $\mathcal{L}$. 
\For{$t = 1$ \textbf{to} $T$}
  \State Draw $B$ bootstrap datasets $x_{1:n}^{\star(b)}$ by resampling from $x_{1:n}^o$ with replacement.
  \For{$b = 1$ \textbf{to} $B$}
      \State Estimate the GBI posterior mean $\hat \mu_{\beta_t}^{(b)}$ and covariance $\hat \Sigma_{\beta_t}^{(b)}$. 
      \State Form the credible region using the $1-\alpha$ quantile $\chi^2_{1-\alpha,d_\Theta}$ of a chi-squared distribution:
            \begin{equation*}
                C_{\alpha, \beta_t}(x_{1:n}^{\star(b)}) = \left\{ \theta : \left(\theta - \hat \mu_{\beta_t}^{(b)}\right)^\top \left(\Sigma_{\beta_t}^{(b)}\right)^{-1} \left(\theta - \hat \mu_{\beta_t}^{(b)}\right) \leq \chi^2_{1-\alpha, d_\Theta}  \right\}.
            \end{equation*}
      \EndFor
    \State Estimate the coverage: $\hat c(\beta_t) = \frac{1}{B} \sum_{b=1}^B \mathbb{I}\left\{\hat{\theta}_{n} \in C_{\alpha, \beta_t}\left(x_{1:n}^{\star(b)}\right)\right\}$.
    \State Update $\beta$: $\beta_{t+1} = \beta_t + \kappa_t\big(\hat c(\beta_t) - (1-\alpha) \big)$.
\EndFor
\State \Return $\beta_T$
\end{algorithmic}
\end{algorithm}

\section{Robustness}
\label{sec:theory}

We are now ready to provide our formal results on robustness. We first prove that the NLE posterior is \textit{not} robust to outliers under standard assumptions (\Cref{theorem:non-robustness-NLE}) and then provide two robustness results for our method; a general result for NSM-Bayes (\Cref{theorem:robustness}), and a more refined result for the special case of NSM-Bayes-conj (\Cref{theorem:robustness-exp}).

\subsection{Lack of Robustness of Neural Likelihood Estimation}

To do so, we consider the classical framework of \cite{huber1981robust}, where given a data distribution $\mathbb{P}_0 \in \mathcal{P}(\mathcal{X})$, we consider its $\epsilon$-contaminated counterpart $\mathbb{P}_{\epsilon, x^c} = (1 - \epsilon) \mathbb{P}_0 + \epsilon \delta_{x^c}$, where $\delta_{x^c}$ denotes the Dirac measure at some point $x^c \in \mathbb{R}^{d_{\mathcal{X}}}$ and $\epsilon \in [0,1]$ denotes the contamination degree. Our data-generating process is therefore denoted $\mathbb{P}_{\epsilon,x^c}$, and the uncorrupted process $\mathbb{P}_0$ is now the distribution we are hoping to recover through inference. To this end, for any $\mathbb{P} \in \mathcal{P}(\mathcal{X})$ and loss $\mathcal{L}$, we define the GBI posterior given $\mathbb{P}$ as
\begin{align}
    \pi_\mathcal{L}(\theta|\mathbb{P}) \propto \exp\left(- \beta n \mathcal{L}(\theta;\mathbb{P})\right) \pi(\theta),
\end{align}
where $\mathcal{L}(\theta;\mathbb{P})$ is now an expected loss instead of an empirical loss.
Using this notation, the GBI posterior in 
\Cref{eq:genBayes} satisfies $\pi_{\mathcal{L}}(\theta | x^o_{1:n}) = \pi_{\mathcal{L}}(\theta | \mathbb{P}_n) $ where $\mathbb{P}_n := \frac{1}{n} \sum_{i=1}^n \delta_{x_i^o}$ is the empirical measure of the observations. 
Robustness is then defined through studying the behaviour of a \textit{posterior influence function (PIF)}, a quantity well-studied in GBI \citep{ghosh2016robust, Matsubara2021, Altamirano2023, Duran-Martin2024, Altamirano2024, Laplante2025,Rooijakkers2025} which measures the influence of a contamination on $\pi_\mathcal{L}$. Following \citet{ghosh2016robust, Altamirano2023, Matsubara2021}, we first define:
\begin{align} \label{eq:pif-gen}
    \pif_{\Delta}(x^c, \theta, \mathbb{P}_0, \mathcal{L}) := \frac{d}{d \epsilon} \pi_{\mathcal{L}}\left(\theta \Big| \mathbb{P}_{\epsilon, x^c} \right) \bigg|_{\epsilon=0}.
\end{align}
We say that $\pi_{\mathcal{L}}(\theta | \mathbb{P}_n)$ is \textit{globally-bias robust} if and only if $\sup_{\theta \in \Theta, x^c \in \mathcal{X}} \pif_{\Delta}(x^c, \theta, \mathbb{P}_n, \mathcal{L}) < \infty$, i.e., a GBI posterior is globally bias-robust if an infinitesimal contamination cannot impact it arbitrarily badly. We first show that the NLE posterior is \textit{not} globally-bias robust. 
\begin{theorem} \label{theorem:non-robustness-NLE}
    Assume $\max_{i}\mathbb{E}_{\theta \sim \pi_{\text{NLE}}(\cdot | x_{1:n}^o, \hat{\phi}_m)}[|\log q_{\hat{\phi}_m}(x^o_i | \theta)|] < \infty$. 
    Then, the NLE posterior is \textbf{not} globally-bias robust; i.e. 
    $\sup_{\theta \in \Theta, x^c \in \mathcal{X}} \pif_{\Delta}(x^c, \theta, \mathbb{P}_n, \mathcal{L}_{\text{NLE}}) = \infty$.
\end{theorem}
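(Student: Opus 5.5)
The approach is to compute the PIF in closed form and show that it blows up along a suitable choice of $(\theta, x^c)$. Write $\ell(\theta;x) := -\log q_{\hat\phi_m}(x|\theta)$, so that $\mathcal{L}_{\text{NLE}}(\theta;\mathbb{P}) = \mathbb{E}_{X\sim\mathbb{P}}[\ell(\theta;X)]$.

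\textbf{Step 1: the contaminated loss.} For $\mathbb{P}_{\epsilon,x^c} = (1-\epsilon)\mathbb{P}_n + \epsilon\delta_{x^c}$ the loss is linear in $\epsilon$:
\begin{align*}
\mathcal{L}_{\text{NLE}}(\theta;\mathbb{P}_{\epsilon,x^c}) = (1-\epsilon)\mathcal{L}_{\text{NLE}}(\theta;\mathbb{P}_n) + \epsilon\,\ell(\theta;x^c).
\end{align*}

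\textbf{Step 2: differentiate the posterior.} The posterior is $\pi(\theta|\mathbb{P}_{\epsilon,x^c}) = \exp(-\beta n \mathcal{L}_\epsilon(\theta))\pi(\theta)/Z_\epsilon$. Differentiating at $\epsilon=0$, and exchanging derivative and integral in $Z_\epsilon$, gives the standard formula
\begin{align*}
\pif_{\Delta}(x^c,\theta,\mathbb{P}_n,\mathcal{L}_{\text{NLE}}) = \beta n\,\pi_{\text{NLE}}(\theta|x^o_{1:n},\hat\phi_m)\Big( \mathcal{L}_{\text{NLE}}(\theta;\mathbb{P}_n) - \ell(\theta;x^c) - \mathbb{E}_{\vartheta\sim\pi_{\text{NLE}}}\big[\mathcal{L}_{\text{NLE}}(\vartheta;\mathbb{P}_n) - \ell(\vartheta;x^c)\big]\Big).
\end{align*}
The moment assumption $\max_i \mathbb{E}_{\pi_{\text{NLE}}}[|\log q_{\hat\phi_m}(x_i^o|\theta)|]<\infty$ is what makes $\mathbb{E}_{\pi_{\text{NLE}}}[\mathcal{L}_{\text{NLE}}(\vartheta;\mathbb{P}_n)]$ finite. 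Together with a dominated-convergence argument for the terms in $\epsilon$ (or direct differentiation of $Z_\epsilon$), this justifies the interchange. The $x^c$-term needs integrability only for the particular $x^c$ we use; if it is infinite, the PIF is already infinite, which is the claim.

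\textbf{Step 3: show the supremum is infinite.} Fix a $\theta_0$ with $\pi_{\text{NLE}}(\theta_0|\cdot)>0$. Because $q_{\hat\phi_m}(\cdot|\theta_0)$ is a density on $\mathbb{R}^{d_\X}$, it cannot be bounded below by a positive constant. Hence there is a sequence $x^c_k$ with $q_{\hat\phi_m}(x^c_k|\theta_0)\to 0$, i.e.\ $\ell(\theta_0;x^c_k)\to+\infty$. This is the term that drives the PIF, and its sign depends on the $\vartheta$-average $\mathbb{E}_{\pi_{\text{NLE}}}[\ell(\vartheta;x^c_k)]$.

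This is the main obstacle: we must prevent the centring term from cancelling the divergence. I would handle it by noting that $\sup_\theta$ allows us to choose $\theta$ per $x^c$. For fixed $x^c$, the quantity $g(\theta) := \ell(\theta;x^c) - \mathbb{E}_{\pi_{\text{NLE}}}[\ell(\vartheta;x^c)]$ has posterior mean zero. Therefore there exist $\theta$ with $g(\theta)\le 0$ and others with $g(\theta)\ge 0$, and the magnitude $\int |g|\,d\pi_{\text{NLE}}$ must grow as $\ell(\cdot;x^c_k)$ blows up non-uniformly in $\theta$.

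Concretely, I would pick $x^c_k$ so that $\ell(\theta;x^c_k)\to\infty$ on a set of positive posterior mass while staying bounded near some $\theta_1$. Then $-g(\theta_1)\,\pi_{\text{NLE}}(\theta_1)\to\infty$. If $\ell(\cdot;x^c_k)$ were instead to diverge uniformly in $\theta$ so that it cancels, I would fall back on interpreting the supremum in absolute value and use $\theta$ in the region where $g\to+\infty$. The paper's convention of a signed supremum means I will aim for the $\theta_1$ construction. In the write-up I expect to state explicitly the mild non-degeneracy needed, namely that the surrogate density is not identical across $\theta$ in its tails.
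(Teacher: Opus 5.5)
\textbf{Steps 1--2 are fine, but Step 3 has a genuine gap.} Steps 1--2 are correct and give the same PIF formula the paper uses. You have also identified the real crux: the centring term contains the same $x^c$ and can cancel the blow-up of $\ell(\theta;x^c)$. Step 3, however, does not resolve it.

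Your $\theta_1$ construction needs a sequence $x^c_k$ along which $\ell(\cdot;x^c_k)$ diverges on a set of positive posterior mass yet stays bounded at some $\theta_1$. This already fails for $q(x|\theta)=\mathcal{N}(x;\theta,1)$, where $\ell(\theta;x)\to\infty$ for every $\theta$ at once. There the PIF diverges only through a cross term: with $g$ as in your Step 3 and $\bar\vartheta$ the posterior mean, $-g(\theta)=x^c(\theta-\bar\vartheta)-\tfrac12\theta^2+\tfrac12\mathbb{E}[\vartheta^2]$. Your fallback to absolute values changes the statement and still needs the same missing ingredient.

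More importantly, the non-degeneracy you plan to assume (``not identical across $\theta$ in its tails'') is too weak. Take $q(x|\theta)\propto\exp(\theta\tanh x-x^2/2)$ with a Gaussian prior. The tails differ across $\theta$, but $|\log q(x|\theta)-\log q(x|\vartheta)|\le 2|\theta-\vartheta|$ for all $x$. Plugging this into your Step-2 formula (the posterior has Gaussian tails, and everything else grows at most linearly in $\theta$) shows $\pif_\Delta$ is bounded over $(\theta,x^c)$. So NLE is globally-bias robust in that example.

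\textbf{The missing idea and the correct hypothesis.} Read the right hypothesis off your own formula:
$-g(\theta)=\mathbb{E}_{\vartheta\sim\pi_{\text{NLE}}}[\log q_{\hat\phi_m}(x^c|\theta)-\log q_{\hat\phi_m}(x^c|\vartheta)]$.
Suppose that for some $\theta$ with $\pi_{\text{NLE}}(\theta)>0$ this posterior-averaged log-likelihood ratio is unbounded above in $x^c$. Since $\mathcal{L}_{\text{NLE}}(\theta;\mathbb{P}_n)-\mathbb{E}[\mathcal{L}_{\text{NLE}}(\vartheta;\mathbb{P}_n)]$ is finite by the moment assumption, $\sup_{x^c}\pif_{\Delta}(x^c,\theta,\mathbb{P}_n,\mathcal{L}_{\text{NLE}})=+\infty$ at that fixed $\theta$. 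That is a one-line conclusion, and it matches the signed supremum in the statement.

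Some such hypothesis is unavoidable. If $q_{\hat\phi_m}(\cdot|\theta)$ does not depend on $\theta$, the moment condition holds but $\pif_\Delta\equiv 0$.

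\textbf{How this relates to the paper's proof.} The paper fixes $\theta=\theta_{\text{MAP}}$ and separates the two terms using ``$\sup_x|f(x)-g(x)|\ge\sup_x f(x)-\inf_x g(x)$''. It then uses $\inf_x q_{\hat\phi_m}(x|\theta_{\text{MAP}})=0$. That inequality is false when both terms share the same $x$ (take $f=g$), which is exactly the cancellation you flagged. So your caution is justified. Still, as it stands your proposal neither completes the argument nor states the assumption that would make the result true.
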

The proof is provided in Appendix \ref{appendix:proof_non_robust}. 
The assumption of Theorem \ref{theorem:non-robustness-NLE} is very mild and simply guarantees that $\text{PIF}_{\Delta}$ is well-defined and that the NLE posterior does not concentrate on parameter values that assign vanishing likelihood to the observed data. Despite ample experimental results demonstrating that NLE is not robust, this is the first theoretical justification of this claim.

\subsection{Robustness of Neural Score-matching Bayes}

In contrast with the previous result, the following theorem shows that NSM-Bayes is globally-bias robust under mild conditions. 
We denote by $L^q(\pi)$ the Lebesgue space of functions $f: \Theta \rightarrow \mathbb{R}$ such that $\int_{\Theta} | f(\theta) |^q \pi(\theta) d \theta < \infty$.
\begin{assumption} \label{assumption:general-pif}
    Suppose that $\sup_{\theta \in \Theta} \pi(\theta) < \infty$ and $\exists f: \Theta \rightarrow \mathbb{R}_{+}$, $g: \Theta \rightarrow \mathbb{R}_{+}$ 
     such that $f \in L^2(\pi), g \in L^1(\pi)$, $\sup_{\theta \in \Theta} \left(f(\theta)^2 + f(\theta) +  g(\theta) \right) \pi(\theta) < \infty$  and $ \forall \, x \in \mathcal{X}$:
    \begin{align*}
       & \left\| \nabla_x \log q_{\hat{\phi}_m}(x | \theta) \right\|_2 \leq f(\theta) \min\{w(x)^{-1},\|\nabla_xw(x)^2\|^{-1}_2\}, \\
        & \left| \text{Tr} \nabla^2_x \log q_{\hat{\phi}_m}(x | \theta)\right| \leq g(\theta) w(x)^{-2}.
    \end{align*}
\end{assumption}
\begin{theorem} \label{theorem:robustness}
    Suppose that \ref{assumption:weight}, \ref{assumption:regularity-q}  and \ref{assumption:general-pif} hold. Then, $\pi_{\text{NSM}}(\theta | x_{1:n}^o,\hat{\phi}_m)$ is globally-bias robust, i.e. $\sup_{\theta \in \Theta, x^c \in \mathcal{X}} \pif_{\Delta}(x^c, \theta, \mathbb{P}_n, \mathcal{L}_{\text{NSM}}) < \infty$.
\end{theorem}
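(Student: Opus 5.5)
We follow the standard route of \citet{ghosh2016robust} and \citet{Altamirano2023}. Write $\ell(\theta,x) := w(x)^2\|\nabla_x \log q_{\hat\phi_m}(x|\theta)\|_2^2 + 2(\nabla_x w(x)^2)^\top \nabla_x \log q_{\hat\phi_m}(x|\theta) + 2 w(x)^2\,\text{Tr}\,\nabla_x^2 \log q_{\hat\phi_m}(x|\theta)$. Then $\mathcal{L}_{\text{NSM}}(\theta;\mathbb{P}) = \mathbb{E}_{X\sim\mathbb{P}}[\ell(\theta,X)]$, which is linear in $\mathbb{P}$, so that $\mathcal{L}_{\text{NSM}}(\theta;\mathbb{P}_{\epsilon,x^c}) = \mathcal{L}_{\text{NSM}}(\theta;\mathbb{P}_n) + \epsilon\big(\ell(\theta,x^c) - \mathcal{L}_{\text{NSM}}(\theta;\mathbb{P}_n)\big)$. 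Differentiating the normalised posterior $\pi_\epsilon(\theta) = e^{-\beta n \mathcal{L}_\epsilon(\theta)}\pi(\theta)/Z_\epsilon$ at $\epsilon=0$ (with dominated convergence justifying the exchange of derivative and integral in $Z_\epsilon$) gives the closed form
\begin{align*}
\pif_\Delta(x^c,\theta,\mathbb{P}_n,\mathcal{L}_{\text{NSM}}) = \beta n\, \pi_{\text{NSM}}(\theta|x^o_{1:n},\hat\phi_m)\Big( \mathbb{E}_{\vartheta\sim\pi_{\text{NSM}}}[\ell(\vartheta,x^c)] - \ell(\theta,x^c)\Big).
\end{align*}
The $\mathcal{L}_{\text{NSM}}(\theta;\mathbb{P}_n)$ terms cancel between numerator and normaliser.

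\textbf{Uniform bound on $\ell$.} The key step is to bound $|\ell(\theta,x)|$ uniformly in $x$ using \Cref{assumption:general-pif}. The first-order bound with $w(x)^{-1}$ gives $w^2\|\nabla\log q\|^2 \le f(\theta)^2$. The first-order bound with $\|\nabla w^2\|^{-1}$ and Cauchy--Schwarz gives $|(\nabla w^2)^\top \nabla \log q| \le f(\theta)$. The trace bound gives $w^2|\text{Tr}\nabla^2\log q| \le g(\theta)$. Hence $\sup_x|\ell(\theta,x)| \le h(\theta) := f(\theta)^2 + 2f(\theta) + 2g(\theta)$, with no dependence on $x^c$. \Cref{assumption:weight} ensures the quantities involved are well defined.

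\textbf{Bounding the two terms.} It remains to bound $\pi_{\text{NSM}}(\theta)\,h(\theta)$ and $\pi_{\text{NSM}}(\theta)\,\mathbb{E}_{\pi_{\text{NSM}}}[h]$ uniformly in $\theta$. First, $\mathcal{L}_{\text{NSM}}(\theta;\mathbb{P}_n) \ge -h(\theta)$ and $h \ge 0$ on the contribution from the squared term. Since $\ell(\theta,x) \ge -2f(\theta) - 2g(\theta)$, we have $e^{-\beta n\mathcal{L}} \le e^{2\beta n (f+g)}$. This is where care is needed: an exponential of $f,g$ is not controlled by the integrability assumptions. Instead, I would use $\ell \ge w^2\|\nabla\log q\|^2 - 2f - 2g$ only as needed and try to avoid the exponential altogether. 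Bound $\pi_{\text{NSM}}(\theta) \le \pi(\theta)\, e^{-\beta n \mathcal{L}(\theta)}/Z_0$, where $Z_0>0$ is a fixed constant. For the exponential factor, treat the posterior density against the prior directly: the terms $\pi(\theta)(f^2+f+g)$ are bounded by assumption, and $\mathbb{E}_{\pi_{\text{NSM}}}[h]$ is finite provided $h \in L^1(\pi_{\text{NSM}})$. The latter follows from $f\in L^2(\pi)$ and $g\in L^1(\pi)$ if $e^{-\beta n\mathcal{L}}$ is bounded.

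\textbf{Main obstacle.} Showing that $e^{-\beta n \mathcal{L}_{\text{NSM}}(\theta;\mathbb{P}_n)}$ is bounded in $\theta$ is the step I expect to be hardest, because the NSM loss can be negative. My first attempt is to use the integration-by-parts identity in \Cref{eq:score-matching-divergence} under \Cref{assumption:regularity-q}: it expresses the population loss as $\text{SM}_W - C_p \ge -C_p$, which is bounded below. For the empirical measure $\mathbb{P}_n$ this does not apply directly, so the fallback is to use the finite sum: $\mathcal{L}(\theta;\mathbb{P}_n)$ is a finite average of $\ell(\theta,x_i^o)$. I would then either assume it is bounded below (as implicitly required for the posterior to be defined), or show that the quadratic term dominates the linear terms, so that $\ell \ge -C(x_i^o)$ after completing the square in $\nabla\log q$, leaving only the trace term to control. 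Given this lower bound $\mathcal{L} \ge -K$, combining it with the $\sup_\theta (f^2+f+g)\pi < \infty$ condition and $\sup\pi<\infty$ yields $\sup_{\theta,x^c}|\pif_\Delta| \le \beta n e^{\beta n K} Z_0^{-1}\big(\sup_\theta \pi h + \sup_\theta\pi\cdot \mathbb{E}_{\pi_{\text{NSM}}}[h]\big) < \infty$.
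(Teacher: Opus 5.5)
Your uniform bound $\sup_x|\ell(\theta,x)|\le h(\theta)=f(\theta)^2+2f(\theta)+2g(\theta)$ is exactly the paper's $\gamma(\theta)$, and you obtain it the same way. At that point the paper simply invokes Proposition B.1 of \citet{Altamirano2023}, checking $\sup_\theta\gamma\pi<\infty$ and $\int\gamma\pi<\infty$ only against the \emph{prior}. You instead try to pass from $\pi$ to $\pi_{\text{NSM}}$ yourself, and the argument stops where you say it does. You need $\inf_\theta \mathcal{L}_{\text{NSM}}(\theta;\mathbb{P}_n)>-\infty$, or some other control of $\pi_{\text{NSM}}/\pi$, and you only assume it. So as written the proof has a gap, and neither of your proposed fixes closes it.

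First, lower-boundedness is not implicitly required for the posterior to exist; only integrability of $e^{-\beta n\mathcal{L}_{\text{NSM}}}\pi$ is. Second, completing the square, with $s=\nabla_x\log q_{\hat{\phi}_m}(x|\theta)$, does control the first two terms uniformly:
\[
w^2\|s\|_2^2+2(\nabla_x w^2)^\top s=\|w s+2\nabla_x w\|_2^2-4\|\nabla_x w\|_2^2\ \ge\ -4\sup_{x}\|\nabla_x w(x)\|_2^2,
\]
which is finite by \ref{assumption:weight}. However, the trace term is only bounded below by $-2g(\theta)$, which is unbounded in $\theta$ in general.

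There is also a smaller slip: the $\mathcal{L}_{\text{NSM}}(\theta;\mathbb{P}_n)$ terms do not cancel, because $\mathbb{P}_{\epsilon,x^c}$ removes mass from $\mathbb{P}_n$. The PIF carries an extra $x^c$-free term $\beta n\,\pi_{\text{NSM}}(\theta)\bigl(\mathcal{L}_{\text{NSM}}(\theta;\mathbb{P}_n)-\mathbb{E}_{\pi_{\text{NSM}}}[\mathcal{L}_{\text{NSM}}(\cdot;\mathbb{P}_n)]\bigr)$. This is harmless, since it is controlled by the same $h$.

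Your suspicion that an extra ingredient is needed is nevertheless correct. This step does not follow from \ref{assumption:weight}, \ref{assumption:regularity-q} and \ref{assumption:general-pif} alone, and the paper's proof does not address it beyond the citation. Here is a counterexample:
\begin{itemize}
\item $d_{\mathcal{X}}=d_{\Theta}=1$ and $w(x)=(1+x^2)^{-1/2}$, which is the IMQ weight with $\zeta=2$ centred at $0$.
\item $\pi=\mathcal{N}(0,1)$ and $q(x|\theta)=\mathcal{N}(x;0,\tau(\theta)^{-1})$ with $2\beta\tau(\theta)=1+\theta^2/2-\log(1+\theta^2)>0$.
\item A single observation $x_1^o=0$.
\end{itemize}
\ref{assumption:weight} holds, and \ref{assumption:regularity-q} holds e.g.\ with $p_0=\mathcal{N}(0,1)$. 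Since $\min\{w(x)^{-1},\|\nabla_x w(x)^2\|_2^{-1}\}\ge|x|$ and $\tau$ grows quadratically, \ref{assumption:general-pif} holds with $f=g=\tau$.

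But $\nabla_x\log q(0|\theta)=0$ gives $\mathcal{L}_{\text{NSM}}(\theta;x_1^o)=-2\tau(\theta)$. Hence $\pi_{\text{NSM}}\propto e^{2\beta\tau}\pi\propto(1+\theta^2)^{-1}$ is the standard Cauchy density. Meanwhile $\ell(\theta,x^c)+2\tau(\theta)\ge\tau(\theta)^2(x^c)^2/(1+(x^c)^2)$ grows like $\theta^4$. By monotone convergence, the right-derivative of the normaliser at $\epsilon=0$ is $-\infty$. Therefore $\pif_{\Delta}(x^c,\theta,\mathbb{P}_n,\mathcal{L}_{\text{NSM}})=+\infty$ for every $\theta$ and every $x^c\neq0$.

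So to close your argument you must add a hypothesis. One option is $\inf_\theta\mathcal{L}_{\text{NSM}}(\theta;x^o_{1:n})\ge-K$; then $\pi_{\text{NSM}}\le e^{\beta nK}\pi/Z_0$ and your final bound goes through, extra term included. Another is to assume $\sup_\theta h\,\pi_{\text{NSM}}<\infty$ and $h\in L^1(\pi_{\text{NSM}})$ directly.
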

The proof is in \Cref{appendix:proof_robust}. The intuition behind these assumptions is that the weight function $w$ and the prior $\pi$ must counter-balance any growth of $\| \nabla_x \log q_{\hat{\phi}_m}(x | \theta) \|_2$ and $| \text{Tr} \nabla^2_x \log q_{\hat{\phi}_m}(x | \theta)|$ towards infinity in $x$ and $\theta$, respectively. 
This is achieved by ensuring that the first two derivatives of $\log q_{\hat{\phi}_m}(x|\theta)$ are bounded pointwise by functions $f(\theta)$ and $g(\theta)$, whose growth is integrable under the prior, and offset by $w(x)$ in the data space. Hence,  any increase in likelihood sensitivity with respect to $x$ or $\theta$ is sufficiently controlled to keep the posterior influence function uniformly bounded.
These are very mild assumptions as $f,g$ can usually be found by inspecting the specific form of $q_\phi$, as will be shown below. 
A strength of this theorem is that it is applicable for a broad class of weights, including the IMQ weights used in this paper and a large part of the literature \citep{Altamirano2023,Altamirano2024,Reimann2024,Laplante2025,Rooijakkers2025}, but also for the squared-exponential weights in \citet{Altamirano2024} and the plateau-IMQ weights of \citet{Ezzerg2025}. However, when restricting ourselves to IMQ weights and sub-exponential priors with light tails, the assumptions can be significantly simplified. 
\renewcommand{\theassumption}{\ref{assumption:general-pif}$\mathbf{'}$}
\begin{assumption} \label{assumption:imq-pif}
Suppose that we take the IMQ weight $w_{\zeta}$ defined as in \Cref{eq:imq-weight} and a sub-exponential prior \citep[][Definition 2.7]{wainwright2019high} such that $\pi(\theta) \leq C \exp(- c \|\theta\|)$ for all $\| \theta \| \geq R$, and for some constants, $C,c,R >0$. 
Assume further that there exist constants $0 < K_1, K_2 < \infty$ and $k_1, k_2 \geq 0$
such that for all $x \in \mathcal{X}$:
\begin{align*}
    \left\| \nabla_x \log q_{\hat{\phi}_m}(x | \theta) \right\|_2 &\leq K_1 \left(1 + \| \theta \|^{k_1}\right) \left(1 + \| x \|_2^{\frac{2}{\zeta}}\right)
    \\
   \left| \text{Tr} \nabla^2_x \log q_{\hat{\phi}_m}(x | \theta)\right| &\leq 
   K_2 \left(1 + \| \theta \|^{k_2} \right)
   \left( 1 + \|x\|^{\frac{4}{\zeta}}_2 \right).
\end{align*}
\end{assumption}
\renewcommand{\theassumption}{A\arabic{assumption}}
\addtocounter{assumption}{-1}
\begin{corollary} \label{cor:imq}
    Suppose Assumptions \ref{assumption:regularity-q} and \ref{assumption:imq-pif} hold, then the statement of Theorem \ref{theorem:robustness} holds, i.e. $\sup_{\theta \in \Theta, x \in \mathcal{X}} \pif_{\Delta}(x^c, \theta, \mathbb{P}_n, \mathcal{L}_{\text{NSM}}) < \infty$.
\end{corollary}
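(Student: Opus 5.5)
The plan is to reduce Corollary~\ref{cor:imq} to Theorem~\ref{theorem:robustness}. Concretely, we check that Assumption~\ref{assumption:imq-pif} implies Assumptions~\ref{assumption:weight} and~\ref{assumption:general-pif}, with explicit choices of $f$ and $g$.

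\textbf{Step 1: the weight satisfies Assumption~\ref{assumption:weight}.} Write $u(x) := 1+\|x-\hat\nu_n\|^2_{\hat\Xi_n^{-1}}$, so $w_\zeta = u^{-1/\zeta}$. Clearly $0<w_\zeta\le 1$. Its gradient is
\begin{align*}
\nabla_x w_\zeta(x) = -\tfrac{2}{\zeta}\, u(x)^{-1/\zeta-1}\,\hat\Xi_n^{-1}(x-\hat\nu_n).
\end{align*}
Since $\|x-\hat\nu_n\|\lesssim u^{1/2}$, we get $\|\nabla_x w_\zeta\|_2 \lesssim u^{-1/\zeta-1/2}$, which is bounded. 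The same computation gives
\begin{align*}
\|\nabla_x w_\zeta^2\|_2 \lesssim u^{-2/\zeta-1/2}.
\end{align*}

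\textbf{Step 2: compare the weights to the growth in $x$.} Because $\hat\Xi_n$ is positive definite, $u(x)$ is comparable to $1+\|x\|_2^2$: there are constants $a,A>0$ (depending on $\hat\nu_n,\hat\Xi_n$) with $a(1+\|x\|^2)\le u(x)\le A(1+\|x\|^2)$. Hence
\begin{align*}
w_\zeta(x)^{-1} \asymp (1+\|x\|^2)^{1/\zeta} \asymp 1+\|x\|^{2/\zeta}, \qquad w_\zeta^{-2}\asymp 1+\|x\|^{4/\zeta}.
\end{align*}
The second bound of Assumption~\ref{assumption:imq-pif} therefore gives $|\mathrm{Tr}\nabla_x^2\log q| \le g(\theta)w_\zeta^{-2}$ with $g(\theta)=C_2(1+\|\theta\|^{k_2})$.

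For the gradient bound we also need control by $\|\nabla_x w_\zeta^2\|^{-1}_2$. From Step 1, $\|\nabla_x w_\zeta^2\|_2^{-1}\gtrsim u^{2/\zeta+1/2}\ge u^{1/\zeta}\gtrsim w_\zeta^{-1}$. So the minimum in Assumption~\ref{assumption:general-pif} is at least a constant times $w_\zeta^{-1}$, which is $\gtrsim 1+\|x\|^{2/\zeta}$. This yields the first bound with $f(\theta)=C_1(1+\|\theta\|^{k_1})$. The point to watch is that near $x=\hat\nu_n$ the gradient vanishes, so the inverse is infinite there; this only helps, since the minimum is then $w^{-1}$.

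\textbf{Step 3: the prior conditions.} We must show that $\sup\pi<\infty$, that $f\in L^2(\pi)$ and $g\in L^1(\pi)$, and that $(f^2+f+g)\pi$ is bounded. All three follow because polynomials are dominated by $\exp(-c\|\theta\|)$ outside the ball of radius $R$. Inside that ball, $f$ and $g$ are bounded, and we use boundedness and integrability of $\pi$ there. Boundedness of $\pi$ on the ball is not implied by the tail condition alone, so the plan is to read the sub-exponential prior as a bounded density (as in the intended Gaussian-type priors), or to note this as an implicit requirement.

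\textbf{Step 4: conclude.} Assumption~\ref{assumption:regularity-q} is assumed, and Steps 1--3 verify Assumptions~\ref{assumption:weight} and~\ref{assumption:general-pif}. Theorem~\ref{theorem:robustness} then gives the result. The main obstacle is the careful two-sided comparison constants in Step 2, in particular the handling of $\|\nabla w^2\|^{-1}$.
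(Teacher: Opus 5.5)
Your proposal is correct, but it routes the argument differently from the paper. The paper does not invoke Theorem~\ref{theorem:robustness}. Instead it re-runs that theorem's proof: it applies Proposition B.1 of \citet{Altamirano2023} directly, with an explicit polynomial $\gamma(\theta)$ obtained by bounding each term of $\mathcal{L}_{\text{NSM}}(\theta;x,\hat\phi_m)$ using $\|\nabla_x w_\zeta(x)^2\|_2 \lesssim w_\zeta(x)^{2+\zeta/2}$ and $1+\|x\|_2^{2/\zeta}\lesssim w_\zeta(x)^{-1}$. Your Step 1 bound $u^{-2/\zeta-1/2}$ is exactly this $w_\zeta^{2+\zeta/2}$. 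You then turn it into $\|\nabla_x w_\zeta^2\|_2^{-1}\gtrsim w_\zeta^{-1}$ and verify Assumptions~\ref{assumption:weight} and~\ref{assumption:general-pif} with polynomial $f,g$, which makes the corollary a genuine special case of the theorem.

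What each route buys: the paper gets an explicit $\gamma$. Yours gets modularity, and it also gets the comparison step right. The paper derives $w_\zeta(x)\le C_1^{-1/\zeta}\|x\|_2^{-2/\zeta}$ from the \emph{upper} bound $r(x)^2\le C_1\|x\|_2^2$, which actually yields the reverse inequality. What is needed is a lower bound on $u(x)$ via $\lambda_{\min}(\hat\Xi_n^{-1})$, which is precisely your constant $a$ in $a(1+\|x\|_2^2)\le u(x)$.

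You are also right that boundedness of $\pi$ on $\{\|\theta\|\le R\}$ does not follow from the tail condition in Assumption~\ref{assumption:imq-pif}. The paper's proof needs it too, for $\sup_\theta \gamma(\theta)\pi(\theta)<\infty$, and justifies it only by saying $\pi$ is a density, which does not imply boundedness. Stating it as an implicit extra hypothesis is the correct fix.
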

A proof is provided in Appendix \ref{appendix:proof:cor}.
We note that these simplified assumptions are much easier to check; it suffices to check the growth of the score and trace terms in terms of powers of $\theta$ and $x$. The assumption on $\pi$ is very mild and includes most widely used  priors with sufficiently light tails such as Gaussian, Laplace, logistic and log-concave exponential-family priors.

\subsection{Robustness with Mixture Density Networks and Normalising Flows}
\label{sec:maf_description}

Next, we show that the assumptions on $q_\phi$ hold for MDNs \citep{bishop1994mixture} and MAFs \citep{Papamakarios2017_maf}. These results therefore guarantee robustness for NSM-Bayes and NSM-Bayes across all our experiments in \cref{sec:experiments}. 
Importantly, the assumptions concern only the fitted neural network architecture obtained in step (i): once $\hat{\phi}_m$ is specified, they can be verified directly before running NSM-Bayes in step (ii). 

\paragraph{Mixture density network.}
An MDN \citep{bishop1994mixture} represents a conditional density using a finite mixture of simple parametric distributions, typically Gaussians, with weights $\{\omega_k(\cdot; \phi)\}_{k=1}^K$ that sum to 1, mean vectors $\{\vartheta_k(\cdot; \phi)\}_{k=1}^K$, and covariance matrices $\{\Omega_k(\cdot;\phi)\}_{k=1}^K$, all parameterised by neural networks. Here $\phi$ is the vector of parameters of these networks. Thus, an MDN made of Gaussian distributions takes the form:
\begin{align} \label{eq:MDN}
    q_\phi(x | \theta ) = \sum_{k = 1}^{K} \omega_k \left(\theta; \phi\right) \mathcal{N} \left(x; \vartheta_k \left(\theta; \phi \right), \Omega_k \left(\theta; \phi \right)\right).
\end{align}
where $K$ is the number of mixture components. 
In the following result, we show that for certain values of the IMQ weight hyperparameter $\zeta$, MDNs satisfy Assumption \ref{assumption:imq-pif}, and hence the statement of Corollary \ref{cor:imq} holds in these cases. 
\begin{proposition}
\label{proposition:mixture_density_network}
    Suppose $q_{\hat{\phi}_m}$ is a Gaussian mixture-density network as in (\ref{eq:MDN}) such that:
\begin{itemize}
    \item The neural networks are continuous in $\theta$; i.e. $\omega_k (\cdot; \hat{\phi}_m), \vartheta_k(\cdot;\hat{\phi}_m), \Omega_k (\cdot;\hat{\phi}_m) \in C(\Theta)$.
    \item $\vartheta_k(\cdot;\hat{\phi}_m)$ has at most polynomial growth, i.e. $\exists r >0$, $C \geq 0$ such that $\max_k \| \vartheta_k(\theta,\hat{\phi}_m) \| \leq C( 1 + \| \theta\|^r)$.
    \item The covariance functions $\Omega_k(\theta, \hat{\phi}_m)$ satisfy uniform eigenvalue bounds, i.e. $0 < \lambda_{\min} \leq \lambda_{\text{min}}(\Omega_k(\theta; \hat{\phi}_m)) \leq \lambda_{\text{max}}(\Omega_k(\theta, \hat{\phi}_m)) \leq \lambda_{\text{max}} < \infty$ for all $k \in \{1,\ldots,K\}, \theta \in \Theta$.
\end{itemize} 
Then, Assumption \ref{assumption:imq-pif} is satisfied for any $0 < \zeta \leq 2$. 
\end{proposition}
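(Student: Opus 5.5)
The plan is to compute the score and the Laplacian of $\log q_{\hat\phi_m}$ for the Gaussian mixture explicitly, and to bound them by posterior-weighted averages of the per-component quantities. Write $\mathcal{N}_k := \mathcal{N}(x;\vartheta_k,\Omega_k)$ and introduce the responsibilities $\gamma_k(x,\theta) := \omega_k \mathcal{N}_k / \sum_j \omega_j \mathcal{N}_j \in [0,1]$, which satisfy $\sum_k \gamma_k = 1$. Then
\begin{align*}
\nabla_x \log q_{\hat\phi_m}(x|\theta) = -\sum_{k=1}^K \gamma_k \Omega_k^{-1}(x-\vartheta_k).
\end{align*}
Differentiating once more (using $\nabla_x \gamma_k = \gamma_k(s_k - \bar s)$, where $s_k := -\Omega_k^{-1}(x-\vartheta_k)$ and $\bar s := \sum_k \gamma_k s_k$), we get
\begin{align*}
\nabla_x^2 \log q_{\hat\phi_m}(x|\theta) = -\sum_k \gamma_k \Omega_k^{-1} + \sum_k \gamma_k s_k s_k^\top - \bar s \bar s^\top .
\end{align*}
This is the standard identity: the Hessian of a log-mixture is the mixture of component Hessians plus the covariance of the component scores under $\gamma$.

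For the score, convexity of the weights together with the eigenvalue bound gives
\begin{align*}
\|\nabla_x \log q_{\hat\phi_m}(x|\theta)\|_2 \le \lambda_{\min}^{-1}\max_k\|x-\vartheta_k\| \le \lambda_{\min}^{-1}\bigl(\|x\| + C(1+\|\theta\|^r)\bigr).
\end{align*}
The right-hand side is at most $K_1(1+\|\theta\|^r)(1+\|x\|)$. Since $\|x\| \le 1+\|x\|^{2/\zeta}$ whenever $2/\zeta \ge 1$, i.e.\ $\zeta \le 2$, this yields the first inequality of Assumption \ref{assumption:imq-pif} with $k_1 = r$.

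For the trace, the first term contributes at most $d_\X \lambda_{\min}^{-1}$ in absolute value. The second and third terms give a positive semidefinite covariance matrix, whose trace is bounded by $\sum_k \gamma_k \|s_k\|^2 \le \lambda_{\min}^{-2}\max_k\|x-\vartheta_k\|^2$. Using $(a+b)^2 \le 2a^2+2b^2$, this is at most $2\lambda_{\min}^{-2}\bigl(\|x\|^2 + C^2(1+\|\theta\|^r)^2\bigr)$, which in turn is at most $K_2(1+\|\theta\|^{2r})(1+\|x\|^2)$. The bound $\|x\|^2 \le 1+\|x\|^{4/\zeta}$ again holds exactly when $\zeta \le 2$, giving the second inequality with $k_2 = 2r$. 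The paper says these are the only Assumption \ref{assumption:imq-pif} conditions to check on $q_\phi$: the IMQ weight and the sub-exponential prior are hypotheses of Corollary \ref{cor:imq}, not properties of the MDN. Continuity of the networks is used only to make $q_{\hat\phi_m}$ a well-defined, smooth density in $x$ for each $\theta$ (and measurable in $\theta$).

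The main obstacle I anticipate is the derivative of the responsibilities. Bounding the Hessian naively would produce terms like $\nabla_x\omega_k\mathcal{N}_k/q$ that are not obviously controlled when $q$ is tiny, far from all components. The covariance representation avoids this because $\gamma_k \in [0,1]$ regardless of how small $q$ is, so no lower bound on $q$ or on the $\omega_k$ is needed. I would verify the identity $\nabla_x\gamma_k = \gamma_k(s_k-\bar s)$ carefully, and also that all constants are uniform in $x$ and $\theta$, which follows from the uniform eigenvalue bounds.
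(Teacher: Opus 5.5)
Your proposal is correct and follows essentially the same route as the paper's proof: responsibilities write the score as a convex combination of component scores bounded via $\lambda_{\min}$, the log-mixture Hessian is decomposed as the weighted component Hessians plus the covariance of the component scores, and the inequalities $\|x\| \le 1+\|x\|^{2/\zeta}$ and $\|x\|^2 \le 1+\|x\|^{4/\zeta}$ (valid for $\zeta \le 2$) are combined with the polynomial growth of $\vartheta_k$. Your version is slightly more explicit than the paper's, identifying $k_1 = r$ and $k_2 = 2r$; note also that $\omega_k$ depends only on $\theta$, so the $\nabla_x \omega_k$ terms you worried about never arise.
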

The proof is provided in Appendix \ref{appendix:subsec:gaussian}. This means that under standard assumptions on the networks of an MDN, robustness is guaranteed for a large range of IMQ weights, including the commonly used choice of $\zeta = 2$. We now move on to showing a similar result for a class of normalising flows.
\paragraph{Masked autoregressive flows.}
An MAF \citep{Papamakarios2017_maf} is a conditional normalising flow defined by an invertible transformation $\varrho_\phi(\cdot; \theta):\mathbb{R}^{d_\mathcal{X}} \rightarrow \mathbb{R}^{d_\mathcal{X}}$ that maps a simple $d_\X$-dimensional base random variable $z \sim p_z(z) = \mathcal{N}(0, I_{d_\X})$ to our target variable $x$ conditioned on $\theta$ such that $x = \varrho_\phi(z; \theta)$ and $z = \varrho_\phi^{-1}(x; \theta)$. By change of variables, we can express $q_\phi(x|\theta)$ as $q_\phi(x|\theta) = p_z(\varrho_\phi^{-1}(x;\theta))| \det \partial \varrho_\phi^{-1}(x; \theta)/\partial x|$.
In practice, $\varrho_\phi^{-1}$ is composed of $L$ autoregressive layers $\varrho_\phi^{-1} = \varrho_L^{-1} \circ \dots \circ \varrho_1^{-1}$ with intermediate states $h_0 = x $, $h_l = \varrho_l^{-1}(h_{l-1}; \theta)$, and $z = h_L$:
\begin{equation*}
    \log q_\phi(x | \theta) =   \log p_z(z) + \sum_{l=1}^L \log \left| \det \frac{\partial \varrho_l^{-1}(h_{l-1}; \theta)}{\partial h_{l-1}} \right| ,
\end{equation*}
where $h_0 = x $, $h_l = \varrho_l^{-1}(h_{l-1}; \theta)$, $z = h_L$, and $\phi$ constitutes the combined parameters of $\varrho_1^{-1} , \dots, \varrho_L^{-1}$.
In each layer, the invertible map $\varrho_\phi^{-1}$ is an autoregressive affine transformation of the form (scaling and shifting operation):
\begin{equation} \label{eq:latent_maf}
    z_i = \left(\varrho_\phi^{-1}(x; \theta) \right)_i  = \frac{x_i - \mu_{\phi, i}(x_{<i}, \theta)}{\sigma_{\phi, i}(x_{<i}, \theta)} \quad i=1, \dots, d_\X,
\end{equation}
where $\mu_{\phi, i}$ and $\sigma_{\phi, i} > 0$ are outputs of masked multi-layer perceptrons (MLPs), parameterised by $\phi$, that ensure the autoregressive dependency structure, i.e. the $i^\textup{th}$ output only depends on $x_{<i} := (x_1, x_2, \ldots, x_{i-1})^\top \in \mathbb{R}^{i-1}$, $i \in \{1, \ldots, d_\X\}$. The conditioning on $\theta$ is included in every layer. The corresponding Jacobian is a lower triangular matrix, hence $\log\left|\det \partial z/\partial x\right|=-\sum_{i=1}^{d_\X}\log \sigma_{\phi,i}(x_{<i},\theta)$, and therefore the log-density becomes:
\begin{equation} \label{eq:MAF}
    \log q_\phi(x | \theta) =  \log p_z(z) - \sum_{i=1}^{d_\X}\log \sigma_{\phi,i}(x_{<i},\theta),
\end{equation}
where $z_i$ (with components $z_i$) given by \Cref{eq:latent_maf}. 
Similarly to MDNs, when $q_{\phi}$ is an MAF, robustness holds for certain values of $\zeta$ under standard network conditions.
\begin{proposition}\label{thm:masked_autoregressive_flow}
    Suppose $q_{\hat{\phi}_m}$ is a masked autoregressive flow as in \Cref{eq:MAF} such that 
    \begin{itemize}
        \item The networks computing $\mu_{\hat{\phi}_m,i}$ and $\sigma_{\hat{\phi}_m,i}$ are MLPs built from affine layers with tanh activation functions.
        \item An additional softplus activation is applied to the final output of $\sigma_{\hat{\phi}_m,i}$.
    \end{itemize}
    Then Assumption \ref{assumption:imq-pif} is satisfied for any $0 < \zeta \leq 1$.
\end{proposition}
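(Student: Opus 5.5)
The plan is to verify the two growth conditions of Assumption \ref{assumption:imq-pif} directly from the explicit form \eqref{eq:MAF}. The prior and weight conditions in that assumption are part of the setting, so only the bounds on $\|\nabla_x \log q_{\hat\phi_m}\|_2$ and $|\text{Tr}\,\nabla_x^2 \log q_{\hat\phi_m}|$ need work. With $\zeta\le 1$ we have $2/\zeta\ge 2$ and $4/\zeta\ge 4$, and $1+\|x\|^a$ is increasing in $a$ up to constants. It therefore suffices to show the score grows at most quadratically in $\|x\|$ and the Laplacian at most quartically, with prefactors polynomial in $\|\theta\|$.

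\textbf{Step 1: uniform bounds on the networks.} For an MLP $N(u)$ whose hidden layers are $\tanh$ of affine maps, every hidden unit lies in $[-1,1]$. The output is an affine function of bounded quantities, so $|\mu_{\hat\phi_m,i}|\le B$ uniformly in $(x,\theta)$. The same holds for the pre-softplus output of $\sigma_{\hat\phi_m,i}$, which gives $0<s_{\min}:=\mathrm{softplus}(-B)\le\sigma_{\hat\phi_m,i}\le \mathrm{softplus}(B)$. Since $\tanh'=1-\tanh^2\in[0,1]$ and $\tanh''$ is bounded, the chain rule gives bounds, uniform in $(x,\theta)$, on the first and second derivatives of $\mu_{\hat\phi_m,i}$ and $\sigma_{\hat\phi_m,i}$ with respect to their inputs. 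These bounds are products of operator norms of the fixed weight matrices in $\hat\phi_m$. Composition with softplus preserves all of this, because softplus has bounded first and second derivatives and the argument is bounded.

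\textbf{Step 2: a single layer.} From \eqref{eq:latent_maf}, $|z_i|\le (\|x\|+B)/s_{\min}$, so $z$ grows linearly. Differentiating once,
\[
\partial_{x_j} z_i=\frac{\delta_{ij}-\partial_{x_j}\mu_i}{\sigma_i}-\frac{(x_i-\mu_i)\,\partial_{x_j}\sigma_i}{\sigma_i^2},
\]
so $\|\partial z/\partial x\|\lesssim 1+\|x\|$. Differentiating again gives $\|\nabla_x^2 z_i\|\lesssim 1+\|x\|$. For $\log\sigma_i$, both $\nabla_x\log\sigma_i$ and $\nabla^2_x\log\sigma_i$ are bounded. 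Hence:
\begin{itemize}
\item the score $-(\partial z/\partial x)^\top z-\sum_i\nabla_x\log\sigma_i$ is $O(1+\|x\|^2)$;
\item the Laplacian $-\|\partial z/\partial x\|_F^2-\sum_i z_i\,\Delta z_i-\sum_i\Delta\log\sigma_i$ is $O(1+\|x\|^2)$.
\end{itemize}
All constants are independent of $\theta$, so $k_1=k_2=0$ works.

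\textbf{Step 3: composing $L$ layers (main obstacle).} The composition is where the difficulty lies. A naive chain rule multiplies $L$ Jacobians, each growing linearly in $\|h_{l-1}\|\lesssim 1+\|x\|$, which would give degree $L+1$ rather than $2$. I would try to show the layer Jacobians are in fact uniformly bounded, using the specific structure of $\partial_{x_j}\sigma_i$. This term is bounded by $\mathrm{softplus}'\cdot \|W_{\mathrm{out}}\|\prod \tanh'(\cdot)\,\|W\|$, and the factor $(x_i-\mu_i)$ multiplying it must be controlled. The first thing to try is the elementary bound
\[
|s|\,\mathrm{sech}^2(s+b)\le C(1+|b|),
\]
applied to the first hidden layer, with the offset $b$ absorbing the $\theta$-dependent and other-coordinate affine terms; this is where polynomial factors in $\|\theta\|$ (hence $k_1,k_2>0$) may enter. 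If this route does not close for deep masked MLPs, the fallback is to track the degree inductively and check that the $\mathrm{sech}^2$ factors prevent the degree from compounding past $2$ for the score and $4$ for the Laplacian. Once the Jacobian and Hessian of $\varrho^{-1}_{\hat\phi_m}$ are controlled in this way, the single-layer argument of Step 2 applied to $z=h_L$, together with the sum of $\log\sigma$ terms across layers, yields the two inequalities of Assumption \ref{assumption:imq-pif}.
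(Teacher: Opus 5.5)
Your Steps 1 and 2 are the paper's proof. Both argue the same way: $\theta$-independent bounds on $\mu_{\hat\phi_m,i}$, on $\sigma_{\hat\phi_m,i}$ (bounded below as softplus of a bounded pre-activation) and on their first two derivatives give $|z_i|\lesssim 1+\|x\|_2$ and $\|\partial z/\partial x\|\lesssim 1+\|x\|_2$. Hence the score and Laplacian are $O(1+\|x\|_2^2)$ with $k_1=k_2=0$, and $\zeta\le 1$ then yields Assumption \ref{assumption:imq-pif}. The proposition is stated for the density \eqref{eq:MAF}, where $z$ is the single autoregressive affine map \eqref{eq:latent_maf} of $x$, and the paper's proof never composes layers. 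So these two steps already prove the statement, and Step 3 is not needed.

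You should drop Step 3, because what it aims at is false. The layer Jacobians are not uniformly bounded in general. The entry $-(x_i-\mu_{\hat\phi_m,i})\,\partial_{x_j}\sigma_{\hat\phi_m,i}/\sigma_{\hat\phi_m,i}^2$ grows linearly in $x_i$ at fixed $x_{<i}$ whenever $\partial_{x_j}\sigma_{\hat\phi_m,i}\neq 0$ there. Since $x_i$ is not an input of $\sigma_{\hat\phi_m,i}$, no $\mathrm{sech}^2$ factor contains it, and $|s|\,\mathrm{sech}^2(s+b)\le C(1+|b|)$ has nothing to act on. This growth is exactly why the one-layer score is quadratic rather than linear, i.e.\ why $\zeta\le 1$ is needed here while $\zeta\le 2$ suffices for MDNs.

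Your fallback also fails, because the degree does compound across layers. Take $d_\X=4$, all shifts zero, and all scales equal to $1$ (constant outputs) except two:
\begin{itemize}
\item in the first layer, $\sigma_3(x_1,x_2)=\mathrm{softplus}(\tanh(x_1-x_2))$;
\item in the second layer, $\sigma_4(h_1,h_3)=\mathrm{softplus}(\tanh(h_3-h_1))$, with $h$ the first layer's output.
\end{itemize}
Along $x=(t,t,t\log 2,t)$ one has $h_1=h_3=t$ and both $\tanh$ arguments vanish. Differentiating $-\tfrac12\|z\|_2^2-\log\sigma_3-\log\sigma_4$ directly gives $\partial_{x_1}\log q(x)=-t^3/(4(\log 2)^4)+O(t^2)$. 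So the score of a two-layer MAF can grow cubically, and the bound with exponent $2/\zeta=2$ fails. A stacked-flow version would need a depth-dependent range of $\zeta$, which neither your argument nor the paper provides; the proposition and its proof cover only the one-layer density \eqref{eq:MAF}.
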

The proof is provided in Appendix \ref{appendix:proof:maf}.
Note that the network assumptions in Proposition \ref{thm:masked_autoregressive_flow} follow the default implementation of MAF provided by the SBI library \citep{Tejero-Cantero2020}.
Most notably, this result guarantees that the examples considered in \Cref{sec:experiments} will be robust. 
Interestingly, it shows that the choice of $\zeta=2$, which is prevalent in the literature on SM-Bayes, may not be a good one for NSM-Bayes based on normalising flows.

\subsection{Robustness of Conjugate Neural Score-matching Bayes}

We now move to NSM-Bayes-conj, where we can build on \Cref{theorem:robustness}, but also study robustness under an alternative form of PIF suggested by \citet{Altamirano2024}. 
This alternative is preferable, as it is divergence-based and offers a more interpretable measure of robustness. It is not as widely used as it is often challenging to compute, but it is tractable for the special case of NSM-Bayes-conj.
Consider the case where a single observation $x_j^o$, for some $j \in \{1, \dots, n\}$, is replaced by an arbitrary contamination datum $x_j^c$, and denote the set of observed data as $x^o_{1:n}$ and the contamination dataset as $x_{1:n}^{c} := (x^o_{1:n} \backslash  x_j^o) \cup x_j^c$. The PIF of \citet{Altamirano2024} is defined by examining the influence of this contamination, measured using the Kullback-Leibler divergence between the posterior based on the original dataset $x^o_{1:n}$ and that based on the contaminated dataset $x^c_{1:n}$:
\begin{align*}
     \pif_{\KL}(x_j^c, \mathbb{P}_n, \mathcal{L}) :=  \KL\left(\pi_{\mathcal{L}}\left(\theta | x^o_{1:n}\right) \,\big\|\, \pi_{\mathcal{L}}\left(\theta | x^c_{1:n} \right)\right).
\end{align*}
Similarly to before, $\pi_{\mathcal{L}}$ is called robust if $\pif_{\KL}$ is bounded as a function of $x_j^c$. For any $n \times m$ matrix $A$, we write $\| A \|_{F} := (\sum_{i=1}^m \sum_{j=1}^n | A_{ij} |^2)^{\frac{1}{2}}$ for the Frobenius norm.
\begin{assumption} \label{assumption:kl-pif}
    Suppose $\exists C, C^\prime < \infty$ such that $ \forall x \in \mathcal{X}$:
    \begin{align*}
         & \max\left\{\left(\left\|\nabla_x T_{\hat{\phi}_m}(x)\right\|_F \left\| \nabla_x b_{\hat{\phi}_m}(x) \right\|_2 \right)^{\frac{1}{2}},   \left\| \nabla^2_x T_{\hat{\phi}_m}(x) \right\|_F^{\frac{1}{2}}, \left\| \nabla_x T_{\hat{\phi}_m}(x)\right\|_F \right\}   \leq C w(x)^{-1}, \\
         & \left\| \nabla_x T_{\hat{\phi}_m}(x) \right\|_F  \leq C^\prime  \left\| \nabla_x w(x)^2 \right\|_2^{-1}.
    \end{align*}
\end{assumption}
\begin{theorem} \label{theorem:robustness-exp}
Suppose \ref{assumption:weight}, \ref{assumption:exptheta} and \ref{assumption:kl-pif} hold.
Then, the NSM-Bayes-conj is outlier-robust; i.e.  $\sup_{x_j^c \in \mathbb{R}^{d_{\mathcal{X}}}}  \pif_{\KL}(x_j^c, \mathbb{P}_n, \mathcal{L}_{\text{NSM}})  < \infty$.
\end{theorem}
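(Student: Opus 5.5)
By \Cref{prop:conjugacy}, both $\pi_{\text{NSM}}(\theta|x^o_{1:n},\hat\phi_m)$ and $\pi_{\text{NSM}}(\theta|x^c_{1:n},\hat\phi_m)$ are Gaussian. So $\pif_{\KL}$ is the closed-form KL divergence between $\mathcal{N}(\mu_{n,m},\Sigma_{n,m})$ and $\mathcal{N}(\mu^c_{n,m},\Sigma^c_{n,m})$:
\begin{align*}
\tfrac12\Big(\text{Tr}\big((\Sigma^c_{n,m})^{-1}\Sigma_{n,m}\big) - d_\Theta + (\mu^c_{n,m}-\mu_{n,m})^\top(\Sigma^c_{n,m})^{-1}(\mu^c_{n,m}-\mu_{n,m}) + \log\frac{\det \Sigma^c_{n,m}}{\det\Sigma_{n,m}}\Big).
\end{align*}
The plan is to bound each of the three pieces uniformly in $x_j^c$. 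The uncontaminated quantities $\mu_{n,m},\Sigma_{n,m}$ are fixed constants. Only the $j$-th summand changes between the two datasets.

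Write $A(x) := 2\beta w(x)^2 \nabla_x T(x)\nabla_x T(x)^\top \succeq 0$ and
\begin{align*}
a(x) := 2\beta\Big(w(x)^2\nabla_x T(x)\nabla_x b(x) - \nabla_x\cdot\big(w(x)^2\nabla_x T(x)^\top\big)^\top\Big),
\end{align*}
dropping the subscript $\hat\phi_m$. Then
\begin{align*}
(\Sigma^c_{n,m})^{-1} = \Sigma_{n,m}^{-1} - A(x_j^o) + A(x_j^c).
\end{align*}

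\textbf{Step 1: uniform bounds on the contaminated term.} Using the third bound in \Cref{assumption:kl-pif}, $\|A(x)\|_F \le 2\beta w^2\|\nabla_x T\|_F^2 \le 2\beta C^2$. For $a(x)$, expand the divergence by the product rule:
\begin{align*}
\nabla_x\cdot\big(w^2\nabla_x T^\top\big) = (\nabla_x w^2)^\top\nabla_x T^\top + w^2\,(\text{second derivatives of } T).
\end{align*}
The three pieces of $a(x)$ are then bounded as follows.
\begin{itemize}
\item The first piece of $a(x)$ is at most $w^2\|\nabla_x T\|_F\|\nabla_x b\|_2 \le C^2$.
\item The second-derivative part is at most $w^2\|\nabla_x^2 T\|_F \le C^2$.
\item The gradient-of-weight part is at most $\|\nabla_x w^2\|_2\|\nabla_x T\|_F\le C'$.
\end{itemize}
Hence $\sup_x\|a(x)\|_2 \le 2\beta(2C^2+C') < \infty$. \Cref{assumption:weight} is used only to ensure these quantities are well defined.

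\textbf{Step 2: control of the precision and its inverse.} Set $M := \Sigma^{-1}+\sum_{i\neq j}A(x_i^o) \succeq \Sigma^{-1}$, so that $(\Sigma^c_{n,m})^{-1} = M + A(x_j^c)$. Since $A\succeq 0$, we get the two-sided bound
\begin{align*}
M \preceq (\Sigma^c_{n,m})^{-1} \preceq M + 2\beta C^2 I.
\end{align*}
Consequences:
\begin{itemize}
\item $\Sigma^c_{n,m}\preceq M^{-1}\preceq\Sigma$ is bounded above, uniformly in $x_j^c$.
\item $(\Sigma^c_{n,m})^{-1}$ is bounded above uniformly, so $\text{Tr}\big((\Sigma^c_{n,m})^{-1}\Sigma_{n,m}\big)$ is bounded.
\item $\log\det\Sigma^c_{n,m} \le \log\det M^{-1}$, so the log-determinant ratio is bounded above. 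This is the only direction needed, since the KL divergence is nonnegative.
\end{itemize}

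\textbf{Step 3: the mean term, the main obstacle.} We have
\begin{align*}
\mu^c_{n,m} = \Sigma^c_{n,m}\big(\Sigma^{-1}\mu - \textstyle\sum_{i\ne j}a(x_i^o) - a(x_j^c)\big).
\end{align*}
The bracket is uniformly bounded by Step 1, and $\|\Sigma^c_{n,m}\|\le\|\Sigma\|$ by Step 2. So $\mu^c_{n,m}$ is uniformly bounded, hence so is $\mu^c_{n,m}-\mu_{n,m}$. The quadratic form is at most $\|(\Sigma^c_{n,m})^{-1}\|\,\|\mu^c_{n,m}-\mu_{n,m}\|^2$, which is bounded by Step 2.

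The delicate part is the divergence expansion in Step 1. One has to check that each term of $\nabla_x\cdot(w^2\nabla_x T^\top)$ is matched by exactly one of the assumed bounds, taking care with the index conventions for $\nabla_x T\in\mathbb{R}^{d_\Theta\times d_\X}$ and with the Frobenius norm of the third-order tensor $\nabla_x^2 T$. Summing the three bounded contributions gives $\sup_{x_j^c}\pif_{\KL}<\infty$.
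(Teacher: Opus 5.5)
Your proposal is correct and follows essentially the same route as the paper's proof. Both split the closed-form Gaussian KL into trace, mean and log-determinant terms, and both rely on the Loewner bound $\Sigma^c_{n,m}\preceq\Sigma$, uniform bounds on the single contaminated summand from Assumption~\ref{assumption:kl-pif}, and determinant monotonicity. Your direct bound on $\mu^c_{n,m}$ is slightly cleaner than the paper's rearrangement $\mu^c_{n,m}-\mu_{n,m}=\Sigma^c_{n,m}(v^c-A^c\mu_{n,m})$.

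One harmless slip: the trace-of-Hessian part of the divergence is bounded by $\sqrt{d_\X}\,w(x)^2\|\nabla_x^2T(x)\|_F\le\sqrt{d_\X}\,C^2$, not by $C^2$, since $|\text{Tr}\,H|\le\sqrt{d_\X}\,\|H\|_F$. This changes only a constant, not the conclusion.
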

See \Cref{appendix:proof_robust_exp} for the proof. 
Assumption \ref{assumption:kl-pif} ensures that the first- and second-order derivatives of $T_{\hat{\phi}_m}$ and  $b_{\hat{\phi}_m}$ grow at most at a rate that is offset by $w$ and $\nabla w$, so that extreme observations cannot induce unbounded posterior influence.
Note that if Assumption \ref{assumption:kl-pif} is satisfied, then under mild additional assumptions, Assumption \ref{assumption:general-pif} is satisfied as well. Consequently, robustness is guaranteed under both PIF definitions (see Appendix \ref{appendix:assumptions-connection} for a proof). This time, we do not have additional assumptions on the prior since this must be Gaussian to ensure conjugacy. Here, the constants $C$ and  $C^\prime$ act very similarly to $f, g$ in Assumptions \ref{assumption:general-pif} and \ref{assumption:imq-pif}, with the difference being that they do not depend on $\theta$. 

We emphasise that a key strength of all robustness results in Section \ref{sec:theory} is that they depend only on Assumptions about the fitted density $q_{\hat{\phi}_m}$ rather than the entire model class $\{q_{\phi}: \phi \in \Phi\}$. Consequently, step (i) can be carried out independently to obtain $\hat{\phi}_m$, after which the weight $w$ can be selected such that the required Assumptions are satisfied.

\section{Experiments}
\label{sec:experiments}

We demonstrate the performance of our methods against NLE and other robust SBI baselines. Given the significant interest in the problem, an exhaustive comparison is infeasible. We therefore benchmark against five competitors representative of the literature: (i) the robust sequential NLE (RSNLE) method of \citet{Kelly2024}, (ii) the MMD posterior bootstrap (NPL-MMD) method of \citet{Dellaporta2022}, (iii) the NPE with robust statistics (NPE-RS) method of \citet{Huang2023}, (iv) the amortised cost estimation (ACE) method of \citet{Gao2023}, and (v) the GBI scoring rule (GBI-SR) method of \citet{Pacchiardi2024}. We exclude comparison with methods that require additional data for robustness as they are not as widely applicable. 

We use two metrics to evaluate performance. The first is the MMD$^2$ \citep{Gretton2006} between posterior samples and a reference NLE posterior obtained from uncorrupted data (denoted as $\text{MMD}^2_{\text{ref}}$). The second is the mean squared error $\text{MSE} = \mathbb{E}[\Vert \theta - \theta^\star \Vert_2^2]$ estimated using posterior samples, which measures whether the posteriors are centred around $\theta^\star$. Note that MSE can be computed in closed-form for NSM-Bayes-conj. We also report the time to perform inference on a single dataset. For partially amortised methods where the density estimator need not be re-trained for new observations, training and inference times are reported separately. For NSM-Bayes, NSM-Bayes-conj, GBI-SR and ACE, inference time also includes tuning the learning rate $\beta$. As ACE and GBI-SR do not provide a method to tune $\beta$, we set it using the same approach as NSM-Bayes; see \Cref{app:setting-learning-rate} for further details. 

For a fair comparison, we fix the simulation budget $m$ and the number of posterior samples for all methods. 
We use the same default conditional density estimator from the sbi library \citep{Tejero-Cantero2020} for both NLE and NSM-Bayes, which is an MAF for the g-and-k (\Cref{sec:gnk}) and SIR experiments (\Cref{sec:sir}), and an MDN for the radio propagation experiment (\Cref{sec:radio}). We set $\zeta=1$ in \Cref{eq:imq-weight} to ensure that our theory is satisfied for both MDNs and MAFs.
A slice sampler is used to obtain 500 samples (with 500 warm-up steps) from the NLE and the NSM-Bayes posterior in the g-and-k and SIR experiments. For the radio propagation experiment, we sample 1000 posterior samples. For simplicity, we compare all the runtimes on CPU (Apple M4, 16GB), but some methods may benefit from parallelisation (e.g. NPL-MMD). Further implementation details and explanation of the baselines are in \Cref{app:experimental-details}. The code to reproduce the experiments is available at \url{https://github.com/bharti-ayush/nsm-bayes}.

\subsection{Benchmarking on the g-and-k distribution}
\label{sec:gnk}

Our first example is the g-and-k \citep{Prangle2020}: a  flexible parametric family of univariate distributions which does not have a closed-form likelihood but is easy to simulate from. This is a popular example in the SBI literature \citep{Fearnhead2012, Price2018, wang2022approximate, Bharti22a, chen2023learning}, which has been used to benchmark robustness \citep{Dellaporta2022}.
The distribution has four parameters $(\theta_1, \theta_2, \theta_3, \theta_4)$, and sampling can be done by simulating $u \sim \mathcal{N}(0,1)$ and substituting it in the following generator:
\begin{align*}
    G_\theta(u) = \theta_1 + \theta_2 \left(1+0.8 \left(\frac{1-\exp(-\theta_3 u)}{1+\exp(-\theta_3 u)} \right)\right)\left(1+u^2 \right)^{\theta_4} u, \quad u \sim \mathcal{N}(0,1).
\end{align*}
For stability, we log-transform the scale and kurtosis parameters $(\theta_2, \theta_4)$. We then place a Gaussian prior on $(\theta_1, \log \theta_2, \theta_3, \log \theta_4)$ with mean $[0.0, 0.7, 0.0, -1.5]^\top$ and a diagonal covariance matrix where the diagonal entries are $[5.0, 0.5, 4.0, 0.25]^\top$.

\begin{table*}
\centering
\caption{\textbf{Benchmarking robust SBI methods on g-and-k distribution}. Standard deviation for the metrics across 20 runs are reported in the parenthesis.}
\label{tab:gnk}
\scalebox{0.86}{
\begin{tabular}{rccccllc}
\toprule
Method 
& $\text{MMD}^2_{\text{ref}}$
& MSE
& \begin{tabular}[c]{@{}c@{}}Empirical\\ {Coverage}\end{tabular}
& \begin{tabular}[c]{@{}c@{}}Training\\ {time [}s{]}\end{tabular}
& \begin{tabular}[c]{@{}c@{}}Inference\\ {time [}s{]}\end{tabular}
& Amortised
& \begin{tabular}[c]{@{}c@{}}Robustness\\ {guarantee}\end{tabular} \\
\midrule
NLE  
& 0.54 \textcolor{gray}{(0.03)} 
& 15.0 \textcolor{gray}{(1.8)}  
& 0\%
& 93 \textcolor{gray}{(37)} 
& 36 \textcolor{gray}{(19)} 
& \checkmark (Partially) 
& \xmark \\ 
\hdashline[1pt/2pt]
NPL-MMD  
& 0.69 \textcolor{gray}{(0.21)} 
& 37.2 \textcolor{gray}{(9.4)} 
& 30\%
& - 
& 0.6 \textcolor{gray}{(0.1)} 
& \xmark 
& \checkmark \\
RSNLE  
& 0.49 \textcolor{gray}{(0.08)} 
& 3.8 \textcolor{gray}{(1.6)} 
& 0\%
& - 
& 38 \textcolor{gray}{(4)}
& \xmark  
& \xmark \\ 
NPE-RS  
& 0.36 \textcolor{gray}{(0.15)} 
& 8.0 \textcolor{gray}{(4.9)} 
& 15\%
& - 
& 41 \textcolor{gray}{(19)}
& \xmark  
& \xmark \\
ACE  
& 0.24 \textcolor{gray}{(0.15)} 
& 4.6 \textcolor{gray}{(1.5)} 
& 85\%
& 376 \textcolor{gray}{(80)}
& 390 \textcolor{gray}{(5)}
& \checkmark (Partially)   
& \xmark \\
GBI-SR  
& 0.24 \textcolor{gray}{(0.15)} 
& 4.1 \textcolor{gray}{(5.4)} 
& 80\%
& - 
& 415 \textcolor{gray}{(8)}
& \xmark  
& \checkmark \\
\hdashline[1pt/2pt]
NSM-Bayes  
& 0.13 \textcolor{gray}{(0.09)} 
& 5.5 \textcolor{gray}{(2.4)}
& 100\%
& 93 \textcolor{gray}{(37)} 
& 163 \textcolor{gray}{(2)}
& \checkmark (Partially) 
& \checkmark \\
NSM-Bayes-conj  
& 0.20 \textcolor{gray}{(0.14)} 
& 6.1 \textcolor{gray}{(2.9)} 
& 100\%
& 176 \textcolor{gray}{(53)} 
& 5.1 \textcolor{gray}{(0.1)}
& \checkmark  (Fully)
&  \checkmark \\
\bottomrule
\end{tabular}}
\end{table*}

We generate observed data $x^o_{1:n}$ with one-sided outliers by sampling $n=100$ points from a Huber contamination model where $90\%$ of observations are drawn from some g-and-k distribution $\P_{\theta^\star}$ with $\theta^\star = [1, 0.5, 1, -1]^\top$, and the remainder from some corrupted distribution $\mathbb{Q}= \mathbb{P}_{\theta^\star}-50$. We fix the simulation budget to be $m=10^5$ samples and repeat the experiment $20$ times. Results are reported in \Cref{tab:gnk} and the marginal posterior densities are in \Cref{fig:gnk_posteriors}. We also report the empirical coverage for each method, which measures how often $\theta^\star$ was included in the 95\% credible region of their posterior. 

NLE is most affected by the one-sided contamination in the scale parameter $\theta_2$, leading to inflated scale and reduced skewness $\theta_3$ estimates. Correspondingly, it achieves $0\%$ empirical coverage and exhibits large MMD and MSE values. NPE-RS is fast at inference but fails to provide robust posteriors, with low coverage, and poor performance. Similarly, RSNLE is fast but achieves large MMD and 0\% coverage, despite having the lowest MSE. NPL-MMD is the fastest method but performs poorly in this setting. This is because it places a non-informative prior on $\P_0$ rather than the informative prior all other methods use for $\theta$, and therefore would require a much larger simulation budget to improve performance; as shown in \Cref{app:add_exp_baselines}. 
ACE attains a relatively good coverage and competitive MMD/MSE values, but has large training and inference times. 
GBI-SR also achieves competitive performance and maintains good coverage, but its performance is more variable across runs, particularly for $\theta_3$, and it has the largest inference time. 

In contrast with these benchmark methods, NSM-Bayes and NSM-Bayes-conj achieve 100\% empirical coverage and the lowest $\text{MMD}^2_{\text{ref}}$ values. Both methods also perform well in MSE, and NSM-Bayes-conj takes just a few seconds for inference, which is primarily spent on calibrating $\beta$. Training NSM-Bayes-conj is slower than NLE since it minimises an (unweighted) score-matching objective, and NSM-Bayes is typically $2-3 \times$ slower than NLE due to the score and Hessian computation required for $\mathcal{L}_{\text{NSM}}$.
Overall, NSM-Bayes and NSM-Bayes-conj are the only methods in this benchmark that are simultaneously amortised and provably robust, while achieving strong empirical robustness, reasonable coverage, and competitive inference times. 

\begin{figure}
    \centering
    \includegraphics[width=\linewidth]{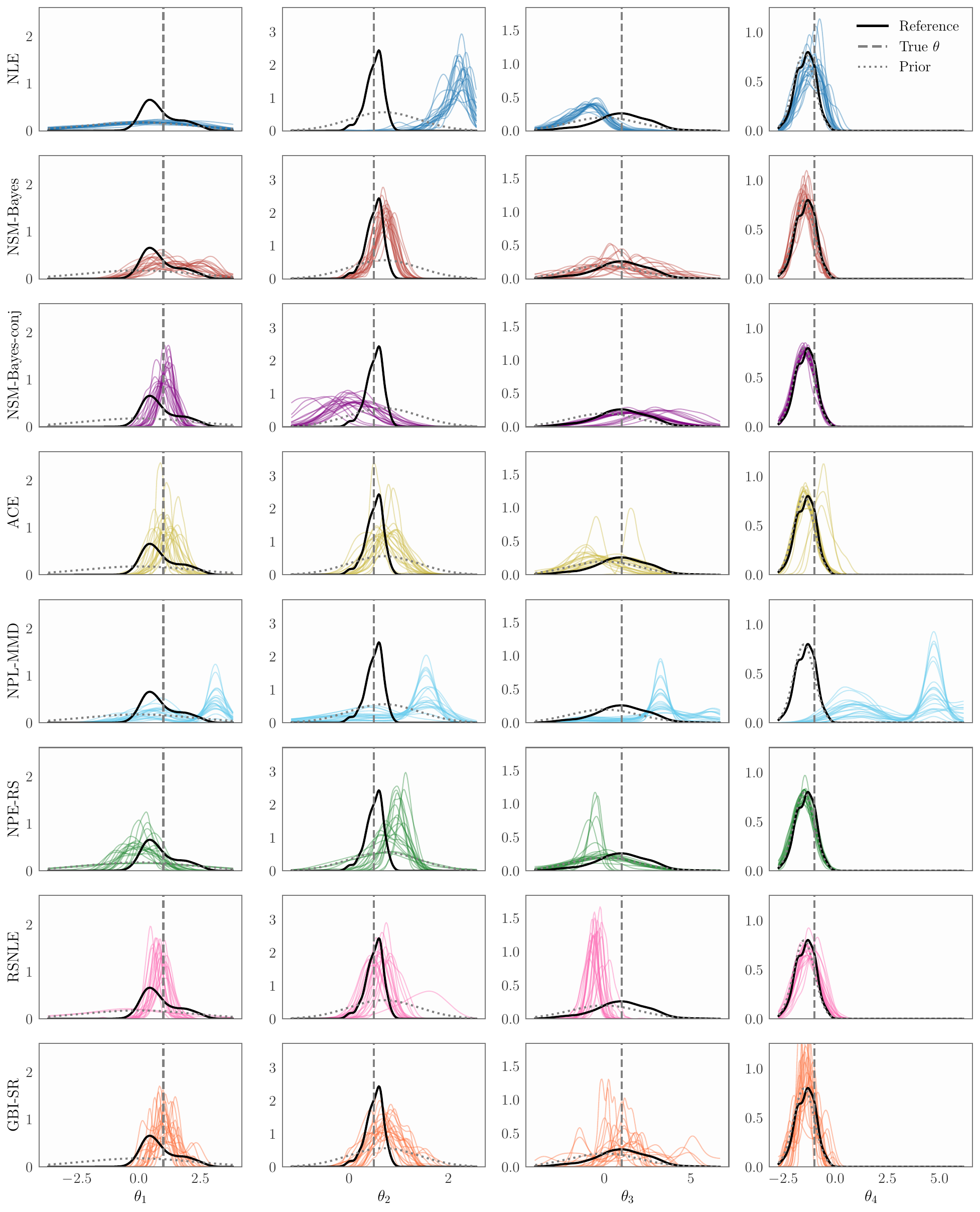}
    \caption{\textbf{The g-and-k distribution.} Kernel density estimates of the marginal posterior distributions from NLE (\legendbox{nlecolor}), NSM-Bayes (\legendbox{gencolor}), NSM-Bayes-conj (\legendbox{c1color}), ACE (\legendbox{acecolor}), NPL-MMD (\legendbox{nplmmdcolor}), NPE-RS (\legendbox{nperscolor}), RSNL (\legendbox{rsnlecolor}), and GBI-SR (\legendbox{gbisrcolor}) from 500 samples, along with the reference NLE posterior (\legendbox{black}) given uncorrupted data. The dashed and dotted gray lines denote the true parameter value and the prior distribution, respectively.}
    \label{fig:gnk_posteriors}
\end{figure}

\subsection{A Susceptible-Infectious-Recovered model with under-reporting of cases}
\label{sec:sir}

Next, we consider a stochastic, discrete-time Susceptible-Infectious-Recovered (SIR) model \citep{tuckwell2007some, allen2017primer} which is a set of stochastic recurrence relations that determine how many people in a given population $N$ are susceptible, infectious, and recovered over a given time period.
Let a population of size $N$ be divided into three compartments: Susceptible ($S_t$), Infectious ($I_t$), and Recovered ($R_t$), $t=0, \dots, T$. Given a time step $\Delta t$, the dynamics are governed by a set of stochastic recurrence relations. The transition of individuals between compartments at each time step $t \rightarrow t + \Delta t$ is modelled using Binomial distributions, which account for the discrete nature of the population and the inherent randomness of transmission and recovery processes: $\Delta I_t \sim \text{Binomial}\big(S_t, 1 - \exp(- \theta_1 \frac{I_t}{N}\Delta t)\big)$, and $\Delta R_t \sim \text{Binomial}\big(I_t, 1 - \exp(- \theta_2\Delta t)\big)$,
where $\theta_1$ is the transmission rate and $\theta_2$ is the recovery rate. The binomial transitions arise from assuming independent infection and recovery events over the interval $[t, t + \Delta t]$. The state variables are then updated as: $S_{t+1} = S_t - \Delta I_t$, $I_{t+1} = I_t + \Delta I_t - \Delta R_t$, $R_{t+1} = R_t + \Delta R_t$.
This formulation corresponds to a discrete-time approximation of the continuous-time SIR process, with binomial transitions capturing demographic stochasticity.
The system is initialized with $\theta_4$ infected individuals and $S_0 = N - \theta_4$ susceptible individuals, where $N$ is assumed known. 

Let $y_t$ denote the observed number of reported new infections at time $t$. The expected reported cases are $\theta_3 \cdot \Delta I_t$, where $\theta_3 \in [0,1]$ is the reporting rate. We consider the Poisson observation model for $y_t$:
$y_t \sim \text{Poisson}(\theta_3 \Delta I_t)$.
As only partial observations are available, inference relies on low-dimensional summary statistics rather than the full latent trajectory.
From a single observed epidemic trajectory $y_{1:T}$, we compute three summary statistics commonly used to characterise epidemic severity \citep{house2013big,meyer2025predictability}: (i) the attack rate: $x_1 = \frac{1}{N} \sum_{t=1}^T y_t$, (ii) the normalised peak timing: $x_2 = \frac{1}{T-1} \arg \max_{t \in \{1, \dots, T\}} y_t$, and (iii) the normalised peak height: $x_3 = \frac{1}{N} \max_{t} y_t$. 
Thus, $x = [x_1, x_2, x_3]^\top \in \mathbb{R}^3$ is the summary statistics vector. We transform the parameters to the unconstrained real space via the transformation: $\theta = [\log \theta_1, \log \theta_2, \text{logit}\,\theta_3, \log \theta_4]^\top$,
and place Gaussian prior with mean $[\log 0.5, \log 0.2, \text{logit} 0.5, \log 20]^\top$ and a diagonal covariance matrix with diagonal elements $[0.5^2, 0.5^2, 1.0, 0.7^2]^\top$. We set $\theta^\star = [\log 0.6, \log 0.15, \text{logit}\, 0.6, \log 20 ]^\top$, $N=1000$ and $T=150$.

Outlier robustness is particularly relevant in epidemiology due to the common practical problem of undercounting infectious cases, which leads to one-sided contamination in the data \citep{gibbons2014measuring, li2020substantial}.
To model the systematic under-reporting or undercounting of cases, we introduce a one-sided contamination mechanism on the observed epidemic trajectories. 
Let $y^{(i)} = (y^{(i)}_1, \ldots, y^{(i)}_T)$ denote the reported new infections for the $i^{\textup{th}}$ trajectory, $i=1, \ldots, n$, generated from the model with true parameter $\theta^\star$. 
We assume that a fraction $\epsilon \in (0,1)$ of observed trajectories are affected by undercounting. For each trajectory $i$, we draw an independent contamination random variable $U_i \sim \text{Bernoulli}(\epsilon)$ which indicates whether trajectory $i$ is undercounted or not. If $U_i = 1$, the entire trajectory is subject to one-sided undercounting, modelled via binomial thinning:
\begin{align*}
    \tilde{y}^{(i)}_t \vert y^{(i)}_t \sim \text{Binomial} \big( y^{(i)}_t, r \big), \quad t=1, \ldots, T,
\end{align*}
\begin{wrapfigure}[]{r}{0.4\textwidth}
    \centering
    \includegraphics[width=\linewidth]{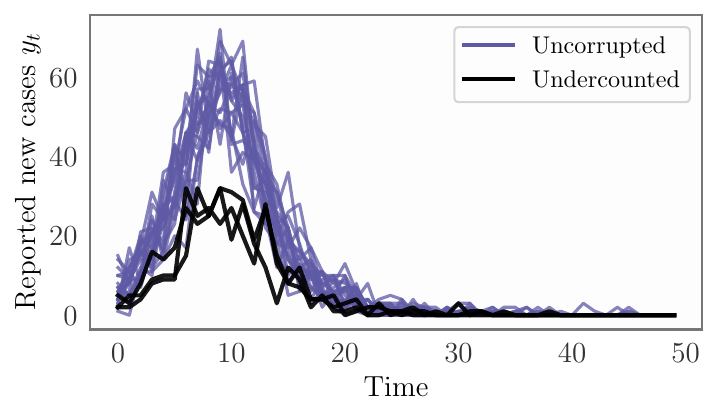}
    \vspace{-2ex}
    \caption{Observed uncorrupted (\legendbox{datacolor}) and undercounted (\legendbox{black}) trajectories of incidence from the SIR model with $\epsilon=5\%$ and $50\%$ retention probability.}
    \label{fig:sir_trajectories}
\end{wrapfigure}
where $r \in (0,1)$ is a \emph{retention probability}. Smaller values of $r$ correspond to more severe undercounting. Thus, the marginal distribution of observed trajectories follows the mixture model $\tilde{y}^{(i)} \sim (1- \epsilon) \P_{\theta^\star} + \epsilon \P_{\theta^\star, r}^{\text{under}}$,
where $\P_{\theta^\star, r}^{\text{under}}$ denotes the distribution induced by the binomial thinning of trajectories drawn from $\P_{\theta^\star}$. 
\Cref{fig:sir_trajectories} shows an example of an observed dataset with both uncorrupted and undercounted trajectories.
Out of $n=100$ observed time-series recordings, we set $\epsilon=5\%$ of them to be undercounted by $r=50\%$ to simulate outliers. We also simulate \textbf{heavy-tailed contamination} by adding Cauchy distributed noise to $\epsilon=10\%$ of the data samples. 

\begin{figure}
    \centering
    \subfigure[Undercounting contamination]
    {\includegraphics[width=0.49\columnwidth]{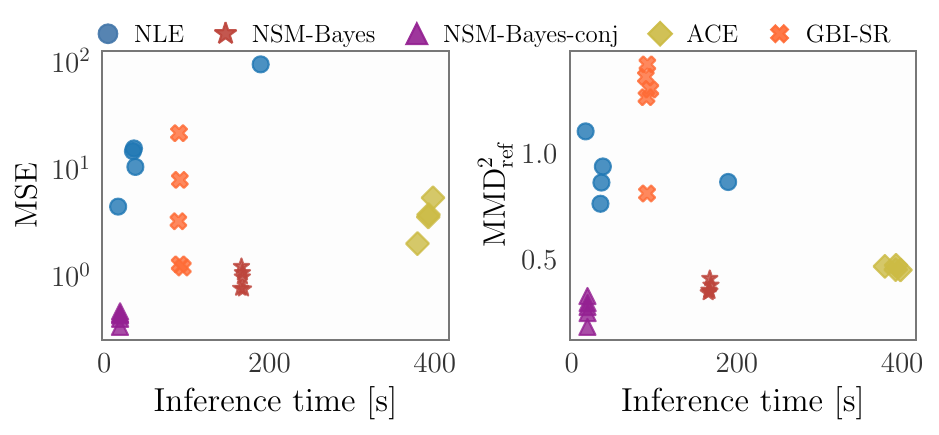}}
    \subfigure[Heavy-tailed contamination]
    {\includegraphics[width=0.49\linewidth]{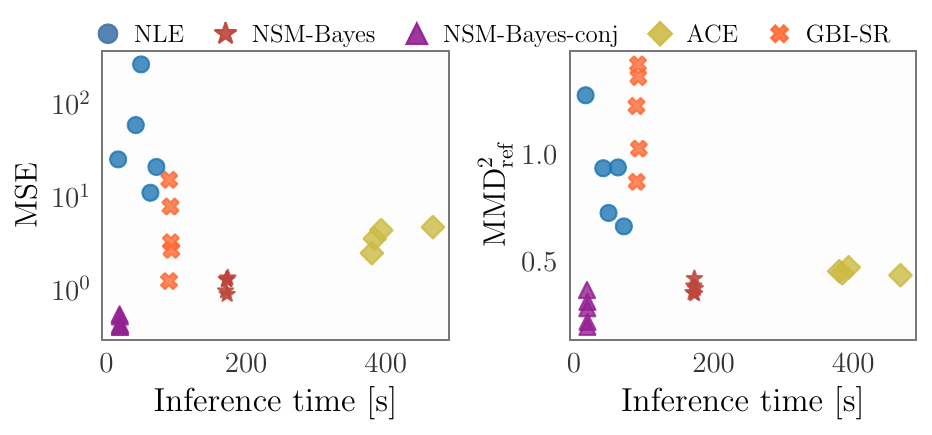}}
    \caption{\textbf{SIR model}. Performance vs. posterior inference time for NLE (\legendbox{nlecolor}), NSM-Bayes (\legendbox{gencolor}), NSM-Bayes-conj (\legendbox{c1color}), GBI-SR (\legendbox{gbisrcolor}), and ACE (\legendbox{acecolor}) under both undercounting and heavy-tailed contamination. In both cases, NSM-Bayes-conj performs the best whilst incurring the least time.}
    \label{fig:sir_plot}
\end{figure}

In \Cref{fig:sir_plot}, we plot both the metrics against inference time for NLE, NSM-Bayes, and NSM-Bayes-conj using just $m=5\times 10^4$ samples to measure their sample-efficiency. We also include GBI-SR and ACE: the best-performing baselines from \Cref{tab:gnk}. However, we cannot use the stochastic gradient version of GBI-SR (which we used in \Cref{sec:gnk}) as the SIR simulator is non-differentiable. We therefore use pseudo-marginal MCMC \citep{Andrieu2009} version of GBI-SR instead, which performs poorly here, primarily because the simulation budget is not large enough. We actually find that the method requires orders of magnitude more simulations to become competitive; see \Cref{app:add_exp_baselines}. ACE performs better than NLE and GBI-SR, but requires large inference time and also fails to include the true parameter in its posteriors, as seen from the posterior plots in \Cref{app:additional_experiments}. 
NSM-Bayes-conj performs the best whilst requiring the least inference time, followed by NSM-Bayes. This shows that our methods can provide robust inferences whilst being sample-efficient, unlike methods like GBI-SR which require advanced MCMC samplers.

\subsection{A radio propagation model with corrupted receiver}
\label{sec:radio}

Finally, we consider a model that simulates the behaviour of radio signals at a receiving antenna \citep{Turin1972, Pedersen2019}, and is used to design and test wireless communication systems in different propagation environments.
The model simulates the scenario where channel transfer function data is being measured at $K$ equidistant points within a frequency bandwidth $B$, resulting in a frequency separation of $\Delta f = B/(K-1)$. Assuming additive white Gaussian measurement noise, the measured data $Y_k$ at the $k^\textup{th}$ point can be written as $Y_k = H_k + Z_k$, $k=0,1,\dots,K-1$,
where $H_k$ is the transfer function at the $k^\textup{th}$ point and $Z_k \sim \mathcal{CN}(0, \sigma_Z^2)$ is the zero-mean complex circular symmetric Gaussian noise. The transfer function is modelled as 
$H_k = \sum_l \alpha_l \exp(-j2\pi \Delta f k \tau_l)$, where $\tau_l$ is the time-delay and $\alpha_l$ is the complex-valued gain of the $l^\textup{th}$ signal component. The arrival time of the delays is modelled as a one-dimensional homogeneous Poisson point process $\tau_l \sim \mathrm{PPP}(\mathbb{R_+}, \lambda)$, with $\lambda>0$ being the arrival rate parameter. The complex-valued gains $\alpha_l$ are conditioned on the delays, and are modelled as iid zero-mean complex Gaussian random variables with conditional variance $\mathbb{E}[\vert \alpha_l \vert^2 | \tau_l] = G_0 \exp(-\tau_l/T)/ \lambda$. Here, $G_0$ is the parameter that governs the starting value of the variance, and the parameter $T$ controls its rate of decay. Since all the parameters are non-negative, we parameterise the model using their log values, i.e., the parameter vector becomes $\theta = [\log G_0, \log T, \log \nu, \log \sigma^2_Z]^\top$.
We convert the frequency-domain signal into a time-domain signal $y(t)$ by taking the inverse Fourier transform as $y(t) = \frac{1}{K} \sum_{k=0}^{K-1} Y_i \exp(j2\pi k \Delta f t)$.
We then summarise the time-domain signal using their log temporal moments \citep{Bharti2021_IEEE}, which are computed as $x_j = \log \int y(t) t^j dt$, $j=0,1, \dots, J$. We set $J=2$, $B=4$ GHz, and $K=801$ similar to \citet{Huang2023}, and place independent Gaussian priors on the log of all the parameters with mean vector $[-19.0, -19.0, 22.0, -22.0]^\top$ and an identity covariance matrix. 

\begin{wrapfigure}[]{r}{0.45\textwidth}
    \centering
    \vspace{-3ex}
    \includegraphics[width=\linewidth]{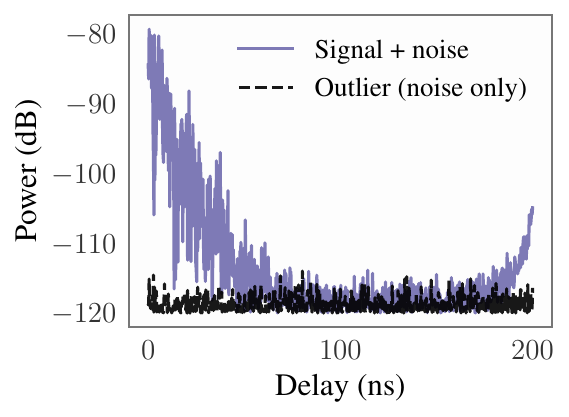}
    \vspace{-4ex}
    \caption{A valid (\legendbox{datacolor}) and an outlying (\legendbox{black}) time series  from the radio propagation model.}
    \label{fig:turin_data}
    \vspace{-2ex}
\end{wrapfigure}

The need for outlier-robust methods  has long been recognised in this field \citep{Savic2015}. Typical causes of outliers are faulty or unstable measurements \citep{quimby2020channel} or destructive interference \citep{schneckenburger2018characterization}. We simulate a scenario where we observe $n=100$ time-series data of which $\epsilon=10\%$ record only noise due to faulty antennas. Example of an uncorrupted and an outlying time-series is in \Cref{fig:turin_data}. 

We simulate $m= 50,000$ samples from the Gaussian prior, and use an MDN for NLE and NSM-Bayes in this example. The pairwise bivariate scatter plots of the posterior samples from NLE, NSM-Bayes, and ACE is shown in \Cref{fig:turin_scatter}, along with the bivariate posterior densities from NSM-Bayes-conj. Again, this simulator is non-differentiable, and the pseudo-marginal MCMC version of GBI-SR does not work well for this problem so we exclude it. We see that the NLE posterior is severely biased, grossly underestimating $\theta_3$ under the presence of outliers. In contrast, the posteriors for NSM-Bayes and NSM-Bayes-conj are robust and centred around the true parameter value. The posterior from ACE is also robust for $\theta_3$, however, it yields a diffuse, and slightly biased marginal posterior for $\theta_2$. The inference times are reported in \Cref{app:additional_experiments}. Thus, NSM-Bayes and NSM-Bayes-conj yield robust posteriors under outlier contamination even for non-differentiable simulators whilst being well-calibrated, demonstrating that our approach can deliver reliable uncertainty quantification.

\begin{figure}
    \centering
    \includegraphics[width=0.95\linewidth]{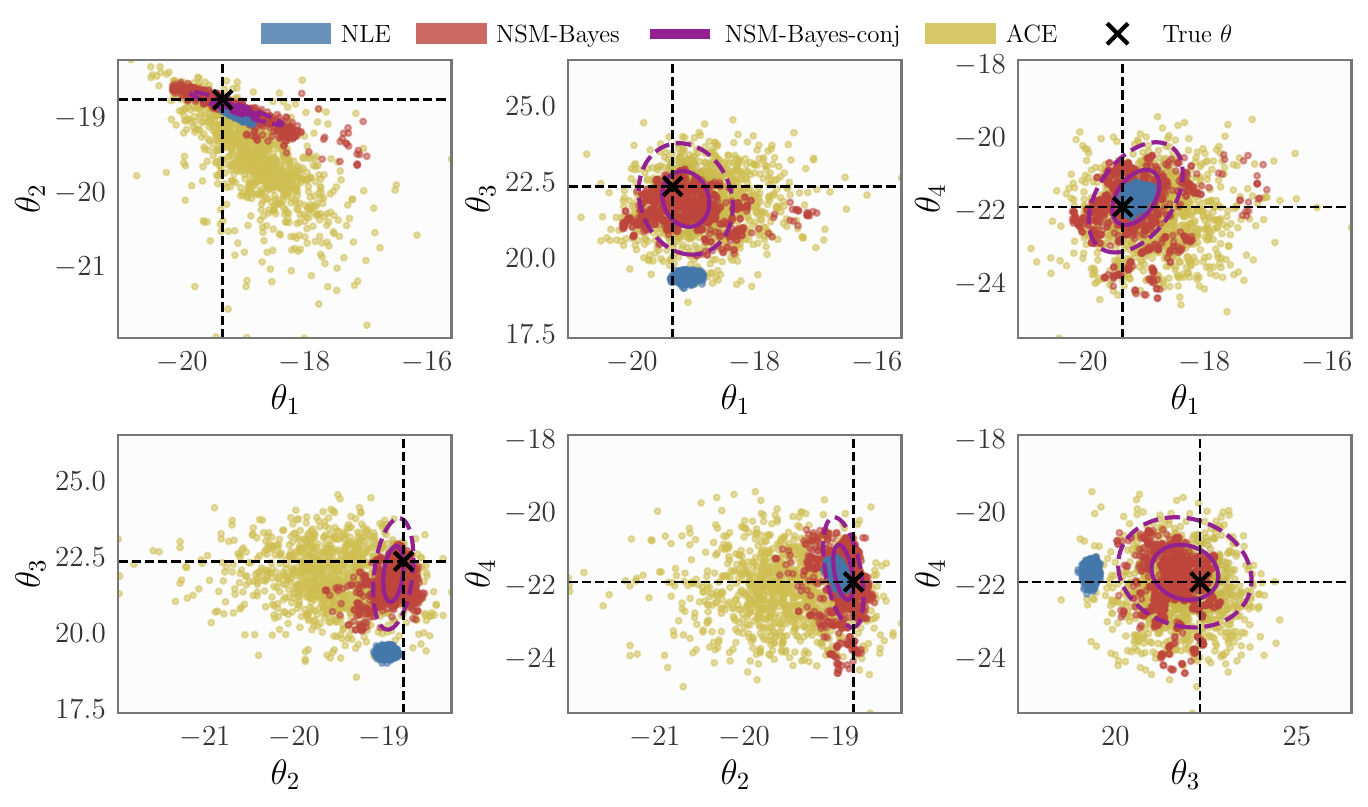}
    \caption{\textbf{Radio propagation model.} Pairwise scatter plots of approximate posterior samples from NLE (\legendbox{nlecolor}), NSM-Bayes (\legendbox{gencolor}), and ACE (\legendbox{acecolor}) for the radio propagation model under 10\% outliers, along with the NSM-Bayes-conj (\legendbox{c1color}) posterior density. The NLE posterior is severely biased, while the NSM-Bayes and the NSM-Bayes-conj posteriors are around the true parameter, and therefore robust to outliers. ACE yields biased posterior for $\theta_2$, but works well for $\theta_3$ and $\theta_4$. ACE also yields fairly robust posterior, but with increased uncertainty, especially for $\theta_2$.}
    \label{fig:turin_scatter}
\end{figure}

\section{Conclusion}
\label{sec:conclusion}

We proposed a novel SBI method based on a neural approximation to SM-Bayes which is provably robust to outliers and applies to arbitrary neural likelihood surrogates and prior distributions. Restricting this flexibility to a specific form of energy-based model with a Gaussian prior yields conjugate posterior updates, enabling fully amortised inference. 

As with all score-based methods, our approach is ill-suited to inference problems with highly multimodal, well-separated likelihoods \citep{Wenliang2020,Zhang2022}. In such cases, the GBI update may miss the full posterior structure, and can also affect training of the neural surrogate in the conjugate setting. Moreover, our formulation assumes continuous data and does not directly handle discrete or structured data. Extensions of score-matching to such settings exist \citep{Hyvarinen2007, Lyu2009,Matsubara2022}, including some permitting conjugacy \citep{laplante2025conjugate}. Integrating these within SBI remains future work. Finally, we focused on outlier robustness, whereas other forms of misspecification, such as distributional shifts or misspecified dependence structures, may also be relevant and warrant investigation.

\section*{Acknowledgements}
The authors thank Conor Hassan for tips on implementation and William Laplante for helpful feedback on a draft.
AB and YH were supported by the Research Council of Finland grant no. 362534. CD and FXB were supported by the EPSRC grant [EP/Y022300/1] and CD was additionally supported by the project `Reliable insights from scientific simulations' funded from the EPSRC grant [UKRI3030].

\bibliography{bibliography}
\bibliographystyle{apalike}

\appendix

\newpage
{
\begin{center}
\Large
    \textbf{Supplementary Materials}
\end{center}
}

\Cref{app:proofs} contains the proofs of our theoretical results, and \Cref{app:experimental-details} consists of the implementation details and additional results for the experiments in \Cref{sec:experiments}.

\section{Proofs}
\label{app:proofs}

We now present our proofs. Section \ref{appendix:proof_conjugacy_prop} proves conjugacy for NSM-Bayes-conj. Section \ref{appendix:proof_non_robust} proves that NLE is not globally-bias robust, whereas Section \ref{appendix:proof_robust} proves robustness of NSM-Bayes in the general setting. Finally, \ref{app:sec:imq} contains proofs of all results associated with the IMQ weight and sub-exponential or Gaussian priors.

\subsection{Proof of \Cref{prop:conjugacy}}
\label{appendix:proof_conjugacy_prop}

\begin{proof}
We aim to derive the conjugate form of $\pi_{\text{NSM}}(\theta | x_{1:n}^o)$ given in \Cref{prop:conjugacy}. Recall that:
\begin{talign*}
   \mathcal{L}_{\text{NSM}}\left(\theta;x_{1:n}^o, \hat \phi_m \right) 
     := \frac{1}{n} \sum_{i=1}^n \left\|W(x_i^o)^\top \nabla_x \log q_{\hat{\phi}_m}(x_i^o|\theta) \right\|_2^2  +  2 \nabla_x \cdot W(x_i^o)W(x_i^o)^\top \nabla_x \log q_{\hat{\phi}_m}(x_i^o|\theta).
\end{talign*}
Substituting $W(x) := w(x) I_{d_\X}$ and setting $q_{\phi}(x|\theta) \propto \exp(T_{\phi}(x)^\top \theta + b_{\phi}(x))$, we get
\begin{align*}
    \mathcal{L}_{\text{NSM}}\left(\theta;x_{1:n}^o, \hat{\phi}_m \right) &= \frac{1}{n} \sum_{i=1}^n \left\Vert w(x_i^o)^\top \left( \nabla_x T_{\hat{\phi}_m}(x_i^o)^\top \theta + \nabla_x b_{\hat{\phi}_m}(x_i^o) \right)\right \Vert_2^2  + 2 \nabla_x \cdot \left(w(x_i^o)^2 \nabla_x b_{\hat{\phi}_m}(x_i^o)^\top\right) \nonumber \\
    & \quad + 2  \nabla_x \cdot \left(w(x_i^o)^2 \nabla_x T_{\hat{\phi}_m}(x_i^o)^\top  \right)\theta 
\end{align*}
We can then express this loss as a quadratic objective in $\theta$:
\begin{talign*}
    \mathcal{L}_{\text{NSM}}\left(\theta;x_{1:n}^o, \hat{\phi}_m \right) = \theta^\top M_1(x_i^o) \theta + 2\theta^\top \ell_1(x_i^o) + \ell_2(x_i^o) + 2\theta^\top M_2(x_i^o) + 2\ell_3(x_i^o), \quad \text{where}
\end{talign*}
\begin{talign*}
    M_1(x_{1:n}^o) &= \frac{1}{n} \sum_{i=1}^n w(x_i^o)^2 \nabla_x T_{\hat{\phi}_m}(x_i^o) \nabla_x T_{\hat{\phi}_m}(x_i^o)^\top, \quad \ell_1 (x_{1:n}^o ) = \frac{1}{n} \sum_{i=1}^n w(x_i^o)^2 \nabla_x T_{\hat{\phi}_m}(x_i^o) \nabla_x b_{\hat{\phi}_m}(x_i^o)\\
    \ell_2 (x_{1:n}^o ) &= \frac{1}{n} \sum_{i=1}^n w(x_i^o)^2 \nabla_x b_{\hat{\phi}_m}(x_i^o)^\top \nabla_x b_{\hat{\phi}_m}(x_i^o), \quad M_2 (x_{1:n}^o ) = \frac{1}{n} \sum_{i=1}^n \nabla_x \cdot \left(w(x_i^o)^2 \nabla_x T_{\hat{\phi}_m}(x_i^o)^\top\right) ^\top \\
    \ell_3 (x_{1:n}^o ) &= \frac{1}{n} \sum_{i=1}^n \nabla_x \cdot \left(w(x_i^o)^2 \nabla_x b_{\hat{\phi}_m}(x_i^o)^\top\right)^\top.
\end{talign*}
For the Gaussian-like prior $\pi(\theta) \propto \exp(\frac{1}{2}(\theta - \mu)^\top \Sigma^{-1}(\theta - \mu))$, we then obtain the conjugate expression for the NSM-Bayes posterior $\pi_{\text{NSM}}(\theta | x_{1:n}^o, \hat{\phi}_m) \propto \mathcal{N}(\mu_{n,m}, \Sigma_{n,m})$ by completing the square:
\begin{talign*} 
\log \pi_{\text{NSM}}&\left(\theta \big| x_{1:n}^o, \hat{\phi}_m \right) = \log \pi(\theta) - \beta n \mathcal{L}_{\text{NSM}}\left(\theta;x_{1:n}^o, \hat{\phi}_m \right) + C \\ 
&= -\frac{1}{2}(\theta - \mu)^\top \Sigma^{-1}(\theta - \mu) - \beta n \left[ \theta^\top M_1\left(x_{1:n}^o \right) \theta + 2\theta^\top \left(\ell_1 \left(x_{1:n}^o \right) + M_2 \left(x_{1:n}^o \right) \right) \right] + C' \\ 
&= -\frac{1}{2}(\theta^\top \Sigma^{-1} \theta - 2\theta^\top \Sigma^{-1}\mu) - \beta n \theta^\top M_1(x_{1:n}^o) \theta - 2\beta n\theta^\top \left(\ell_1(x_{1:n}^o) + M_2(x_{1:n}^o)\right) + C'' \\
&= -\frac{1}{2} \left[ \theta^\top(\Sigma^{-1} + 2\beta n M_1(x_{1:n}^o))\theta - 2\theta^\top\left(\Sigma^{-1}\mu - 2\beta n (\ell_1(x_{1:n}^o) + M_2(x_{1:n}^o)) \right) \right] + C''\\
&= -\frac{1}{2} \left[ \theta^\top \Sigma_{n,m}^{-1} \theta - 2\theta^\top \mu_{n,m} \right] + C''.
\end{talign*}
Here $C,C',C''$ are constants in $\theta$, and 
\begin{talign} \label{eq:closed-form-sigma}
    \Sigma_{n,m}^{-1} &:= \Sigma^{-1} + 2 \beta n M_1(x_{1:n}^o) = \Sigma^{-1} + 2 \beta \sum_{i=1}^n w(x_i^o)^2 \nabla_x T_{\hat{\phi}_m}(x_i^o) \nabla_x T_{\hat{\phi}_m}(x_i^o)^\top,\\
    \mu_{n,m} & := \Sigma_{n,m}\left[ \Sigma^{-1}\mu - 2\beta n M_2(x_{1:n}^o)  - 2\beta n \ell_1(x_{1:n}^o) \right] \label{eq:closed-form-mu}\\
    &= \Sigma_{n,m}\left[ \Sigma^{-1}\mu - 2\beta \sum_{i=1}^n \left(\nabla_x \cdot \left(w(x_i^o)^2 \nabla_x T_{\hat{\phi}_m}(x_i^o)^\top\right)^\top  + w(x_i^o)^2 \nabla_x T_{\hat{\phi}_m}(x_i^o) \nabla_x b_{\hat{\phi}_m}(x_i^o)\right) \right]. \nonumber
\end{talign}
This completes the proof.
\end{proof}

\subsection{Proof of \Cref{theorem:non-robustness-NLE}} \label{appendix:proof_non_robust}
\begin{proof}
Recall the PIF for the NLE posterior:
\begin{talign*}
    \pif_{\Delta}(x^c, \theta, \mathbb{P}_0, \mathcal{L}_{\text{NLE}}) := \frac{d}{d \epsilon} \pi_{\text{NLE}}\left(\theta \Big| \mathbb{P}_{\epsilon, x^c}, \hat \phi_m \right) \bigg|_{\epsilon=0}.
\end{talign*}
     From (17) of \cite{ghosh2016robust},
     we have for the NLE posterior that:
    \begin{talign*}
        &\sup_{\theta \in \Theta, x^c \in \mathcal{X}} \left|\text{PIF}_{\Delta} (x^c,\theta;\mathbb{P}_n, \mathcal{L}_{\text{NLE}}) \right|  \\
        & = \beta n \times \sup_{\theta \in \Theta, x^c \in \mathcal{X}}  \pi_\text{NLE}\left(\theta \big|x_{1:n}^o, \hat{\phi}_m \right) \times \left|- D \mathcal{L}_{\text{NLE}}(x^c;\theta,\mathbb{P}_n) + \int D \mathcal{L}_{\text{NLE}}(x^c;\theta',\mathbb{P}_n) \pi_{\text{NLE}}\left(\theta' \big|x_{1:n}^o, \hat{\phi}_m \right) d\theta'\right| 
    \end{talign*}
    where 
    \begin{talign*}
        D\mathcal{L}_{\text{NLE}}(x^c;\theta,\mathbb{P}_n) &:= \frac{d}{d \epsilon} \left(-\mathcal{L}_{\text{NLE}}(\theta; \mathbb{P}_{\epsilon,x^c}) \right)\bigg|_{\epsilon=0} \\
         &= \frac{d}{d \epsilon} \left((1-\epsilon)\left(\frac{1}{n}\sum_{i=1}^n \log q_{\hat{\phi}_m}(x_i^o | \theta)\right) + \epsilon \left(\log q_{\hat{\phi}_m}(x^c | \theta)\right)\right) \bigg|_{\epsilon=0} \\
         &=  \log q_{ \hat{\phi}_m}(x^c|\theta) - \frac{1}{n}\sum_{i=1}^n \log q_{\hat{\phi}_m}(x_i^o|\theta).
    \end{talign*}
    Instead of considering the supremum over all possible $\theta$ values, we consider the lower bound obtained by fixing $\theta = \theta_{\text{MAP}}$, the maximum \textit{a posteriori} (MAP) estimate of the NLE posterior: 
    \begin{talign*}
         &\sup_{\theta \in \Theta, x^c \in \mathcal{X}} \left|\text{PIF}_{\Delta} (x^c,\theta,\mathbb{P}_n, \mathcal{L}_{\text{NLE}})\right| \\ 
        &\quad \geq C_1  \sup_{ x^c \in \mathcal{X}} \left|- D \mathcal{L}_{\text{NLE}}(x^c;\theta_\text{MAP},\mathbb{P}_n) + \int D \mathcal{L}_{\text{NLE}}(x^c;\theta',\mathbb{P}_n) \pi_{\text{NLE}}\left(\theta' \big|x_{1:n}^o,\hat{\phi}_m \right) d\theta'\right| \\
        &\quad \geq C_1  \left[\sup_{ x^c \in \mathcal{X}} \int D \mathcal{L}_{\text{NLE}}(x^c;\theta',\mathbb{P}_n) \pi_{\text{NLE}}\left(\theta' \big|x_{1:n}^o,\hat{\phi}_m \right) d\theta' - \inf_{ x^c \in \mathcal{X}}  D \mathcal{L}_{\text{NLE}}\left(x^c;\theta_\text{MAP},\mathbb{P}_n\right)\right],  
    \end{talign*}
    where the constant $C_1 = \beta n \times  \pi_\text{NLE}(\theta_\text{MAP}|x_{1:n}^o,\hat{\phi}_m) = \beta n \times \frac{1}{Z}\prod_{i=1}^n q_{\hat{\phi}_m}(x^o_i|\theta_\text{MAP}) \pi(\theta_\text{MAP}) > 0$  (this must be strictly greater than zero since $\theta_{\text{MAP}}$ lies in the support of the posterior), and the second inequality uses $\sup |f(x)-g(x)|\geq \sup f(x) - \inf g(x)$. \\Now we note that we can lower bound this supremum by replacing $x^c$ by $x^o_n$:
    \begin{talign*}
        &\sup_{x^c \in \mathcal{X}} \int D \mathcal{L}_{\text{NLE}}(x^c;\theta',\mathbb{P}_n) \pi_{\text{NLE}}\left(\theta' \big|x_{1:n}^o,\hat{\phi}_m \right) d\theta' \\
        & = 
        \sup_{x^c \in \mathcal{X}} \int \left(\log q_{\hat{\phi}_m}(x^c|\theta') - \frac{1}{n}\sum_{i=1}^n \log q_{\hat{\phi}_m}(x^o_i|\theta')\right) \pi_{\text{NLE}}\left(\theta' \big|x_{1:n}^o,\hat{\phi}_m \right) d\theta' \\
        & \geq 
         \int \left(\log q_{\hat{\phi}_m}(x^o_n|\theta') - \frac{1}{n}\sum_{i=1}^n \log q_{\hat{\phi}_m}(x^o_i|\theta')\right) \pi_{\text{NLE}}\left(\theta' \big|x_{1:n}^o,\hat{\phi}_m \right) d\theta'\\
        &\geq \left(1- \frac{1}{n}\right)  \int \log q_{\hat{\phi}_m}(x^o_n|\theta') \pi_{\text{NLE}}\left(\theta' \big|x_{1:n}^o,\hat{\phi}_m \right) d\theta' -  \frac{1}{n} \sum_{i=1}^{n-1} \int \log q_{\hat{\phi}_m}(x^o_i|\theta')\pi_{\text{NLE}}\left(\theta' \big|x_{1:n}^o,\hat{\phi}_m \right)d\theta'\\
        & > - \infty
    \end{talign*}
    where the final step follows from the fact that based on the assumptions the posterior expectations are finite, i.e. $\int |\log q_{\hat{\phi}_m}(x_i^o | \theta^\prime) | \pi_{\text{NLE}}(\theta^\prime | x_{1:n}^o,\hat{\phi}_m ) d \theta^\prime < \infty$ for $i = 1, \dots, n$. 
    Finally, we have 
    \begin{align*}
        - \inf_{x^c \in \mathcal{X}} D \mathcal{L}_{\text{NLE}}(x^c, \theta_{\text{MAP}}, \mathbb{P}_n) &= - \inf_{x^c \in \mathcal{X}} \log q_{\hat{\phi}_m}(x^c | \theta_{\text{MAP}}) + \frac{1}{n}\sum_{i=1}^n \log q_{\hat{\phi}_m}(x^o_i | \theta_{\text{MAP}}) \rightarrow \infty
    \end{align*}
     since we assumed that $q_{\hat{\phi}_m}$ is a normalised density in $\mathbb{R}^d$ and hence $\inf_{x\in\mathcal{X}} q_{\hat{\phi}_m}(x | \theta_{\text{MAP}}) = 0$. 
     Thus, 
     $-\inf_{x \in \mathcal{X}} \log q_{\hat{\phi}_m}(x | \theta_{\text{MAP}}) = \infty $ and hence
     $\sup_{\theta \in \Theta, x^c \in \mathcal{X}} | \pif_\Delta(x^c, \theta, \mathbb{P}_n, \mathcal{L}_{\text{NLE}})| = \infty$. 
\end{proof}
\subsection{Proof of \Cref{theorem:robustness}} \label{appendix:proof_robust}
\begin{proof}
By Proposition B.1 of \cite{Altamirano2023}, we need to show there exists $\gamma: \Theta \rightarrow \mathbb{R}$ such that: (1) $\sup_{x \in \mathcal{X}} \mathcal{L}_{\text{NSM}}(\theta; x, \hat{\phi}_m) \leq \gamma(\theta)$, (2) $\sup_{\theta \in \Theta} \gamma(\theta) \pi(\theta) < \infty$, and (3)
\item $\int_{\Theta} \gamma(\theta) \pi(\theta) d \theta < \infty$.
Let $S_\theta(x) := \| \nabla_x \log q_{\hat{\phi}_m}(x | \theta) \|_2$ and $H_\theta(x) := | \text{Tr} (\nabla_x^2 \log q_{\hat{\phi}_m}(x | \theta)) |$, then:
\begin{align*}
    \mathcal{L}_{\text{NSM}}\left(\theta ; x, \hat \phi_m \right) &= w(x)^2 S_\theta(x)^2 + 2 \left(\nabla_x w(x)^2 \right)^\top \nabla_x \log q_{\hat{\phi}_m}(x | \theta) + 2 w(x)^2 H_\theta(x).
\end{align*}
Therefore, under the assumptions of the theorem, we have: 
\begin{align*}
    \mathcal{L}_{\text{NSM}}\left(\theta; x, \hat \phi_m \right) &= w(x)^2 S_\theta(x)^2 + 2\left(\nabla_x w(x)^2 \right)^\top \nabla_x \log q_{\hat{\phi}_m}(x | \theta) + 2 w(x)^2 H_\theta(x) \\
    &\leq f(\theta)^2  + 2\left\| \nabla_x w(x)^2 \right\|_2 S_\theta(x) + 2 g(\theta)  \\
    &\leq f(\theta)^2  + 2 f(\theta)  + 2 g(\theta) 
\end{align*}
where we used the Cauchy-Schwarz inequality in the first inequality. 
Therefore, it follows from Assumption \ref{assumption:general-pif} that
\begin{align*}
    \sup_{x} \mathcal{L}_{\text{NSM}}\left(\theta;x, \hat \phi_m \right) \leq f(\theta)^2 +  2 f(\theta) + 2 g(\theta) < \infty 
\end{align*}
and therefore conditions 2 and 3 above hold with $\gamma(\theta) := f(\theta)^2 +  2 f(\theta) + 2 g(\theta)$.
The result then follows from Proposition B.1. of \cite{Altamirano2023}. 
\end{proof}

\subsection{Robustness with the IMQ weights and sub-exponential priors} \label{app:sec:imq}
In this section, we prove that under the simplified Assumption \ref{assumption:imq-pif}, global-bias robustness holds and verify that these assumptions hold in the cases of mixture density models and masked autoregressive flows.

\subsubsection{Proof of Corollary \ref{cor:imq}} \label{appendix:proof:cor}

\begin{proof}
Similarly to the proof of Theorem \ref{theorem:robustness}, we prove the result using Proposition B.1 of \cite{Altamirano2023}. Following this result, it is sufficient to show that there exists a function $\gamma: \Theta \rightarrow \mathbb{R}$ such that: (1) $\sup_{x \in \mathcal{X}} \mathcal{L}_{\text{NSM}}(\theta; x,\hat{\phi}_m) \leq \gamma(\theta)$, (2)  $\sup_{\theta \in \Theta} \gamma(\theta) \pi(\theta) < \infty$, (3) $\int_{\Theta} \gamma(\theta) \pi(\theta) d \theta < \infty$.

For simplicity let $S_\theta(x) := \| \nabla_x \log q_{\hat{\phi}_m}(x | \theta) \|_2$ and $H_\theta(x) := | \text{Tr} (\nabla_x^2 \log q_{\hat{\phi}_m}(x | \theta)) |$. Then using the Cauchy-Schwarz inequality we have: 
\begin{align} \label{eq:lnsm}
    \mathcal{L}_{\text{NSM}}\left(\theta ; x, \hat \phi_m \right) &= w_{\zeta}(x)^2 S_\theta(x)^2 + 2 \left(\nabla_x w_{\zeta}(x)^2 \right)^\top \nabla_x \log q_{\hat{\phi}_m}(x | \theta) + 2 w_{\zeta}(x)^2 H_\theta(x) \nonumber\\
    &\leq w_{\zeta}(x)^2 S_\theta(x)^2 + 2 \left\|\nabla_x w_{\zeta}(x)^2 \right\|_2 S_\theta(x) + 2 w_{\zeta}(x)^2 H_\theta(x).
\end{align}
We first make use of the specific form of the IMQ weight to bound all terms relating to $w_{\zeta}(x)$ in terms of a function of $\| x\|_2$. We start with bounding the $\| \nabla_x w_{\zeta}(x)^2\|_2$ term. 
Recall that  
$
    w_{\zeta}(x) = ( 1 + r(x)^2 )^{- 1/\zeta}
$
for $\zeta > 0$ where $r(x)^2 = \| x - \hat{\nu}_n \|^2_{\hat{\Xi}_n^{-1}}$. 
Then 
\begin{talign*}
    \nabla_x w_{\zeta}(x)^2 &= -  \frac{2}{\zeta}\left(1 +  r(x)^2 \right)^{-\frac{2}{\zeta}-1} \nabla_x \left( r(x)^2\right) = - \frac{4}{\zeta} \left(1 + r(x)^2 \right)^{-\frac{2}{\zeta}-1}  \hat{\Xi}_n^{-1} (x - \hat{\nu}_n) 
\end{talign*}
and hence by letting $\lambda_{\text{max}}(\hat{\Xi}_n^{-1})$ denote the largest eigenvalue of $\hat{\Xi}_n^{-1}$ (which is positive since $\hat{\Xi}_n^{-1}$ is positive-definite) we obtain: 
\begin{talign} \label{eq:bound-on-w}
    \left\| \nabla_x w_{\zeta}(x)^2 \right\|_2 \leq \frac{4}{\zeta} \left(1 + r(x)^2 \right)^{- \frac{2}{\zeta}-1} \left\|\hat{\Xi}_n^{-1} (x - \hat{\nu}_n) \right\|_2 
    &\leq \frac{4}{\zeta} \left(1 + r(x)^2\right)^{- \frac{2}{\zeta}-1} \left\| \hat{\Xi}_n^{-1} \right\|_{\text{op}} \left\| x - \hat{\nu}_n \right\|_2 \nonumber\\
    &\leq \frac{4}{\zeta} \left(1 + r(x)^2\right)^{- \frac{2}{\zeta}-1} \lambda_{\text{max}}(\hat{\Xi}_n^{-1}) \left\| x - \hat{\nu}_n \right\|_2 \nonumber\\
    &= \frac{4}{\zeta} w_{\zeta}(x)^{2 + \zeta} \lambda_{\text{max}}(\hat{\Xi}_n^{-1}) \left\| x - \hat{\nu}_n \right\| _2 
\end{talign}
where the second inequality follows from the definition of the operator norm, the third inequality follows from the fact that $\hat \Xi_n^{-1}$ is a symmetric, positive definite matrix and the last equality follows from the fact that 
\begin{talign*}
\left(1 + r(x)^2\right)^{- \frac{2}{\zeta} - 1} = \left(w_{\zeta}(x)^{- \zeta} \right)^{- \frac{2}{\zeta} - 1} = w_{\zeta}(x)^{2 + \zeta}.
\end{talign*}
Now by the Rayleigh quotient bounds \citep[][Theorem 4.2.2]{horn2012matrix} it follows that $r(x)^2 = \| x - \hat{\nu}_n \|^2_{\hat{\Xi}_n^{-1}} = (x-\hat{\nu}_n)^\top \hat{\Xi}_n^{-1}(x - \hat{\nu}_n) \geq \lambda_{\text{min}}(\hat{\Xi}_n^{-1}) \| x - \hat{\nu}_n \|^2_2 $ and hence $\| x - \hat{\nu}_n \|_2 \leq r(x)/(\lambda_{\text{min}}(\hat{\Xi}_n^{-1}))^{1/2} $ where $\lambda_{\text{min}}(\hat{\Xi}_n^{-1})$ denotes the smallest eigenvalue of $\hat{\Xi}_n^{-1}$ which satisfies $\lambda_{\text{min}}(\hat{\Xi}_n^{-1}) > 0$ since $\hat{\Xi}_n^{-1}$ is positive definite. Therefore, by using this in \Cref{eq:bound-on-w} we obtain:
\begin{talign*}
    \left\| \nabla_x w_{\zeta}(x)^2 \right\|_2 &\leq 4 \frac{\lambda_{\text{max}}(\hat{\Xi}_n^{-1})}{\sqrt{\lambda_{\text{min}}(\hat{\Xi}_n^{-1})}} r(x) w_{\zeta}(x)^{2+\zeta} \leq \underbrace{4 \frac{\lambda_{\text{max}}(\hat{\Xi}_n^{-1})}{\sqrt{\lambda_{\text{min}}(\hat{\Xi}_n^{-1})}}}_{:= C(\hat{\Xi}_n^{-1}) < \infty}   w_{\zeta}(x)^{2+\frac{\zeta}{2}}
\end{talign*}
since $r(x) = (w_{\zeta}(x)^{-\zeta} - 1)^{1/2} \leq (w_{\zeta}(x)^{-\zeta})^{1/2} = w_{\zeta}(x)^{\zeta/2}$.
Substituting in (\ref{eq:lnsm}) we obtain:
\begin{align} \label{eq:lnsm-updated}
    \mathcal{L}_{\text{NSM}}\left(\theta; x, \hat{\phi}_m \right) \leq w_{\zeta}(x)^2 S_\theta(x)^2 + 4 C(\hat{\Xi}_n^{-1}) w_{\zeta}(x)^{2 + \frac{\zeta}{2}} S_\theta(x) + 2 w_{\zeta}(x)^2 H_\theta(x).
\end{align}
We now proceed by bounding the terms in (\ref{eq:lnsm-updated}) involving $w_{\zeta}(x)^2$. 
Under Assumption \ref{assumption:imq-pif}, we have that: $S_\theta(x) \leq K_1 (1 + \| \theta \|^{k_1}) ( 1 + \|x\|^{2/\zeta}_2)$ and $H_\theta(x) \leq K_2 (1 + \| \theta \|^{k_2}) ( 1 + \|x\|_2^{4/\zeta})$ hence we aim to bound $1 + \| x\|_2^{2/\zeta}$ and $1 + \|x\|_2^{4/\zeta}$ in terms of $w_{\zeta}(x)$. 
To achieve this, we study the behaviour of $w_{\zeta}(x) = (1+ r(x)^2)^{-1/\zeta}$ relative to $\|x\|_2$ separately when $\|x\|_2 \leq 1$ and $\|x\|_2 \geq 1$. 

First consider the case $\|x\|_2 \geq 1$.
As previously stated, since $\hat{\Xi}_n^{-1}$ is positive definite, its eigenvalues satisfy $0 < \lambda_{\text{min}}(\hat{\Xi}_n^{-1}) \leq \lambda_{\text{max}}(\hat{\Xi}_n^{-1})$. Therefore, for any $x \in \mathbb{R}^{d_\mathcal{X}}$ we have:
\begin{align} \label{eq:r-ineq}
    \| x\|_{\hat{\Xi}_n^{-1}}^2 = x^\top \hat{\Xi}_n^{-1} x \leq  \lambda_{\text{max}}(\hat{\Xi}_n^{-1}) \|x\|_2^2
\quad \text{and hence,} \quad  
    r(x)^2  \leq     \lambda_{\text{max}}(\hat{\Xi}_n^{-1}) \left\|x - \hat{\nu}_n \right\|_2^2.
\end{align}
From (\ref{eq:r-ineq}) and using the triangle inequality we have: 
\begin{talign*}
    r(x)^2 \leq \lambda_{\text{max}}(\hat{\Xi}_n^{-1}) \left\|x - \hat{\nu}_n \right\|_2^2 \leq \lambda_{\text{max}}(\hat{\Xi}_n^{-1}) \left(\|x\|_2^2 + \|\hat{\nu}_n\|_2^2 \right) & \leq \lambda_{\text{max}}(\hat{\Xi}_n^{-1}) \left(\| x\|_2^2 + \|\hat{\nu}_n\|_2^2\| x\|_2^2\right) \\
    &:= C_1 \|x\|_2^2
\end{talign*}
where $C_1 := \lambda_{\text{max}}(\hat{\Xi}_n^{-1})\left(1 + \|\hat{\nu}_n\|_2^2\right) < \infty$.
Substituting this into the definition of $w_{\zeta}$ we have:
\begin{align*}
    w_{\zeta}(x) = \left(1 + r(x)^2\right)^{-\frac{1}{\zeta}} \leq \left(r(x)^2\right)^{-\frac{1}{\zeta}} \leq C_1^{- \frac{1}{\zeta}} \|x\|_2^{-\frac{2}{\zeta}}
\end{align*}
which yields:
\begin{align} \label{eq:ineq1}
 1+\|x\|_2^{\frac{2}{\zeta}} \leq C_1^{-\frac{1}{\zeta}}  w_{\zeta}(x)^{-1}+1,
    \quad  1+\|x\|_2^{\frac{4}{\zeta}} \leq C_1^{-\frac{2}{\zeta}}  w_{\zeta}(x)^{-2} + 1. 
\end{align}
Now consider the case $\| x\|_2 \leq 1$. Since $w(x) \leq 1$ we have:
\begin{align} \label{eq:ineq2}
     1 + \|x\|_2^{\frac{2}{\zeta}} \leq 2 \leq  w_{\zeta}(x)^{-1} + 1, \quad  1 + \|x\|_2^{\frac{4}{\zeta}} \leq 2 \leq w_{\zeta}(x)^{-2} + 1.
\end{align}
By letting $C_2 := \max\{1, C_1^{-1/\zeta}\}$ and $C_3 := \max\{1, C_1^{-2/\zeta}\}$ and using (\ref{eq:ineq1}) and (\ref{eq:ineq2}), we have that for all $x \in \mathbb{R}^{d_{\mathcal{X}}}$:
\begin{align} \label{eq:final-ineq}
      1 + \|x\|_2^{\frac{2}{\zeta}} \leq C_2 w_{\zeta}(x)^{-1} + 1, \quad   \|x\|_2^{\frac{4}{\zeta}} \leq C_3 w_{\zeta}(x)^{-2} + 1.
\end{align}
Substituting (\ref{eq:final-ineq}) in (\ref{eq:lnsm-updated}) and using Assumption \ref{assumption:imq-pif} we obtain: 
\begin{talign*}
    \mathcal{L}_{\text{NSM}}\left(\theta; x, \hat \phi_m \right) &\leq  w_{\zeta}(x)^2 S_\theta(x)^2 + 4 C(\hat{\Xi}_n^{-1}) w_{\zeta}(x)^{2 + \frac{\zeta}{2}} S_\theta(x) + 2 w_{\zeta}(x)^2 H_\theta(x) \\
    &\leq w_{\zeta}(x)^2 K_1^2 \left(1 + \| \theta \|^{2k_1}\right) \left(1+\|x\|_2^{\frac{4}{\zeta}}\right) +  4C(\hat{\Xi}_n^{-1}) w_{\zeta}(x)^{2 + \frac{\zeta}{2}} K_1 \left(1 + \| \theta \|^{k_1}\right) \left(1+\| x\|_2^{\frac{2}{\zeta}}\right) \\
    &\quad \quad+ 2 w_{\zeta}(x)^2 K_2 \left(1 + \| \theta \|^{k_2}\right) \left(1+\|x\|_2^{\frac{4}{\zeta}}\right) \\
    &\leq K_1^2 \left(1 + \| \theta \|^{2k_1}\right) \left(C_3w_{\zeta}(x)^{-2} w_{\zeta}(x)^2 + w_{\zeta}(x)^2\right) \\
    &\quad + 4 C(\hat{\Xi}_n^{-1}) K_1 \left(1 + \| \theta \|^{k_1}\right) \left(w_{\zeta}(x)^{2 + \frac{\zeta}{2}} C_2 w_{\zeta}(x)^{-1} + w_{\zeta}(x)^{2 + \frac{\zeta}{2}} \right) \\
    &\quad \quad + 2  K_2 \left(1 + \| \theta \|^{k_2}\right) \left( C_3w_{\zeta}(x)^{-2} w_{\zeta}(x)^2 + w_{\zeta}(x)^2\right) \\
    &= K_1^2 \left(1 + \| \theta \|^{2k_1}\right) \left(C_3+ w_{\zeta}(x)^2\right)  4 C(\hat{\Xi}_n^{-1}) K_1 \left(1 + \| \theta \|^{k_1}\right) \left(w_{\zeta}(x)^{1 + \frac{\zeta}{2}} C_2 + w_{\zeta}(x)^{2 + \frac{\zeta}{2}} \right) \\
    &\quad \quad + 2  K_2 \left(1 + \| \theta \|^{k_2}\right) \left( C_3 + w_{\zeta}(x)^2\right).
\end{talign*}
Since $\zeta > 0$ we have $\sup_x w_{\zeta}(x)^{1+\zeta/2} = \sup_x w_{\zeta}(x)^{2+\zeta/2} = \sup_x w_{\zeta}(x)^{2} = 1$ so  using the upper bound above we obtain:
\begin{talign*}
    \sup_{x} \mathcal{L}_{\text{NSM}}\left(\theta;x, \hat \phi_m \right) &\leq K_1^2 (C_3+1) \left(1 + \| \theta \|^{2k_1}\right) + 4 (C_2+1) C(\hat{\Xi}_n^{-1}) K_1 \left(1 + \| \theta \|^{k_1}\right)  \\
    &\quad + 2 (C_3+1) K_2 \left(1 + \| \theta \|^{k_2}\right) \\
    &:= \gamma(\theta) < \infty. 
\end{talign*}
Take $k := \max\{2k_1, k_2\}$ then 
$\sup_\theta | \gamma(\theta)| \lesssim (1 + \| \theta \|^k)$. Moreover, since $\pi(\theta)$ has infinitely many moments, we have that: $\int_{\Theta} \pi(\theta) \gamma(\theta) d \theta < \infty$.
Moreover, by assumption for $\| \theta \| \geq R$, $\pi(\theta) \leq C \exp(- c \| \theta \|)$ hence:
\begin{align*}
    \pi(\theta) \gamma(\theta) \lesssim \left(1 + \| \theta\|^k\right) C \exp\left(- c \| \theta \|\right) < \infty. 
\end{align*}
On the ball $ \{ \theta: \| \theta \| \leq R\}$ we have that $\gamma(\theta) \lesssim (1 + R^k)$ and $\pi(\theta)$ is finite since it is a density hence $\sup_{\theta} \gamma(\theta) \pi(\theta) 
    < \infty$.
The result follows from Proposition B.1. of \cite{Altamirano2023}.
\end{proof}
\subsubsection{Proof of Proposition \ref{proposition:mixture_density_network} (robustness for Gaussian Mixture-density Networks)}\label{appendix:subsec:gaussian}

\begin{proof} 
For simplicity, we drop the dependence on $\theta$ and $\hat{\phi}_m$ and denote $\vartheta_k := \vartheta_k(\theta; \hat{\phi}_m)$ and $\Omega_k := \Omega_k(\theta; \hat{\phi}_m)$. Let $f_k(x):= \mathcal{N}(x; \vartheta_k, \Omega_k)$ denote the $k$-th Gaussian density. Then the score of each density is
$
    \nabla_x \log f_k(x) = - \Omega_k^{-1}(x-\vartheta_k) 
$
and
\begin{talign*}
    \left\| \nabla_x \log f_k(x) \right\|_2 = \left\|  - \Omega_k^{-1}(x-\vartheta_k) \right\|_2 
    \leq \left\| \Omega_k^{-1} \right\|_2 \left\| x- \vartheta_k \right\|_2 
    \leq \lambda_{\text{max}}\left(\Omega_k^{-1}\right) \left(\| x \|_2 + \| \vartheta_k \|_2 \right).
\end{talign*}
Since $\Omega_k$ is symmetric positive-definite, it follows that $\lambda_{\text{max}}(\Omega_k^{-1}) = 1/\lambda_{\text{min}}(\Omega_k)$ and by Assumption of Proposition \ref{proposition:mixture_density_network} we have that for all $k \in \{1,\dots,K\}$, $\lambda_{\text{max}}(\Omega_k^{-1})  = 1/\lambda_{\text{min}}(\Omega_k)\leq 1/\lambda_{\text{min}}$. Hence, for the mixture, it follows that:
\begin{talign*}
   \left\|  \nabla_x \log q_{\hat{\phi}_m} (x | \theta)\right\|_2 = \left\| \frac{\sum_{k=1}^K \omega_k(\theta; \hat{\phi}_m) \nabla_x f_k(x)}{\sum_{k=1}^K \omega_k(\theta; \hat{\phi}_m) f_k(x)} \right \|_2 
  & = \left \| \sum_{k=1}^K \rho_k(x, \theta)\nabla_x \log f_k(x) \right \|_2 \\
  &\leq \sum_{k=1}^K \rho_k(x,\theta) \left\| \nabla_x \log f_k(x)\right\| \\ 
   &\leq \frac{1}{\lambda_{\text{min}}} \left(\| x\|_2 + \max_k \| \vartheta_k \|_2 \right)
\end{talign*}
where $\rho_k(x ; \theta) := \omega_k(\theta; \hat{\phi}_m) f_k(x)/\sum_{j=1}^K \omega_j(\theta,\hat{\phi}_m) f_j(x) $. 
Let $h(\theta) := \frac{1}{\lambda_{\text{min}}} (1 + \max_k \| \vartheta_k \|_2)$, then:
\begin{align*}
    \left\| \nabla_x \log q_{\hat{\phi}_m}(x | \theta) \right\|_2 \leq \frac{1}{\lambda_{\text{min}}} \left(\| x\|_2 + \max_k \| \vartheta_k \|_2 \right) &\leq \frac{1}{\lambda_{\text{min}}} \left(1+\| x\|_2 \right)\left(1+\max_k \| \vartheta_k \|_2 \right) \\&= h(\theta) \left(1 + \| x\|_2\right).
\end{align*}
Thus, for any $\zeta \leq 2$, there exists $\tilde{C} < \infty$ such that for $\tilde{h}(\theta) := \tilde{C} h(\theta)$, we have 
\begin{align} \label{eq:norm-nabla}
    \left\| \nabla_x \log q_{\hat{\phi}_m}(x | \theta) \right\|_2 \leq h(\theta) \left(1 + \| x\|_2\right) \leq \tilde{h}(\theta) \left(1 + \|x\|_2^{\frac{2}{\zeta}}\right).
\end{align}
for all $x \in \mathbb{R}^{d_{\mathcal{X}}}$. Furthermore, we have that $\nabla_x^2 \log f_k(x) = - \Omega_k^{-1}$ hence standard operations yield:
\begin{align*}
    \nabla_x^2 \log q_{\hat{\phi}_m}(x) = \sum_{k=1}^K \rho_k(x; \theta) \nabla_x^2 \log f_k(x) + \text{Cov}_\rho\left(\nabla_x \log f_k(x)\right)
\end{align*}
where $\text{Cov}_\rho$ denotes the covariance under the weights $\rho_k(x; \theta)$. Therefore, 
\begin{talign*}
    \left |\text{Tr} \nabla_x^2 \log q_{\hat{\phi}_m}(x | \theta) \right| &\leq \left|\sum_{k=1}^K \rho_k(x; \theta) \text{Tr} \Omega_k^{-1} \right| + \left| \text{Tr} \text{Cov}_\rho\left(\nabla_x \log f_k(x)\right)\right| \\
    &\leq \sum_{k=1}^K \rho_k(x; \theta) \left | \text{Tr} \Omega_k^{-1}\right| + \text{Var}_\rho\left(\nabla_x \log f_k(x)\right) \\
    &\leq \frac{d_{\mathcal{X}}}{\lambda_{\text{min}}} + \text{Var}_\rho\left(\nabla_x \log f_k(x)\right)
\end{talign*}
where the first inequality follows from the fact that trace is a linear operation, the second follows from the fact that $\left | \text{Tr} \text{Cov}_\rho(\nabla_x \log f_k)\right| \leq \sum_k \rho_k \| \nabla_x \log f_k(x) \|^2 $ and the third follows from the fact that the trace is equal to the sum of its eigenvalues. 
From \eqref{eq:norm-nabla}, we have that for any $\zeta \leq 2$:
\begin{talign*}
    \text{Var}_\rho\left(\nabla_x \log f_k(x)\right) \leq h(\theta)^2\left(1+\| x\|_2\right)^2 \leq 2 h(\theta)^2\left(1 +  \|x\|_2^2\right) \leq 2 \tilde{h}(\theta)^2 \left(1 + \|x\|^{\frac{4}{\zeta}}\right).
\end{talign*}
Therefore $\left|\text{Tr} \nabla_x^2 \log q_{\hat{\phi}_m}(x | \theta) \right| \leq (C_1 + 2\tilde{h}(\theta)^2) (1 + \| x\|_2^{4/\zeta})$ for $C_1 := d_{\mathcal{X}}/\lambda_{\text{min}}$. 
Since $\tilde{h}(\theta) \propto 1/\lambda_{\text{min}}(1 + \max_k \| \vartheta_k \|_2)$, Assumption \ref{assumption:imq-pif} holds since $\max_k \| \vartheta_k(\theta) \|_2$ has polynomial growth by assumption. 
\end{proof}
\subsubsection{Proof of Proposition \ref{thm:masked_autoregressive_flow} (robustness for Masked Autoregressive Flows)} \label{appendix:proof:maf}

\begin{proof}
First we express the score and Hessian of $q_{\hat{\phi}_m}(x|\theta)$ in terms of $\mu_{\hat{\phi}_m,i}, \sigma_{\hat{\phi}_m,i}$ and their derivatives up to second order. Then we derive bounds for $\mu_{\hat{\phi}_m,i}, \sigma_{\hat{\phi}_m,i}$ and their first two derivatives which we then substitute into the norm of the score and trace expressions to verify that their growth is in accordance with Assumption \ref{assumption:imq-pif} for certain values of $\zeta$. 

We want to show that 
there exists $0 < K_1, K_2 < \infty$ and $k_1, k_2 \geq 0$
such that $\forall x \in \mathcal{X}$:
\begin{talign*}
    \| \nabla_x \log q_{\hat{\phi}_m}(x | \theta) \|_2 &\leq K_1 \left(1 + \| \theta \|^{k_1}\right) \left(1 + \| x \|_2^{\frac{2}{\zeta}}\right)
    \\
   | \text{Tr} \nabla^2_x \log q_{\hat{\phi}_m}(x | \theta)| &\leq 
   K_2 \left(1 + \| \theta \|^{k_2} \right)
   \left( 1 + \|x\|^{\frac{4}{\zeta}}_2 \right).
\end{talign*}
From the definition of the normalising flow and the fact that $p_z(z) = \mathcal{N}(0,I)$ we have that:
\begin{talign*}
    \log q_{\hat{\phi}_m}(x | \theta) &= \sum_{i=1}^{d_{\mathcal{X}}} \left[\log p_z(z_i) - \log \sigma_{\hat{\phi}_m,i}(x_{<i}, \theta) \right] = \sum_{i=1}^{d_{\mathcal{X}}} \left[- \frac{1}{2} z_i^2 - \frac{1}{2} \log (2 \pi) - \log \sigma_{\hat{\phi}_m,i}(x_{<i, \theta}) \right].
\end{talign*}
For each $j = 1, \dots, d_{\mathcal{X}}$ we have that:
\begin{talign} \label{eq:grad-log-q}
    \frac{\partial}{\partial x_j} \log q_{\hat{\phi}_m}(x | \theta) = \sum_{i=1}^{d_{\mathcal{X}}} - z_i \frac{\partial z_i}{\partial x_j} - \frac{1}{\sigma_{\hat{\phi}_m,i}(x_{<i, \theta})} \frac{\partial \sigma_{\hat{\phi}_m,i}(x_{<i, \theta})}{\partial x_j} 
\end{talign}
and using the product and chain rule we further obtain:
\begin{talign*}
    &\frac{\partial^2}{\partial x_j^2} \log q_{\hat{\phi}_m}(x | \theta)= \sum_{i=1}^{d_{\mathcal{X}}} - z_i \frac{\partial^2 z_i}{\partial x_j^2} - \left(\frac{\partial z_i}{\partial x_j} \right)^2 - \frac{1}{\sigma_{\hat{\phi}_m,i}(x_{<i,\theta})} \frac{\partial^2 \sigma_{\hat{\phi}_m,i}(x_{<i, \theta})}{\partial x_j^2} - \frac{1}{\sigma_{\hat{\phi}_m,i}(x_{<i,\theta})^2} \left(\frac{\partial \sigma_{\hat{\phi}_m,i}(x_{<i, \theta})}{\partial x_j} \right)^2.
\end{talign*}
Therefore, 
\begin{talign} \label{eq:grad}
    \left\| \nabla_{x} \log q_{\hat{\phi}_m}(x | \theta) \right\|_2 &=  \left\| \sum_{i=1}^{d_{\mathcal{X}}} - z_i \nabla_x z_i - \frac{1}{\sigma_{\hat{\phi}_m,i}(x_{<i, \theta})} \nabla_x \sigma_{\hat{\phi}_m,i}(x_{<i, \theta}) \right \|_2, 
\\ \label{eq:hess}
    \left|\text{Tr} \nabla_x^2 \log q_{\hat{\phi}_m}(x | \theta) \right| &= 
    \sum_{j=1}^{d_{\mathcal{X}}} \left[ 
    \sum_{i=1}^{d_{\mathcal{X}}} - z_i \frac{\partial^2 z_i}{\partial x_j^2} - \left(\frac{\partial z_i}{\partial x_j} \right)^2 - \frac{1}{\sigma_{\hat{\phi}_m,i}(x_{<i,\theta})} \frac{\partial^2 \sigma_{\hat{\phi}_m,i}(x_{<i, \theta})}{\partial x_j^2} \right.\nonumber\\
    &\left. \quad- \frac{1}{\sigma_{\hat{\phi}_m,i}(x_{<i,\theta})^2} \left(\frac{\partial \sigma_{\hat{\phi}_m,i}(x_{<i, \theta})}{\partial x_j} \right)^2 \right].
\end{talign}
We now bound all individual terms in  the expressions in (\ref{eq:grad}) and (\ref{eq:hess}).
Since $z_i = (x_i - \mu_{\hat{\phi}_m,i}(x_{<i},\theta))/\sigma_{\hat{\phi}_m,i}(x_{<i, \theta})$ it follows that:
\begin{talign} \label{eq:z-der}
    \frac{\partial z_i}{\partial x_j} &= 
    \begin{cases}
        \frac{1}{\sigma_{\hat{\phi}_m,i}(x_{<i},\theta)} & \text{if} \quad j=i\\
        - \frac{1}{\sigma_{\hat{\phi}_m,i}} \frac{\partial \mu_{\hat{\phi}_m,i}}{\partial x_j} - \frac{x_i - \mu_{\hat{\phi}_m,i}}{\sigma^2_{\hat{\phi}_m,i}} \frac{\partial \sigma_{\hat{\phi}_m,i}}{\partial x_j}&\text{if} \quad j <i \\
        0 & \text{if} \quad j > i 
    \end{cases} \quad \text{and}
\\\label{eq:z-2der}
    \frac{\partial^2 z_i}{\partial x_j^2} &=  \begin{cases}
         \frac{1}{\sigma_{\hat{\phi}_m,i}^2} \frac{\partial \mu_{\hat{\phi}_m,i}}{\partial x_j} - \frac{1}{\sigma_{\hat{\phi}_m,i}} \frac{\partial^2 \mu_{\hat{\phi}_m,i}}{\partial x_j^2}
         - \frac{1}{\sigma^2_{\hat{\phi}_m,i}} \frac{\partial \sigma_{\hat{\phi}_m,i}}{\partial x_j} - \frac{x_i - \mu_{\hat{\phi}_m,i}}{\sigma^2_{\hat{\phi}_m,i}} \frac{\partial^2 \sigma_{\hat{\phi}_m,i}}{\partial x_j^2}
         &\text{if} \quad j <i \\
        0 & \text{otherwise}
    \end{cases}.
\end{talign}
Hence we need to bound terms involving $\mu_{\hat{\phi}_m,i}, \sigma_{\hat{\phi},m}$ and their first and second derivatives.
Since for each coupling layer, the networks computing $\mu_{\hat{\phi}_m,i}$ and $\sigma_{\hat{\phi}_m,i}$ are MLPs built from affine layers with $\tanh$ activations and an additional $\text{softplus}$ at the final output for $\sigma_{\hat{\phi}_m,i}$, $\theta$ enters only in the first layer via:
\begin{align*}
    h_1(x,\theta) = \tanh \big(W_1 x + \tilde{g}(\theta)\big), \quad \tilde{g}(\theta) = \tanh\left(V_0 \theta + b_0\right), 
\end{align*}
whereas subsequent layers take the form: 
\begin{align*}
    h_{l+1}(x,\theta) = \tanh(W_l h_l(x,\theta)+b_l), \quad \text{for $l = 2, \dots, L$}.
\end{align*}
Since $\tanh$ is a bounded function, all intermediate layers are bounded. 
Hence, outputs $\mu_{\hat{\phi}_m,i}, s_{\hat{\phi}_m,i}$ are affine functions of $h_L$ and the scale is $\sigma_{\hat{\phi}_m,i} = \text{softplus}(s_{\hat{\phi}_m,i})$. Therefore, for $k=1,2$:
\begin{talign} \label{eq:mu-der}
    \left|\mu_{\hat{\phi}_m,i}(x_{<i},\theta) \right| \leq C, \quad \left\|\frac{\partial^k \mu_{\hat{\phi}_m,i}}{\partial x^k_j} \right\| \leq C_k, \quad \text{for $j = 1, \dots, d_{\mathcal{X}}$}
\end{talign}
and for some constants $C, C_1, C_2 < \infty$ independent of $\theta$. For $\sigma_{\hat{\phi}_m,i}$ we have that 
\begin{talign*}
    s_{\hat{\phi}_m,i}(x_{<i}, \theta) = a_i^\top h_L(x_{<i},\theta) + c_i , \quad \sigma_{\hat{\phi}_m,i}(x_{<i}, \theta) = \text{softplus}(s_{\hat{\phi}_m,i})
\end{talign*}
and since $\| h_L \|_{\infty} \leq 1$ (since $\text{tanh}$ is a bounded function) it follows that:
\begin{talign} \label{eq:s-bound}
    \left|s_{\hat{\phi}_m,i}(x_{<i}, \theta) \right | \leq \| a_i \|_1 + |c_i| := S_i.
\end{talign}
Since $\text{softplus}(t) := \log(1 + e^t)$ we have that $\max\{0,t\} \leq \text{softplus}(t) \leq \max\{0,t\} + \log 2$ and since $- S_i \leq s_{\hat{\phi}_m,i}(x_{<i}, \theta) \leq S_i$ from (\ref{eq:s-bound}), it follows that 
\begin{talign} \label{eq:sigma-bound}
    0 < \sigma_i^L := \text{softplus}(-S_i) \leq \sigma_{\hat{\phi}_m,i}(x_{<i},\theta)\leq \text{softplus}(S_i) := \sigma_i^U < \infty.
\end{talign}
Since the softplus function has globally bounded first and second derivatives, namely
\begin{talign*}
    \frac{\partial \text{softplus}(t)}{\partial t} = \frac{1}{1+e^{-t}} \in (0,1), \quad \frac{\partial^2 \text{softplus}(t)}{\partial t^2} = \frac{e^{-t}}{(1+e^{-t})^2} \leq \frac{1}{4},
\end{talign*}
it follows by the chain rule that the first and second derivatives of $\sigma_{\hat{\phi}_m,i}$ are also bounded, i.e. 
\begin{talign} \label{eq:sigma-der}
    \left\|\frac{\partial^k \sigma_{\hat{\phi}_m,i}}{\partial x^k_j} \right\| \leq \tilde{C}_k, \quad \text{for $k=1,2$}.
\end{talign}
Applying the bounds from (\ref{eq:mu-der}),(\ref{eq:sigma-bound}),(\ref{eq:sigma-der}) in the expressions in (\ref{eq:z-der}) and (\ref{eq:z-2der}) we have that for some constant $C^\prime <\infty$:
\begin{talign} \label{eq:z-bound}
    &| z_i | \leq \frac{|x_i| + C}{\sigma_i^L} \leq C^\prime(1 + | x_i|), \quad 
    \left|\frac{\partial z_i}{\partial x_j} \right| \leq \begin{cases}
        \frac{1}{\sigma_{i}^L} &\text{if} \quad i=j \\
        \frac{1}{\sigma_i^L}C + \frac{|x_i| + C}{(\sigma_i^L)^2}\tilde{C}_1 &\text{if} \quad j<i\\
        0 &\text{if} \quad j >i.
    \end{cases} \quad \text{and}
\\ \label{eq:z-grad2-bound}
   &\left|\frac{\partial^2 z_i}{\partial x_j^2} \right| \leq \frac{1}{(\sigma_i^L)^2} C_1 +\frac{1}{(\sigma_i^L)} C_2 + \frac{1}{(\sigma_i^L)^2} \tilde{C}_1 + \frac{|x_i|+C}{(\sigma_i^L)^2} \tilde{C}_2.
\end{talign}
Substituting (\ref{eq:z-bound}) in (\ref{eq:grad-log-q}), it follows that for each $j = 1, \dots , d_{\mathcal{X}}$ there exist constants $C_3, C_4 < \infty$ depending on $C, C_1,\tilde{C}_1, \sigma_i^L$ such that:
\begin{talign*}
    \left| \frac{\partial}{\partial x_j} \log q_{\hat{\phi}_m} \right| \leq d_{\mathcal{X}} C_3 (1 + \|x\|_2) + d_{\mathcal{X}}C_4.
\end{talign*}
Setting $C_5:= \max\{d_{\mathcal{X}}C_3, d_{\mathcal{X}}C_4\}$ it follows from (\ref{eq:grad}) that:
\begin{talign} \label{eq:grad-bound}
    \left\| \nabla_{x} \log q_{\hat{\phi}_m}(x | \theta) \right\|_2 =  \left\| \sum_{i=1}^{d_{\mathcal{X}}} - z_i \frac{\partial z_i}{\partial x_j} - \frac{1}{\sigma_{\hat{\phi}_m,i}(x_{<i, \theta})} \frac{\partial \sigma_{\hat{\phi}_m,i}(x_{<i, \theta})}{\partial x_j} \right \|_2 \leq C_5 \left(1 + \|x\|^2_2\right).
\end{talign}
Similarly, by substituting the bounds in  (\ref{eq:z-bound}), (\ref{eq:z-grad2-bound}) into (\ref{eq:hess}) we obtain:
\begin{talign*}
    \left|\text{Tr} \nabla^2_x \log q_{\hat{\phi}_m}(x | \theta) \right| \leq C_6 \left(1+\|x\|_2^2\right) 
\end{talign*}
for some constant $C_6 < \infty$ depending on $d_{\mathcal{X}},C^\prime, C_1,C_2,\tilde{C}_1,\tilde{C}_2, \sigma_i^L$. 
To conclude, we have that:
\begin{talign*}
     \left\| \nabla_{x} \log q_{\hat{\phi}_m}(x | \theta) \right\|_2 \leq C_5 \left(1 + \|x\|^2_2\right) 
     ,\quad 
     \left|\text{Tr} \nabla_x^2 \log q_{\hat{\phi}_m}(x | \theta)\right| \leq d_{\mathcal{X}} C_6 \left(1 + \|x\|_2^2 \right)
\end{talign*}
hence for any $\zeta \leq 1$ there exist $C_7, C_8 < \infty$ such that for any $x \in \mathbb{R}^{d_\mathcal{X}}$: 
\begin{talign*}
     \left\| \nabla_{x} \log q_{\hat{\phi}_m}(x | \theta) \right\|_2 \leq C_7 \left(1 + \|x\|^{\frac{2}{\zeta}}_2\right) 
     , \quad 
     \left|\text{Tr} \nabla_x^2 \log q_{\hat{\phi}_m}(x | \theta)\right| \leq  C_8 \left(1 + \|x\|_2^{\frac{4}{\zeta}}\right).
\end{talign*}
and the statement holds by noting that $C_7$, and $C_8$ are independent of $\theta$ (hence polynomial functions of zero degree).
\end{proof}

\subsubsection{Proof of \Cref{theorem:robustness-exp}} \label{appendix:proof_robust_exp}
\begin{proof}
By Proposition \ref{prop:conjugacy} we have that $p_{\text{NSM}}(\theta|x^o_{1:n},\hat{\phi}_m) \propto \mathcal{N}(\theta;\mu_{n,m},\Sigma_{n,m})$ with $\mu_{n,m}, \Sigma_{n,m}$ as in (\ref{eq:closed-form-mu}) and (\ref{eq:closed-form-sigma}).
Similarly, we have that $p_{\text{NSM}}(\theta | x^c_{1:n}, \hat{\phi}_m) \propto \mathcal{N}(\theta; \mu_{n,m}^c, \Sigma_{n,m}^c)$ where: 
\begin{talign*}
    \left(\Sigma_{n,m}^c\right)^{-1} &:= \Sigma^{-1} + 2 \beta n M_{1}(x^c_{1:n}) \quad \text{and} \\
    \mu_{n,m}^c &:= \Sigma_{n,m}^c\left[ \Sigma^{-1}\mu - 2\beta n M_2(x^c_{1:n})  - 2\beta n \ell_1(x^c_{1:n}) \right]. 
\end{talign*}
Let $\mu^{-}_{n,m} := (\mu_{n,m}^c - \mu_{n,m})$. Using the closed form of the KL divergence between Gaussian distributions \citep[see e.g.][]{zhang2023properties}, we hence have: 
\begin{talign*}
    \pif_{\KL}(x_j^c, \mathbb{P}_n, \mathcal{L}_{\text{NSM}}) = \frac{1}{2} \Big( 
     \underbrace{\text{Tr}\left((\Sigma_{n,m}^c)^{-1} \Sigma_{n,m} \right) - d_{\Theta}}_{(1)} + 
        \underbrace{(\mu^{-}_{n,m})^\top (\Sigma_{n,m}^c)^{-1} \mu^{-}_{n,m}}_{(2)} + \underbrace{\log\left(\frac{\det(\Sigma^c_{n,m})}{\det(\Sigma_{n,m})}\right)}_{(3)}
   \Big).
\end{talign*}
For the first term, we have that: 
\begin{talign*}
    (1) = \text{Tr}\left((\Sigma_{n,m}^c)^{-1} \Sigma_{n,m} \right) - d_{\Theta} & \leq \text{Tr}\left((\Sigma_{n,m}^c)^{-1} \right) \text{Tr}(\Sigma_{n,m}) \\
    &= \text{Tr}\left( \Sigma^{-1} + 2 \beta n M_{1}(x^c_{1:n})\right) \text{Tr}\left( (\Sigma^{-1} + 2 \beta n M_{1}(x^o_{1:n}) )^{-1} \right) 
\end{talign*}
where in the inequality we used the fact that $\text{Tr}(AB) \leq \text{Tr}(A)\text{Tr}(B)$ for two PSD matrices $A$ and $B$ and that $d_{\Theta} > 0$. Since $\Sigma^{-1} + 2 \beta n M_{1}(x^o_{1:n})$ does not depend on the contamination datum $x_j^c$, we write $C_1 := \text{Tr}\left( (\Sigma^{-1} + 2 \beta n M_{1}(x^o_{1:n}) )^{-1}\right)$ and we have:
\begin{talign*}
    (1) \leq \text{Tr}\left( \Sigma^{-1} + 2 \beta n M_{1}(x^c_{1:n})\right) C_1  &= (\text{Tr}(\Sigma^{-1}) + \text{Tr}(2 \beta n M_{1}(x^c_{1:n})))C_1  \\
    &= \left(\text{Tr}(\Sigma^{-1}) + 2 \beta \sum_{i \neq j}^n w(x_i^o)^2 \text{Tr}\left(\nabla_x T_{\hat{\phi}_m}(x_i^o)  \nabla_x T_{\hat{\phi}_m}(x_i^o)^\top\right) \right. \\
    &\left. \quad + 2 \beta  w(x_j^c)^2 \text{Tr}\left(\nabla_x T_{\hat{\phi}_m}(x_j^c)  \nabla_x T_{\hat{\phi}_m}(x_j^c)^\top\right)
    \right)C_1.
\end{talign*}
The first two terms in the bracket do not depend on $x_j^c$ so let 
\begin{talign*}
C_2 := \text{Tr}(\Sigma^{-1}) + 2 \beta \sum_{i \neq j}^n w(x_i^o)^2 \text{Tr}\left(\nabla_x T_{\hat{\phi}_m}(x_i^o)  \nabla_x T_{\hat{\phi}_m}(x_i^o)^\top\right).
\end{talign*}
Then by noting that $\text{Tr}\left(\nabla_x T_{\hat{\phi}_m}(x)  \nabla_x T_{\hat{\phi}_m}(x)^\top\right) = \| \nabla_x T_{\hat{\phi}_m}(x) \|_F^2$ we have: 
\begin{talign*}
    (1) &\leq \left(C_2 + 2 \beta w(x_j^c)^2 \| \nabla_x T_{\hat{\phi}_m}(x_j^c) \|_F^2 \right) C_1  \leq \left( C_2 + 2 \beta \sup_{x} w(x)^2 \| \nabla_x T_{\hat{\phi}_m}(x) \|_F^2 \right) C_1.
\end{talign*}
For the second term, since $(\Sigma_{n,m}^c)^{-1}$ is symmetric we have \citep[see e.g.][Lemma 4.2.2]{horn2012matrix}:
\begin{talign*}
    (2) &= (\mu^{-}_{n,m})^\top \left(\Sigma_{n,m}^c\right)^{-1} \mu^{-}_{n,m} \leq \lambda_{\text{max}}\left((\Sigma_{n,m}^c)^{-1}\right) \left\| \mu^{-}_{n,m}\right\|_2^2
\end{talign*}
where $\lambda_{\text{max}}\left((\Sigma_{n,m}^c)^{-1}\right)$ denotes the maximum eigenvalue of $(\Sigma_{n,m}^c)^{-1}$. Then, using Weyl's inequality \citep[][Theorem 4.3.1]{horn2012matrix} we obtain:
\begin{talign*}
\lambda_{\text{max}}(\Sigma_{n,m}^c)^{-1}) &= \lambda_{\text{max}}(\Sigma^{-1} + 2 \beta n M_{1}(x^c_{1:n})) \leq \lambda_{\text{max}}(\Sigma^{-1}) + \lambda_{\text{max}}\left(2 \beta n M_{1}(x^c_{1:n})\right). 
\end{talign*}
Since $2 \beta M_{1}(x^c_{1:n})$ is PSD we have that $\lambda_{\text{max}}(2 \beta M_{1}(x^c_{1:n})) \leq \text{Tr}(2 \beta M_{1}(x^c_{1:n}))$ (since the trace is equal to the sum of its eigenvalues). Hence, by letting $$C_3 := \lambda_{\text{max}}(\Sigma^{-1}) + 2 \beta \sum_{i \neq j}^n w(x_i^o)^2 \nabla_x T_{\hat{\phi}_m}(x_i^o)  \nabla_x T_{\hat{\phi}_m}(x_i^o)^\top$$ which does not depend on $x_j^c$, and using the same arguments as above we have:
\begin{talign*}
    \lambda_{\text{max}}((\Sigma_{n,m}^c)^{-1}) \leq C_3 + 2 \beta \sup_{x} w(x)^2 \left\| \nabla_x T_{\hat{\phi}_m}(x) \right\|_F^2 := C_4.
\end{talign*}
Substituting into $(2)$ we obtain:
\begin{talign*}
    (2) \leq C_4 \left\| \mu^-_{n,m} \right\|_2^2  
    &= C_4 \left\| \Sigma_{n,m}^c\left[ \Sigma^{-1}\mu - 2\beta n M_{2}(x^c_{1:n})  - 2\beta n l_{1}(x^c_{1:n}) \right] \right.\\
    & \quad \quad \left. -
    \Sigma_{n,m}\left[ \Sigma^{-1}\mu - 2\beta n M_{2}(x^o_{1:n})  - 2\beta n l_{1}(x^o_{1:n}) \right]
    \right \|_2^2.
\end{talign*}
Now let $v^c := - 2 \beta [d(x_j^c) - d(x_j) + h(x_j^c) - h(x_j)]$ and $A^c := 2 \beta [M(x_j^c) - M(x_j)]$ where 
\begin{align*}
    h(x) &:= w(x)^2 \nabla_x T_{\hat{\phi}_m}(x) \nabla_x b_{\hat{\phi}_m}(x) \in \mathbb{R}^{d_\Theta},  \quad
    d(x) := \nabla_x \cdot \left(w(x)^2 \nabla_x T_{\hat{\phi}_m}(x)^\top\right)^\top \in \mathbb{R}^{d_{\Theta}}, \quad \text{and} \\
    M(x) &= w(x)^2 \nabla_x T_{\hat{\phi}_m}(x) \nabla_x T_{\hat{\phi}_m}(x)^\top \in \mathbb{R}^{d_{\Theta} \times d_{\Theta}}.
\end{align*}
By re-arranging terms, we have:
\begin{talign*}
    \left\| \mu^-_{n,m} \right\|_2 &= \left\| \Sigma_{n,m}^c\left[ \Sigma^{-1}\mu - 2\beta n M_{2}(x^c_{1:n})  - 2\beta n l_{1}(x^c_{1:n}) \right] \right.\\
    & \quad \quad \left. -
    \Sigma_{n,m}\left[ \Sigma^{-1}\mu - 2\beta n M_{2}(x^o_{1:n})  - 2\beta n l_{1}(x^o_{1:n}) \right]
    \right \|_2 \\
    &= \left\| \Sigma_{n,m}^c v^c - \Sigma_{n,m}^c A^c \mu_{n,m} \right\|_2 \\
    &\leq \left\| \Sigma_{n,m}^c \right\|_{2} \left\| v^c - A^c \mu_{n,m} \right\|_2.
\end{talign*}
Since $(\Sigma_{n,m}^c)^{-1} = \Sigma^{-1} + 2 \beta n M_{1}(x^c_{1:n})$ and $2 \beta n M_{1}(x^c_{1:n})$ is PSD we have $(\Sigma_{n,m}^c)^{-1} \succeq \Sigma^{-1}$ and hence $\Sigma \succeq \Sigma_{n,m}^c$. For symmetric PSD matrices $A,B$, $A \succeq B$ implies $\lambda_{\text{max}}(A) \geq \lambda_{\text{max}}(B)$ from Weyl's theorem, and $\| A \|_2 = \lambda_{\text{max}}(A)$ hence since $\Sigma, \Sigma_{n,m}^c$ are symmetric PSD matrices:
\begin{talign*}
    \left\| \Sigma_{n,m}^c \right\|_2 = \lambda_{\text{max}}\left(\Sigma_{n,m}^c\right) \leq \lambda_{\text{max}}(\Sigma) = \| \Sigma \|_2. 
\end{talign*}
Now consider the term $\| v^c - A^c \mu_{n,m} \|_2$. Since $C_6 := \|\mu_{n,m}\|_2$ does not depend on $x_j^c$ we have,
\begin{talign*}
    \left\| A^c \mu_{n,m} \right\|_2 &\leq C_6 \| A^c \|_2 = C_6 2 \beta \left\| M(x^c_j) - M(x_j) \right\|_2. 
\end{talign*}
Next, we have
\begin{talign*}
    \| v^c \|_2 = \left\| - 2 \beta \left[d(x_j^c) - d(x_j) + h(x_j^c) - h(x_j)\right] \right\|_2 
    &\leq 2 \beta \left(\|d(x_j^c) \|_2 + \| d(x_j)\|_2 + \| h(x_j^c)\|_2 + \| h(x_j)\|_2\right) \\
    &\leq C_5 + 2 \beta \left(\| d(x_j^c)\|_2 + \| h(x_j^c) \|_2\right) 
\end{talign*}
where the first inequality follows from the triangle inequality and the second inequality follows from defining $C_5 := 2 \beta (\| d(x_j) \|_2 + \|h(x_j)\|_2)$, independent of $x_j^c$. Now,
\begin{talign*}
    \| h(x_j^c) \|_2 = w(x_j^c)^2 \left\|\nabla_x T_{\hat{\phi}_m}(x_j^c) \nabla_x b_{\hat{\phi}_m}(x_j^c) \right\|_2 
    &\leq  w(x_j^c)^2 \left\|\nabla_x T_{\hat{\phi}_m}(x_j^c)\right\|_2 \left\| \nabla_x b_{\hat{\phi}_m}(x_j^c) \right\|_2 \\
    &\leq w(x_j^c)^2 \left\|\nabla_x T_{\hat{\phi}_m}(x_j^c) \right\|_F \left\| \nabla_x b_{\hat{\phi}_m}(x_j^c) \right\|_2. 
\end{talign*}
On the other hand, for $k \in \{1, \dots, d_{\Theta}\}$, the $k$-th entry of $d(x)$ has the form:
\begin{talign*}
    (d(x))_k &= \sum_{j=1}^{d_{\mathcal{X}}} \frac{\partial}{\partial x_j} \left(w(x)^2 (\nabla_x T_{\hat{\phi}_m}(x))_{kj}\right) = \nabla w(x)^2 \cdot \left(\nabla_x T_{\hat{\phi}_m}(x)\right)_{k,:}^\top + w(x)^2 \text{Tr}\left(\nabla^2_x (T_{\hat{\phi}_m}(x))_k\right).
\end{talign*}
Let $d_1(x) \in \mathbb{R}^{d_{\Theta}}$ be such that $(d_1(x))_k := \nabla w(x)^2 \cdot (\nabla_x T_{\hat{\phi}_m}(x))_{k,:}^\top$ and $d_2(x) \in \mathbb{R}_{d_{\Theta}}$ be such that $(d_2(x))_k := w(x)^2 \text{Tr}(\nabla^2_x (T_{\hat{\phi}_m}(x))_k)$ for $k \in \{1, \dots, d_{\Theta}\}$. Then $d(x) = d_1(x) + d_2(x)$. By the triangle inequality it follows that
$
    \| d(x_j^c) \|_2 \leq \| d_1(x_j^c) \|_2 + \| d_2(x_j^c)\|_2. 
$
By the Cauchy-Schwartz inequality, we have:
\begin{talign*}
    \left\| d_1(x_j^c) \right\|_2 = \left(\sum_{k=1}^{d_{\Theta}} \left(d_1(x_j^c)\right)_k^2 \right)^{\frac{1}{2}} &\leq \left\| \nabla w(x_j^c)^2 \right\|_2 \left(\sum_{k=1}^{d_{\Theta}} \left\|\left(\nabla_x T_{\hat{\phi}_m}(x)\right)_{k,:} \right\|_2^2 \right)^{\frac{1}{2}}  \\
    &= \left\| \nabla w(x_j^c)^2 \right\|_2 \left\| \nabla_x T_{\hat{\phi}_m}(x_j^c) \right\|_F.
\end{talign*}
Similarly, for the second term, we have, 
\begin{talign*}
    \| d_2(x_j^c) \|_2 = \left(\sum_{k=1}^{d_{\Theta}} \left(w(x_j^c)^2 \text{Tr}(\nabla^2_x (T_{\hat{\phi}_m}(x_j^c))_k)\right)^2 \right)^{\frac{1}{2}}
    &\leq w(x_j^c)^2 \sqrt{\sum_{k=1}^{d_{\Theta}} d_{\mathcal{X}} \left\|\nabla^2_x (T_{\hat{\phi}_m}(x_j^c))_k\right\|^2_F} \\
    &=  w(x_j^c)^2 \sqrt{d_{\mathcal{X}}} \left\| \nabla^2_x T_{\hat{\phi}_m}(x_j^c) \right\|_F.
\end{talign*}
Putting these together, we obtain an upper bound for $\| d(x_j^c) \|_2$ as follows:
\begin{talign*}
    \left\| d(x_j^c) \right\|_2 \leq \left\| \nabla w(x_j^c)^2 \right\|_2 \left\| \nabla_x T_{\hat{\phi}_m}(x_j^c) \right\|_F + \sqrt{d_{\mathcal{X}}} w(x_j^c)^2  \left\| \nabla^2_x T_{\hat{\phi}_m}(x_j^c) \right\|_F. 
\end{talign*}
And therefore, by defining $C_7 := \| \Sigma \|_2^2$, $C_8 := 4 \beta C_6$ and $C_9 := d_{\mathcal{X}}$ we obtain:
\begin{talign*}
    (2) \leq C_4 \left\| \mu^{-}_{n,m} \right\|_2^2 
    &\leq \left\| \Sigma_{n,m}^c \right\|_2^2 \left\| v^c - A^c \mu_{n,m} \right\|_2^2 \\
    &\leq \| \Sigma \|_2^2 \left(w(x_j^c)^2 \left\|\nabla_x T_{\hat{\phi}_m}(x_j^c)\right\|_F \left\| \nabla_x b_{\hat{\phi}_m}(x_j^c) \right\|_2  + \left\| \nabla w(x_j^c)^2 \right\|_2 \left\| \nabla_x T_{\hat{\phi}_m}(x_j^c) \right\|_F \right.\\
    & \left. \quad + \sqrt{d_{\mathcal{X}}} w(x_j^c)^2  \left\| \nabla^2_x T_{\hat{\phi}_m}(x_j^c) \right\|_F + C_6 2 \beta \left\| M(x_j^c) - M(x_j) \right\|_2 \right)^2 \\
    &\leq C_7 \left(\sup_x w(x)^2 \left\|\nabla_x T_{\hat{\phi}_m}(x)\right\|_F \left\| \nabla_x b_{\hat{\phi}_m}(x) \right\|_2  + \sup_x \left\| \nabla w(x)^2 \right\|_2 \left\| \nabla_x T_{\hat{\phi}_m}(x) \right\|_F \right.\\
    &\left. \quad + C_9 \sup_x w(x)^2  \left\| \nabla^2_x T_{\hat{\phi}_m}(x) \right\|_F + C_8 \sup_x w(x)^2 \left\| \nabla_x T_{\hat{\phi}_m}(x) \right\|_F^2 \right)^2
\end{talign*}
Finally, for the third term, we have:
\begin{talign*}
    (3) &= \log\left(\frac{\det(\Sigma^c_n)}{\det(\Sigma_{n,m})}\right) = \log \left(\frac{\det\left((\Sigma^{-1} + 2 \beta n M_{1}(x^c_{1:n}))^{-1}\right)}{\det\left((\Sigma^{-1} + 2 \beta n M_{1}(x^o_{1:n}))^{-1}\right)} \right), \quad \text{and} \\ (\Sigma_{n,m}^c)^{-1} &= \underbrace{\Sigma_{n,m}^{-1} - 2 \beta n M(x_j)}_{:= C(x^o_{1:n})} + \underbrace{2 \beta n M(x_j^c)}_{:= C(x_j^c)} 
\end{talign*}
where
$
C(x^o_{1:n}) = \Sigma^{-1} + 2 \beta n M_{1}(x_{1:n}^o) - 2 \beta n M(x_j) = \Sigma^{-1} + 2 \beta \sum_{i=1, i \neq m}^n w(x_i^o)^2 \nabla_x T_{\hat{\phi}_m}(x_i^o) \nabla_x T_{\hat{\phi}_m}(x_i^o)^\top.
$
Since $\Sigma^{-1}$ is positive definite and $M(x)$ is positive semi-definite for all $x$, then $C(x^o_{1:n})$ is the sum of a positive semi-definite and a positive definite matrix and is hence positive definite. Moreover, $C(x_j^c) := 2 \beta n M(x_j^c)$ and is positive semi-definite. 
Let $C(x_j) := 2 \beta n M(x_j)$ then
\begin{talign*}
    \Sigma_{n,m}^{-1} = C(x^o_{1:n}) + C(x_j) = C(x^o_{1:n})^{1/2} (I + C(x^o_{1:n})^{-1/2} C(x_j) C(x^o_{1:n})^{-1/2})C(x^o_{1:n})^{1/2}
 \end{talign*}
and hence, 
\begin{talign*}
    \det\left(\Sigma_{n,m}^{-1}\right) &= \det\left(C(x^o_{1:n})^{1/2}\right) \det\left(I + C(x^o_{1:n}\right)^{-1/2} C(x_j) C(x^o_{1:n})^{-1/2}) \det\left(C(x^o_{1:n})^{1/2}\right)\\
    &= \det\left(C(x^o_{1:n})\right)  \det\left(I + C(x^o_{1:n})^{-1/2} C(x_j) C(x^o_{1:n})^{-1/2}\right).
\end{talign*}
Therefore, 
\begin{talign*}
    (3) = \log \left(\frac{\det(\Sigma_{n,m}^{-1})}{\det\left(C(x^o_{1:n}) + C(x_j^c)\right)} \right) 
    &= \log \left(\det(I + C(x^o_{1:n})^{-1/2} C(x_j) C(x^o_{1:n})^{-1/2}) \frac{\det(C(x^o_{1:n})) }{\det\left(C(x^o_{1:n}) + C(x_j^c)\right)} \right) \\
    &= \log\left(C_{10} \frac{\det(C(x^o_{1:n})) }{\det\left(C(x^o_{1:n}) + C(x_j^c)\right)} \right) 
\end{talign*}
where we defined $C_{10} := \det(I + C(x^o_{1:n})^{-1/2} C(x_j) C(x^o_{1:n})^{-1/2})$ as it does not depend on the contamination. Finally, $\det(C(x^o_{1:n})) > 0$ since it is the sum of two matrices, including a PD one ($\Sigma^{-1}$), hence since $C(x^0_{1:n}) + C(x_j^c) \succeq C(x^0_{1:n})$ we have that $\det(C(x^0_{1:n}) + C(x_j^c)) \geq \det(C(x^0_{1:n}))$ by monotonicity of the determinant and hence:
\begin{talign*}
    \frac{\det(C(x^o_{1:n})) }{\det\left(C(x^o_{1:n}) + C(x_j^c)\right)} \leq 1
\end{talign*}
and $(3) \leq \log C_{10} := C_{11}$.
Therefore, under the assumptions of the Theorem, we have that:
\begin{talign*}
    \sup_{x_j^c \in \mathbb{R}^{d_{\mathcal{X}}}} \pif_{\KL}(x_j^c, \mathbb{P}_n, \mathcal{L}_{\text{NSM}}) \leq \frac{1}{2} \left(
    C_2 + 2 \beta C^2) C_1 +
    C_7 (C^2  + C^\prime
     + C_9 C^2 + C_8 C^2)^2 
     + C_{11}
     \right) < \infty.
\end{talign*}
\end{proof}

\subsubsection{Connection between Assumptions \ref{assumption:general-pif} and \ref{assumption:kl-pif}} \label{appendix:assumptions-connection}
In this section, we prove the following Lemma which shows that under some simple additional conditions robustness in the $\text{PIF}_{\text{KL}}$ sense yields robustness in the $\text{PIF}_{\Delta}$ sense for NSM-Bayes-conj. 
\begin{lemma}
    Suppose Assumptions \ref{assumption:weight}, \ref{assumption:exptheta} and \ref{assumption:kl-pif} hold and additionally that there exist $\tilde{C}_1, \tilde{C}_2 < \infty$ such that
 $
        \left \|\nabla_x b_{\hat{\phi}_m}(x) \right\|_2  \leq \tilde{C}_1 \min\{w(x)^{-1}, \|\nabla_x w(x)^2\|_2^{-1}\}$, $
        \left \|\nabla^2_x b_{\hat{\phi}_m}(x) \right\|_F \leq  \tilde{C}_2 w(x)^{-2}
$.
Then, NSM-Bayes-conj is globally-bias-robust:  $\sup_{\theta \in \Theta, x^c \in \mathcal{X}}  \pif(x^c, \theta, \mathbb{P}_n, \mathcal{L}_{\text{NSM}})  < \infty$.
\end{lemma}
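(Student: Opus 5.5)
The plan is to reduce the lemma to \Cref{theorem:robustness}, by exhibiting functions $f,g$ satisfying \Cref{assumption:general-pif} for the exponential-family surrogate $q_{\hat\phi_m}(x|\theta)\propto\exp(T_{\hat\phi_m}(x)^\top\theta+b_{\hat\phi_m}(x))$. \Cref{assumption:regularity-q} is built into \Cref{assumption:exptheta}, and \Cref{assumption:weight} is assumed, so only \Cref{assumption:general-pif} remains. The prior is Gaussian, so $\sup_\theta\pi(\theta)<\infty$, and any $f,g$ of at most polynomial growth in $\|\theta\|$ lie in the required $L^2(\pi)$ and $L^1(\pi)$ spaces. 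Polynomial growth also gives $\sup_\theta (f^2+f+g)\pi<\infty$, exactly as in the proof of \Cref{cor:imq}, since a Gaussian is sub-exponential.

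For the score, the normalising constant drops out under $\nabla_x$, so
\[
\nabla_x\log q_{\hat\phi_m}(x|\theta)=\nabla_x T_{\hat\phi_m}(x)^\top\theta+\nabla_x b_{\hat\phi_m}(x),
\]
and hence
\[
\|\nabla_x\log q_{\hat\phi_m}(x|\theta)\|_2\le \|\nabla_xT_{\hat\phi_m}(x)\|_F\|\theta\|_2+\|\nabla_x b_{\hat\phi_m}(x)\|_2 .
\]
\Cref{assumption:kl-pif} gives $\|\nabla_xT_{\hat\phi_m}(x)\|_F\le C w(x)^{-1}$ and $\|\nabla_xT_{\hat\phi_m}(x)\|_F\le C'\|\nabla_x w(x)^2\|_2^{-1}$. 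Together these give $\|\nabla_x T_{\hat\phi_m}(x)\|_F\le\max\{C,C'\}\min\{w(x)^{-1},\|\nabla_xw(x)^2\|_2^{-1}\}$. The extra hypothesis gives the same type of bound for $\nabla_x b_{\hat\phi_m}$ with constant $\tilde C_1$. We can therefore take $f(\theta):=\max\{C,C'\}\|\theta\|_2+\tilde C_1$.

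For the trace term,
\[
\mathrm{Tr}\,\nabla_x^2\log q_{\hat\phi_m}(x|\theta)=\sum_k\theta_k\,\mathrm{Tr}\,\nabla_x^2(T_{\hat\phi_m})_k(x)+\mathrm{Tr}\,\nabla^2_x b_{\hat\phi_m}(x).
\]
Using $|\mathrm{Tr}A|\le\sqrt{d_\X}\|A\|_F$ and Cauchy--Schwarz over $k$, this is bounded by $\sqrt{d_\X}\bigl(\|\theta\|_2\|\nabla_x^2T_{\hat\phi_m}(x)\|_F+\|\nabla_x^2 b_{\hat\phi_m}(x)\|_F\bigr)$. \Cref{assumption:kl-pif} gives $\|\nabla^2_xT_{\hat\phi_m}(x)\|_F\le C^2w(x)^{-2}$, and the extra hypothesis gives $\tilde C_2 w(x)^{-2}$ for the $b$ term. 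We can therefore take $g(\theta):=\sqrt{d_\X}(C^2\|\theta\|_2+\tilde C_2)$.

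Both $f$ and $g$ are affine in $\|\theta\|_2$, so all the integrability and sup conditions hold under the Gaussian prior. \Cref{theorem:robustness} then yields $\sup_{\theta,x^c}\pif_\Delta<\infty$.

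The main obstacle is bookkeeping rather than depth. One must handle the points where $\nabla_x w(x)^2=0$ or $w(x)$ is very small, so that the $\min\{\cdot\}$ convention (interpreting $0^{-1}=\infty$) is consistent with how the constants combine. One must also check carefully the tensor convention for $\nabla_x^2T_{\hat\phi_m}$ in the Frobenius bound.
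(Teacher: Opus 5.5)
Your proposal is correct and follows essentially the same route as the paper. Like the paper, you write the score and Laplacian of the exponential-family surrogate and combine \Cref{assumption:kl-pif} with the extra hypotheses on $b_{\hat\phi_m}$ to obtain $f(\theta)=\max\{C,C'\}\|\theta\|_2+\tilde C_1$ and an affine $g$; you then observe these meet the integrability and sup conditions under the Gaussian prior and invoke \Cref{theorem:robustness}. Your $\sqrt{d_\X}$ factor in $g$ is actually more careful than the paper, which bounds $|\mathrm{Tr}(\cdot)|$ directly by the Frobenius norm; this only affects a constant.
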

\begin{proof}
From Assumption \ref{assumption:exptheta} we have that 
$
q_{\hat{\phi}_m}(x | \theta) \propto \exp( 
T_{\hat{\phi}_m}(x)^\top \theta + b_{\hat{\phi}_m}(x) 
),
$
hence 
$
    \nabla_x \log q_{\hat{\phi}_m}(x | \theta) = (\nabla_{x} T_{\hat{\phi}_m}(x))^\top \theta + \nabla_x b_{\hat{\phi}_m}(x)
$
and under the assumptions:
\begin{talign*}
    \left\| \nabla_x \log q_{\hat{\phi}_m}(x | \theta) \right\|_2 \leq \left\| (\nabla_{x} T_{\hat{\phi}_m}(x))^\top \theta \right\|_2 + \left\| \nabla_x b_{\hat{\phi}_m}(x) \right\|_2 
    &\leq 
    \left(C \| \theta\|_2 + \tilde{C}_1\right) w(x)^{-1}.
\end{talign*}
and simultaneously (from the second bound of Assumption \ref{assumption:kl-pif}),
\begin{talign*}
     \left\| \nabla_x \log q_{\hat{\phi}_m}(x | \theta) \right\|_2 \leq \left\| (\nabla_{x} T_{\hat{\phi}_m}(x))^\top \theta \right\|_2 + \left\| \nabla_x b_{\hat{\phi}_m}(x) \right\|_2 
    &\leq (C^\prime \|\theta\|_2 + \tilde{C}_1) \left\| \nabla_x w(x)^2\right\|_2^{-1}.
\end{talign*}
Hence choosing $C_3 := \max\{C, C^\prime\}$ we have that 
\begin{talign*}
    \left\| \nabla_x \log q_{\hat{\phi}_m}(x | \theta) \right\|_2 \leq \left(C_3 \|\theta\|_2 + \tilde{C}_1 \right) \min\{w(x)^{-1}, \|\nabla_xw(x)^2\|_2^{-1}\}.
\end{talign*}
Similarly, 
\begin{talign*}
    \left|\text{Tr} \nabla_x^2 \log q_{\hat{\phi}_m}(x | \theta) \right| = \left|\text{Tr} \left(
    \nabla_x^2 T_{\hat{\phi}_m}(x)^\top \theta + \nabla_x^2 b_{\hat{\phi}_m}(x)
    \right) \right| &\leq 
    \left\| \nabla_x^2 T_{\hat{\phi}_m}(x) \right\|_F \|\theta\|_2 + \left\| \nabla_x^2 b_{\hat{\phi}_m}(x) \right\|_F \\
    &\leq C^2 w(x)^{-2} \|\theta\|_2 + \tilde{C}_2 w(x)^{-2} \\
    &= \left(C^2 \|\theta\|_2 + \tilde{C}_2\right) w(x)^{-2}.
\end{talign*}
Take $f(\theta) := C_3 \| \theta\|_2 + \tilde{C}_1$ and $g(\theta) := C^2 \|\theta\|_2 + \tilde{C}_2$. Recall that for NSM-Bayes-conj we select $\pi(\theta) \propto \mathcal{N}(\theta;\mu,\Sigma)$. Since both $f,g$ have linear growth in $\theta$ and $\pi$ has finite second moments it follows that $f \in L^2(\pi), g \in L^1(\pi)$. Moreover, since $\pi$ has exponential decay in $\theta$ and $f$ and $g$ have linear growth in $\theta$ it follows that $\sup_{\theta \in \Theta} \left(f(\theta)^2 + f(\theta) +  g(\theta) \right) \pi(\theta) < \infty$.
Therefore, Assumption \ref{assumption:general-pif} holds
and global-bias robustness follows from Theorem \ref{theorem:robustness}.
\end{proof}

\section{Experimental details and additional results}
\label{app:experimental-details}

\Cref{app:setting-learning-rate} provides further details on the algorithm we use to tune the learning rate in non-conjugate GBI methods such as NSM-Bayes.
Implementation details for our methods and the baselines are in \Cref{app:implementation} and \Cref{app:baselines}, respectively. \Cref{app:additional_experiments} contains the additional experiments and results from \Cref{sec:experiments}. In \Cref{app:add_exp_baselines}, we analyse of the performance of certain baseline methods under increased simulation budget.

\subsection{Setting the learning rate for non-conjugate GBI posteriors}
\label{app:setting-learning-rate}

The \citet{Syring2019} method for tuning the learning rate outlined in \Cref{alg:learning-rate-calibration-conjugate} algorithm requires the posterior (mean and covariance) to be computed $T \times B$ times, where $B$ is the number of bootstrap samples and $T$ is the number of stochastic steps. This is straightforward for NSM-Bayes-conj where the posterior is conjugate. However, for most GBI methods such as NSM-Bayes, running MCMC hundreds of times to tune $\beta$ can be infeasible. Therefore, we first run MCMC once at an initial value of $\beta$, and use importance sampling (IS) to estimate the posterior for each bootstrapped sample. We also monitor the effective sample size (ESS) of the importance weights at each step and re-run MCMC with the current value of $\beta$ if the ESS falls below 30\%. The full algorithm is outlined in \Cref{alg:nsm-lr-calibration}.

\begin{algorithm}[t]
\caption{Learning-rate calibration for GBI methods via bootstrap re-weighting}
\label{alg:nsm-lr-calibration}
\begin{algorithmic}[1]
\Require Data $x_{1:n}^o$, initial $\beta_0$, iterations $T$, bootstraps $B$, MCMC draws $M$, step-sizes $\{\kappa_t\}$, target $1-\alpha$, ESS threshold $\texttt{ESS}_{\min}$
\Ensure Calibrated learning rate $\beta^\star$

\State Compute $\hat\theta_n = \arg\min_{\theta} \mathcal{L}(\theta;x_{1:n}^o)$ using gradient-based solvers
\State Initialise $\beta \gets \beta_0$; draw $\{\theta^{(i)}\}_{i=1}^M \sim \pi_{\mathcal{L}}^{\beta}(\theta | x_{1:n}^o)$ via MCMC
\State Cache per-observation losses $\mathcal{L}_{ij} \gets \mathcal{L}(\theta^{(i)};x_j^o)$ for all $i=1{:}M$, $j=1{:}n$

\For{$t=1$ \textbf{to} $T$}
    \State Draw bootstrap counts $N_b \sim \text{Multinomial}\!\left(n;\frac1n,\ldots,\frac1n\right)$ for $b=1{:}B$
    \For{$b=1$ \textbf{to} $B$}
        \State Compute self-normalised IS weights $\{W_{b,i}(\beta)\}_{i=1}^M$ using cached $\mathcal{L}_{ij}$ and counts $N_b$
        \State Compute weighted mean/covariance $(\mu_b^{(\beta)},\Sigma_b^{(\beta)})$ from $\{(\theta^{(i)},W_{b,i}(\beta))\}_{i=1}^M$
        \State Compute $D^2_{b,i} = (\theta^{(i)} - \hat \mu_{\beta_t}^{(b)})^\top (\Sigma_{\beta_t}^{(b)})^{-1} (\theta^{(i)} - \hat \mu_{\beta_t}^{(b)})$ and 
        \State Form the credible region: $C_{\alpha,\beta_t}\big(x_{1:n}^{\star(b)}\big) = \Big\{\theta:\;D^2_{b,i} \le \tau_b(\beta_t)\Big\}$ for threshold $\tau_b(\beta)$ (weighted $(1-\alpha)$-quantile of $\{D^2_{b,i}\}_{i=1}^M$ under weights $W_{b,\cdot}(\beta_t)$)
        \State Set indicator $\mathbb{I}_b(\beta)\gets \mathbf{1}\!\left\{(\hat\theta_n-\mu_b^{(\beta)})^\top(\Sigma_b^{(\beta)})^{-1}(\hat\theta_n-\mu_b^{(\beta)})\le \tau_b(\beta)\right\}$
    \EndFor
    \State Estimate coverage: $\hat c(\beta)\gets \frac{1}{B}\sum_{b=1}^B \mathbb{I}_b(\beta)$
    \State Update $\beta \gets \beta + \kappa_t\big(\hat c(\beta)-(1-\alpha)\big)$
    \If{mean ESS across bootstraps $<\texttt{ESS}_{\min}$}
        \State Re-run MCMC at current $\beta$ to obtain $\{\theta^{(i)}\}_{i=1}^M$ and refresh cache $\mathcal{L}_{ij}$
    \EndIf
\EndFor
\State \Return $\beta^\star \gets \beta$
\end{algorithmic}
\end{algorithm}

 \paragraph{Closed-form estimation of $\hat \theta_n$ for NSM-Bayes-conj.} 
In the case where $q_\phi$ is an energy-based model satisfying \Cref{assumption:exptheta}, we can solve $\hat \theta_n = \arg \min_\theta \mathcal{L}_{\text{NSM}}(\theta;x_{1:n}^o, \hat{\phi}_m) $ in closed-form as the objective is quadratic in $\theta$. Recall the loss expression:
\begin{align*}
    \mathcal{L}_{\text{NSM}}\left(\theta;x_{1:n}^o, \hat{\phi}_m \right) &= \theta^\top A_n \theta + 2\theta^\top B_n + C_n, \quad \text{where}\\
    A_n &= \frac{1}{n} \sum_{i=1}^n w(x_i^o)^2 \nabla_x T_{\hat{\phi}_m}(x_i^o) \nabla_x T_{\hat{\phi}_m}(x_i^o)^\top,\\
    B_n &= \frac{1}{n} \sum_{i=1}^n \left[  w(x_i^o)^2 \nabla_x T_{\hat{\phi}_m}(x_i^o) \nabla_x b_{\hat{\phi}_m}(x_i^o) + \nabla_x \cdot \big(w(x_i^o)^2 \nabla_x T_{\hat{\phi}_m}(x_i^o)^\top\big) ^\top\right],
\end{align*}
and $C_n$ does not depend on $\theta$. If $A_n$ is positive definite, the minimiser of the optimisation problem is $\hat \theta_n = -A_n^{-1}B_n$. However, in practice, especially for low-dimensional data, the matrix $A_n$ can be severely ill-conditioned or nearly singular. To ensure numerical stability, we compute a ridge-stabilised estimator:
\begin{align*}
    \hat{\theta}_0^{(\lambda)} := \arg \min_\theta \left \{\mathcal{L}_{\text{NSM}}\left(\theta;x_{1:n}^o, \hat{\phi}_m \right) + \frac{\lambda}{2} \Vert \theta \Vert_2^2\right\} = -(A_n + \lambda I_{d_\Theta})^{-1}B_n.
\end{align*}
For all the experiments, we set the ridge parameter as $\lambda = 10^{-2}  \text{Tr}(A_n)/ d_\Theta + 10^{-12}$.
In principle, one could use an adaptive scheme which would compute eigenvalues of $A_n$ and then set $\lambda$ based on a target condition number $\kappa_{\text{target}}$, for example, $\lambda \geq \lambda_{\text{max}}(A_n) - \kappa_{\text{target}}\lambda_{\text{min}}(A_n)/(\kappa_{\text{target}} - 1)$.

\paragraph{Selecting $\beta$ for ACE and GBI-SR.}

Since neither \citet{Gao2023} or \citet{Pacchiardi2024} provide a method to select $\beta$ for their methods, we use an approach similar to that described in \citet{Syring2019} that we use for our methods. As a direct application of \Cref{alg:learning-rate-calibration-conjugate} or \Cref{alg:nsm-lr-calibration} to ACE and GBI-SR would require a substantially larger number of simulations than the fixed budget used in our experiments, we employ the method outlined in \Cref{alg:baseline-lr-calibration}.
We set $T = 200$ and $B = 20$ for the both methods and set $M = N_\text{post}$ for ACE, while for GBI-SR we select according to available simulation budget. We set $\beta_\text{list} = \{1, 10, 100, 1000, 10000\}$ and $\beta_\text{list} = \{1, 10, 100\}$ for ACE and GBI-SR, respectively. The grid of $\beta$ values is less for GBI-SR due to limited simulation budget and numerical issues encountered for $\beta = 1000, 10000$. As the SIR model is non-differentiable, estimating $\hat \theta_n$ using gradient-based methods is infeasible for GBI-SR. We therefore replace $\hat{\theta}_n$ with the posterior mean obtained using the initial value $\beta_0 = 1$, which requires additional simulation.

\begin{algorithm}[t]
\caption{Learning-rate calibration for ACE and GBI-SR}
\label{alg:baseline-lr-calibration}
\begin{algorithmic}
\Require Data $x_{1:n}^o$, bootstraps $B$, iterations $T$, learning rates in descending order $\beta_{\text{list}}$, MCMC draws $M$.
\State $\hat\theta_n \gets \arg\min_{\theta} \mathcal{L}(\theta; x_{1:n}^o)$, $\beta^\star \gets \text{None}$, $\Delta^\star \gets +\infty$
\For{$\beta \in \beta_{\text{list}}$}
    \State Draw $B$ bootstraps datasets $x_{1:n}^{\star(b)}$ and obtain $M$ posterior draw from each
    \State Estimate coverage $\hat c(\beta)$ using \Cref{alg:learning-rate-calibration-conjugate} \\
    \qquad \textbf{if} $\lvert \hat c(\beta) - (1-\alpha) \rvert \leq \Delta^\star$ \textbf{then} $\Delta^\star \gets \lvert \hat c(\beta) - (1-\alpha) \rvert$, $\beta^\star \gets \beta$
    \textbf{end if }
\EndFor
\State \Return $\beta^\star$
\end{algorithmic}
\end{algorithm}

\subsection{Implementation details}
\label{app:implementation}

For all the experiments with NSM-Bayes-conj, we parameterise the exponential family density estimator components $T_\phi$ and $b_\phi$ using fully connected two-layer multilayer perceptrons with a single hidden layer of width 128, a tanh activation function, and linear output heads. Since tanh is bounded, the resulting $T_\phi(x)$ and $b_\phi(x)$ are bounded in $x$, which ensures that $q_\phi(x|\theta)$ is normalisable. All linear layers are initialised with Xavier/Glorot uniform weights, and biases are set to 0.01 to avoid near-zero initial activations. 
Following common practice, we standardise the simulated data to be zero-mean and identity covariance before training, and use a fixed learning rate of $5 \times 10^{-4}$, weight decay $10^{-5}$, batch size 128, and a maximum of 1000 epochs. The training dataset is randomly split into 80\% training and 20\% validation subsets, with the training set shuffled each epoch. We apply early stopping with patience 20 based on the validation loss, saving the model parameters with the best validation performance and restoring them at the end of training. For all the baseline methods, we use the default setting of the original code unless stated otherwise. This includes the MCMC sampler, the neural network architecture, and the associated hyperparameters; see \Cref{app:baselines} for more details.

For our methods in \Cref{sec:experiments}, we select the learning rate $\beta$ using the procedure described in \Cref{app:setting-learning-rate} with $\alpha=0.05$, which corresponds to 95\% coverage. We set the number of bootstrap samples to $B=100$, the number of stochastic approximation steps to $T=20$, and the adaptive step-size to $\kappa_t = 10/(t+10)$ in all the experiments. We use a log-transform for $\beta$ and set the initial learning rate $\beta_0$ for NSM-Bayes to 1.0, $10^{-6}$, and 0.01 for g-and-k, SIR and radio propagation models. The corresponding values for NSM-Bayes-conj are 0.1, 0.01, and 0.01. The learning rate is clamped at $\beta_0/100$ during the stochastic approximation. For the other GBI methods (ACE and GBI-SR), we select $\beta$ using a strategy inspired by the procedure in \Cref{app:setting-learning-rate}. We do not apply the exact same procedure, as it would require  significantly more simulations for both ACE and GBI-SR than the assigned budget, making the comparison unfair. Instead of the stochastic approximation steps, we use a grid of $\beta$ values; see \Cref{app:baselines}.

\paragraph{Performance metrics.} 
We use the MMD$^2$ as one of the performance metric. Given a reproducing kernel, the MMD$^2$ between distributions $\mathbb{Q}_1, \mathbb{Q}_2$ is:
\begin{talign*}
    \mathrm{MMD}^2(\mathbb{Q}_1, \mathbb{Q}_2)  = \mathbb{E}_{Y, Y' \sim \mathbb{Q}_1}[k(Y, Y')] - 2 \mathbb{E}_{Y \sim \mathbb{Q}_1,  Y'\sim \mathbb{Q}_2}[k(Y, Y')]  + \mathbb{E}_{Y, Y' \sim \mathbb{Q}_2}[k(Y, Y')].
\end{talign*}
We estimate this quantity using independent samples $y_{1:{n_1}} \sim \mathbb{Q}_1, \tilde y_{1:{n_2}} \sim \mathbb{Q}_2$: 
\begin{talign*}
    {\mathrm{MMD}}^2(y_{1:{n_1}}, \tilde y_{1:n_2}) :&= \frac{1}{n_1^2} \sum_{i,j=1}^{n_1}k(y_i, y_j) -  \frac{2}{n_1 n_2}\sum_{i=1}^{n_1}\sum_{j=1}^{n_2}k(y_i, \tilde y_j)  +  \frac{1}{n_2}\sum_{i,j = 1}^{n_2}k(\tilde y_i, \tilde y_j) 
\end{talign*}
Throughout, we use a Gaussian kernel defined as $k(y, y') = \exp\left(-\|y - y'\|^2_2/2 l^2\right)$ for $\text{MMD}^2_{\text{ref}}$ and wherever MMD is used in the baseline methods. Thus, $\mathrm{MMD}^2_\text{ref} := \mathrm{MMD}^2(\tilde\theta_{1:N_\text{post}}, \theta_{1: N_\text{post}})$,where $\tilde \theta_{1:N_\text{post}}$ are samples from the reference NLE posterior. The lengthscale $l$ is estimated using the median heuristic $l  = \sqrt{\text{median}(\|x_i - x_j\|^2 /2}$, $ i, j \in \{1, \ldots, n\})$. MSE is computed in closed-form for NSM-Bayes as $\text{MSE} = \Vert \mu_{n,m} - \theta^\star \Vert_2^2 + \text{tr}(\Sigma_{n,m})$, while for all the other methods, it is estimated using Monte Carlo samples from the posterior.
\subsection{Description of the baseline methods}
\label{app:baselines}

\paragraph{RSNLE \citep{Kelly2024}.}

This method considers model misspecification in the space of summary statistics $s \in \mathcal{S} \subseteq \mathbb{R}^{d_{S}}$ defined by a summary function $s: \mathcal{X}^n \to \mathcal{S}$. The model is said to be misspecified in the summary space if there exists 
no $\theta \in \Theta$ such that
$\mathbb{E}_{X_{1:n} \sim \mathbb{P{_\theta}}}[s(X_{1:n})] = \mathbb{E}_{X_{1:n}^o \sim \mathbb{P}_0}[s(X_{1:n}^o)]$. To address this issue, RSNLE introduces an auxiliary variable $\Gamma \in \mathbb{R}^{d_{\mathcal S}}$ that explicitly captures the discrepancy between the observed summary statistics $s(x^o_{1:n})$ and the simulated statistics $s(x_{1:n})$. Incorporating this discrepancy into the sequential NLE method consisting of $R$ rounds yields the augmented posterior:
\begin{talign*}
    \pi_{\mathrm{RSNLE}}\left(\theta, \Gamma \big| s(x^o_{1:n}), \hat\phi_m^R \right)
\propto
q_{\hat\phi_m^R}\!\left(s(x^o_{1:n}) - \Gamma | \theta\right)\,
\pi(\theta)\,
\pi^R \left(\Gamma | x^o_{1:n}\right).
\end{talign*}
The first round involves standard NLE training, yielding the estimator $q_{\hat\phi_m^1}$. At this stage, an independent Laplace prior is placed on each component of the discrepancy variable $\Gamma$. For subsequent rounds, the neural likelihood is retrained using parameters drawn from the posterior of the previous round, resulting in an updated estimator $q_{\hat\phi_m^{r+1}}$, $r=1, \ldots, R-1$. The discrepancy prior is updated according to $
\pi^r(\gamma_i | x^o_{1:n}) = \mathrm{Laplace}\!\left(0, \left|0.3\,\tilde s_i^r(x^o_{1:n})\right|\right)$,
where $\tilde s_i^r$ denotes the standardized $i$-th summary statistic, with mean and standard deviation computed over the accumulated simulated data
$\{\{x^{(r')}_{1:n,i}\}_{i=1}^m\}_{r'=0}^r$. Following the original implementation, we use a RealNVP normalizing flow for $q_\phi$ and the NUTS sampler to obtain posterior samples. In the g-and-k experiments, we use the quantile based summary statistics: $s_1 = q_{50}, s_2 = q_{75} - q_{25}, s_3 = (q_{90} + q_{10} - 2 q_{50}) / (q_{90} - q_{10}), s_4 = (q_{90} - q_{10}) / (q_{75} - q_{25})$, which we verify to be informative. As this is a statistics-based method, we need to simulate $n$ iid samples of $x$ per sample of $\theta$ in order to compute $s(x_{1:n})$. We therefore adjust the number of prior samples for RSNLE such that the total simulation budget is $m$; see \Cref{tab:baseline-sim-budget} for the specific values. As RSNLE uses the observed data in each round, it is not amortised.

\paragraph{NPE-RS \citep{Huang2023}.}

This method also considers model misspecification in the space of summary statistics, and aims to learn a summary network $s_\psi: \mathbb{R}^{n\times d_{\mathcal{X}}} \to \mathbb{R}^k$, with learnable parameter $\psi \in \Psi$, jointly with the posterior network $q_\phi$ such that $s_\psi(x_{1:n})$ is in the vicinity of $s_\psi(x^o_{1:n})$. To do that, NPE-RS penalises an NPE objective with the MMD$^2$ between summary statistics of simulated data and observed data:
\begin{talign*}
   \hat\phi_m^\lambda := \arg\min_{\phi \in \Phi} - \frac{1}{m} \sum_{i=1}^m \log q_\phi(\theta_i | s_\psi(x_{1:n, i})) + \lambda \sum_{i=1}^m {\mathrm{MMD}}^2\left(s_\psi(x_{1:n, i}), s_\psi(x^o_{1:n})\right).
\end{talign*}
The hyperparameter $\lambda >0$ controls the relative weight of the penalisation term in the optimisation. Once trained, the NPE-RS posterior $\pi_{\text{NPE-RS}}(\theta \mid x_{1:n}^o, \hat\phi_m^\lambda, \hat\psi_m^\lambda) = q_{ \hat\phi_m^\lambda}(\theta \mid s_{\hat\psi_m^\lambda}(x_{1:n}^o))$ is obtained by a simple forward pass.
Following the original implementation, we use an MAF for the inference network and set $\lambda = 1$.  We use a deep set architecture \citep{zaheer2017deep} for the summary network with $20$ dimensional output. Similar to RSNLE, we adjust the number of prior samples used for NPE-RS in order to keep the total simulation budget fixed to $m$. Note that NPE-RS is also not amortised as it uses $x_{1:n}^o$ during training.

\paragraph{NPL-MMD \citep{Dellaporta2022}.} 

NPL-MMD employs the Bayesian nonparametric learning approach with an MMD-based loss. Instead of using a prior on the parameters, this method sets a Dirichlet process (DP) prior on the true data-generating process $\mathbb{P}_0 \in \mathcal{P}(\mathcal{X})$, with two prior hyperparameters: a concentration parameter $\alpha \geq 0$ and a centring measure $\mathbb{F} \in \mathcal{P}(\mathcal{X})$. 
The posterior conditioned on $x^o_{1:n} \sim \mathbb{P}_0$ is also Dirichlet process due to conjugacy. 
Following the original paper, we set the concentration parameter to $\alpha = 0$, leading to a non-informative prior. In this case, (approximate) samples from the posterior are obtained through:
\begin{talign*}
    \mathbb{P}_j \mid x_{1:n}^o \sim \mathrm{DP}\text{-}\mathrm{Posterior}
    \Leftrightarrow \mathbb{P}_j = \sum_{i=1}^n w_{j,i} \delta_{x^o_i},\quad w_{j,1:n} \sim \mathrm{Dir}(1,\dots,1). 
\end{talign*}
Given a draw $\mathbb{P}_j$ from the DP posterior, a posterior sample in parameter space is obtained by solving
$\hat \theta_j
= \arg\min_{\theta \in \Theta}
{\mathrm{MMD}}^2(\tilde x_{1:n'}, x_{1:n})
\approx
\arg\min_{\theta \in \Theta}
\mathrm{MMD}^2(\mathbb{P}_j, \mathbb{P}_\theta)$. Thus, NPL-MMD is not amortised as it needs to be rerun from scratch for each new $x_{1:n}^o$. Each posterior draw requires $n' \times N_{\mathrm{step}}$ simulator evaluations, where $N_{\mathrm{step}}$ denotes the number of optimization iterations; see \Cref{tab:baseline-sim-budget} for the exact values used in order to keep the simulation budget the same.

\paragraph{ACE \citep{Gao2023}.}

ACE trains a neural network $\mathrm{NN}_\phi$ to approximate the loss function in GBI. Specifically they learn the MMD between each simulated dataset $x_{1:n,i}$, $i=1, \ldots, m$, and a target dataset $x'_{1:n,j}$, $j=1, \ldots, m'$. The target dataset consists of (i) $m$ simulated datasets, (ii) $m_{\mathrm{aug}}=100$  noise augmented samples, and (iii) $K$ independent observation datasets $x_{1:n, 1}^{o}, \dots, x_{1:n,K}^{o}$, 
such that $m' := m + m_{\mathrm{aug}} + K$ is the total number of target samples. Then,  the neural network is trained by solving:
\begin{talign*}
    \hat\phi_{m'} := \arg\min_{\phi \in \Phi} \frac{1}{m m'}\sum_{i=1}^m \sum_{j=1}^{m'} \mathrm{NN}_\phi(\theta_i, x'_{1:n,j}) - {\text{MMD}}^2(x_{1:n,i}, x'_{1:n,j})^2.
\end{talign*}
Once trained, the ACE posterior $\pi_{\mathrm{ACE}}(\theta \mid x^{o}_{1:n, k}, \hat\phi_{m'}) \propto \exp (-\beta\mathrm{NN}_{\hat\phi_{m'}}(\theta, x^{o}_{1:n, k})) \pi(\theta)$ can be obtained via MCMC, making it partially amortised. 
As the loss computation is amortised, implementing \Cref{alg:baseline-lr-calibration} for ACE does not require additional simulations.

\paragraph{GBI-SR \citep{Pacchiardi2024}.}  GBI-SR is a GBI method for simulators based on scoring rules. The original paper considers an energy score and a kernel score, with the latter giving a generalised posterior equivalent to MMD-Bayes \citep{Cherief-Abdellatif2019-MMDBayes}. In this paper, we only consider the kernel score, which gives robustness guarantees with a bounded kernel. In contrast, the energy score does not in fact lead to robustness beyond trivial settings (e.g. bounded domains) where even standard Bayesian posteriors are robust. Using the kernel score $\hat S(\mathbb{P}_\theta, x_i^o)$, the GBI-SR posterior is $\pi_{\mathrm{GBI-SR}}(\theta | x^o_{1:n}) \propto \exp(-\beta \sum_{i=1}^n \hat S(\mathbb{P}_\theta, x_i^o) ) \pi(\theta)$, where
\begin{talign*}
\hat S(\mathbb{P}_\theta, x_i^o) = \frac{1}{n'(n'-1)} \sum_{j,k=1, j\neq k}^{n'} k(x_j, x_k) - \frac{2}{n'}\sum_{j = 1} ^{n'}k(x_j, x^o_i), \quad x_{1:n'} \sim \mathbb{P}_\theta.
\end{talign*}
For differentiable simulators, as is the case with the g-and-k distribution, they propose using stochastic gradient MCMC, specifically adaptive stochastic gradient Langevin dynamics (adSGLD) \citep{jones2011adaptive}. For non-differentiable simulators such as the SIR and the radio propagation model where gradients cannot be computed, we use the pseudo-marginal MCMC \citep{Andrieu2009}, as per their recommendation. As the kernel score computation requires the observed data, GBI-SR is not amortised. Obtaining one posterior sample in GBI-SR requires $n'$ simulation. Assuming $N_{\text{warm}}$ warm-up steps in MCMC and total $N_{\text{post}}$ posterior samples, GBI-SR requires $n'(N_{\text{warm}} + N_{\text{post}})$ simulations. Additionally, simulation budget is spent on in computing $\hat \theta_n$ and tuning the learning rate using \Cref{alg:baseline-lr-calibration}; see \Cref{tab:baseline-sim-budget} for the exact values used in our experiments. $M$ denotes the number of posterior draws for the bootstrap samples as before, $n'_b$ denotes $n'$ in the bootstrap setting, and $N_{\text{post},\hat{\theta}}$ denotes the number of posterior samples used to estimate $\hat{\theta}_n$ for non-differentiable simulators.

\begin{table}[t]
\centering
\small
\setlength{\tabcolsep}{4pt}
\caption{Simulation budget formulas, parameter configurations, and total forward simulations for all methods on the g-and-k and SIR, and Turin simulators.}
\label{tab:baseline-sim-budget}
\begin{tabular}{p{2cm}|p{5cm}|p{6.5cm}|c}
\hline
Method & Formula & Value & Budget \\
\hline

\multicolumn{4}{l}{\textbf{g-and-k}}\\
\hline \hline
Our methods & $m$ & $m = 100{,}000$ & $100{,}000$ \\ \hline
RSNLE & $Rmn$ & $R = 2,\; m = 500, n = 100$ & $100{,}000$ \\ \hline
NPE-RS & $mn$ & $m = 1000,\; n = 100$ & $100{,}000$ \\ \hline
NPL-MMD & $N_{\text{post}}\,n'\,N_{\text{step}}$ & $N_{\text{post}} = 500,\;n' = 20,\; N_{\text{step}} = 10$ & $100{,}000$ \\ \hline
ACE & $mn$ & $m = 1000,\; n = 100$ & $100{,}000$ \\ \hline
GBI-SR &
$Tn'_b + |\beta_{\text{list}}|\,B\,M\,n'_b + (N_{\text{warm}}+N_{\text{post}})\,n'$ &
$ T = 200,\; n'_b = 7,\; |\beta_\text{list}|=3,\; B = 20\; M = 200\;,
N_\text{warm}= N_\text{post} = 500, \; n' = 15\; $ &
$100{,}400$ \\ 
\hline

\multicolumn{4}{l}{\textbf{SIR / Turin}}\\
\hline \hline
Our methods & - & $m = 50{,}000$ & $50{,}000$ \\ \hline
ACE & - & $m = 500,\; n = 100$ & $50{,}000$ \\ \hline
GBI-SR (only SIR)  &
$N_{\text{post},\hat \theta} n'_b + |\beta_{\text{list}}|\,B\,M\,n'_b + (N_{\text{warm}}+N_{\text{post}})\,n'$ &
$ N_{\text{post},\hat \theta} = 200, \;n'_b = 4,\; M = 172,\; n' = 8$ (others same as g-and-k) &
$50{,}080$ \\ \hline
\hline

\end{tabular}
\end{table}

\subsection{Additional results}
\label{app:additional_experiments}

We now present additional results from our experiments in \Cref{sec:experiments}.

\subsubsection{The g-and-k simulator}
\paragraph{Calibrating $\beta$ for g-and-k distribution.}

In \Cref{fig:gnk_beta_coverage}, we show the trace plots from estimating the learning rate $\beta$ for NSM-Bayes and NSM-Bayes-conj using the method described in \Cref{app:setting-learning-rate}. We observe that the starting value $\beta_0$ for NSM-Bayes was low, which resulted in 100\% coverage. As the stochastic steps proceed, $\beta$ keeps increasing until the coverage drops towards the target of 95\%. For NSM-Bayes-conj, the reverse occurs: starting value of $\beta$ is too high, leading to near 50\% coverage. So the $\beta$ value increases over the iterations, until they reach 95\% coverage, after which the values stabilise. In some of the runs for NSM-Bayes-conj, the coverage is much lower. This occurs when the matrix inversion in the closed-form expression for $\hat \theta_n$ becomes numerically unstable, leading to $\hat \theta_n$ being far from the true parameter value $\theta^\star$.

\begin{figure}[t]
\centering
\begin{minipage}[t]{0.6\textwidth}\vspace{0pt}
  \centering
  \includegraphics[width=\linewidth]{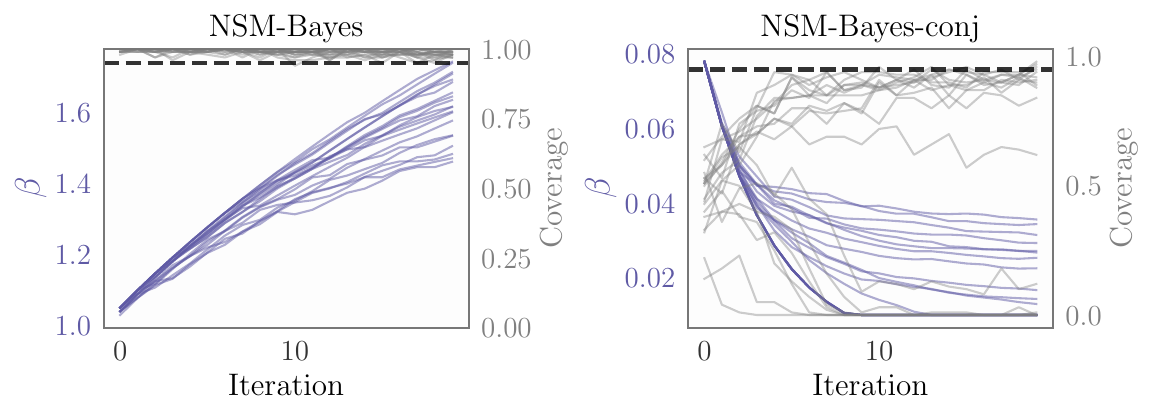}
  \caption{Trace plots obtained across the 20 runs during the stochastic approximation of the learning rate in the g-and-k experiment.}
  \label{fig:gnk_beta_coverage}
\end{minipage}\hfill
\begin{minipage}[t]{0.38\textwidth}\vspace{0pt}
  \centering
  \captionof{table}{Training and inference times for different methods in the SIR experiment with undercounting contamination model.}
  \label{tab:sir_times}
  \scalebox{0.72}{
    \begin{tabular}{@{}rcc@{}}
    \toprule
    \multicolumn{1}{l}{} & \begin{tabular}[c]{@{}c@{}}Training time\\ {[}s{]}\end{tabular} &
    \begin{tabular}[c]{@{}c@{}}Inference time\\ {[}s{]}\end{tabular} \\
    \midrule
    NLE & 92 (12) & 63 (71) \\
    ACE & 545 (39) & 390 (6) \\
    GBI-SR & - & 92 (2) \\
    NSM-Bayes & 92 (12) & 167 (1) \\
    NSM-Bayes-conj & 1312 (117) & 19 (0.1) \\
    \bottomrule
    \end{tabular}}
\end{minipage}
\end{figure}

\subsubsection{ The SIR simulator} The marginal posterior plots from the SIR undercounting experiment in the main text are shown in \Cref{fig:sir_posteriors_undercounting}. Here, NLE catastrophically fails, yielding very confident but heavily biased posteriors. In contrast, NSM-Bayes and NSM-conj posteriors are accurately concentrating around the true $\theta$. ACE and GBI-SR do not perform well in this case, mainly because the simulation budget is not large enough for them.
We report the training and inference times in \Cref{tab:sir_times}. In this higher dimensional case, NSM-Bayes-conj takes 19 seconds to calibrate $\beta$. Its training time is also large, as the network keeps training for all 1000 epochs. The validation loss keeps going down and so the stopping criterion does not get activated. 

\begin{figure}[t]
    \centering
    \includegraphics[width=0.9\linewidth]{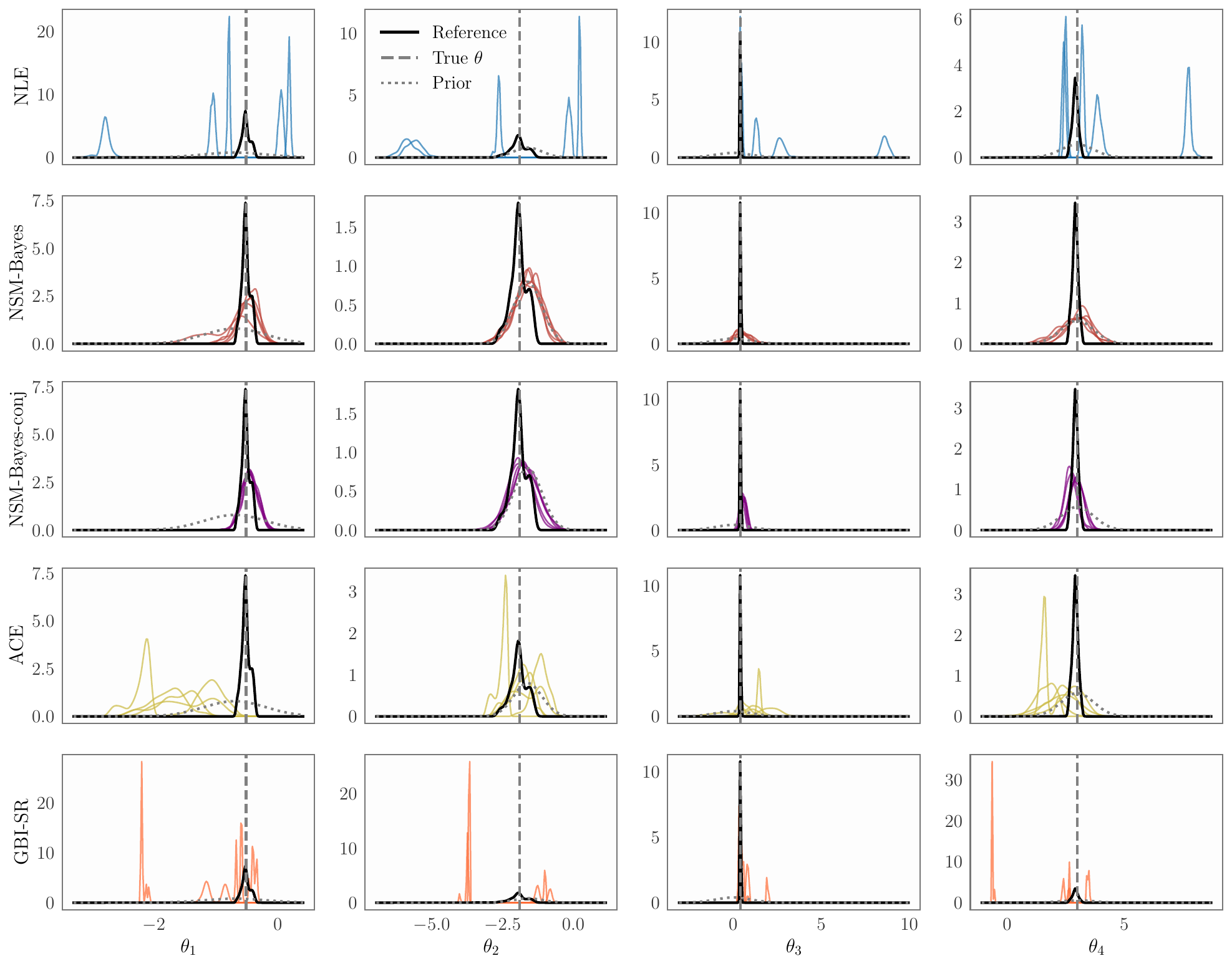}
    \caption{Kernel density estimates of the marginal posterior plots of NLE (\legendbox{nlecolor}), NSM-Bayes (\legendbox{gencolor}), NSM-Bayes-conj (\legendbox{c1color}), ACE (\legendbox{acecolor}), and and GBI-SR (\legendbox{gbisrcolor}) in the SIR undercounting experiment. 
    }
    \label{fig:sir_posteriors_undercounting}
\end{figure}

\paragraph{Varying learning rate $\beta$.} In \Cref{fig:sir_varying_beta}, we analyse the behaviour of NSM-Bayes-conj as the learning rate varies. The learning rate $\beta$ controls the coverage of the GBI posterior. As $\beta$ increases, the posterior gets concentrated around the loss minimising parameter estimate $\hat \theta_n$. When $\beta$ is low, the posterior becomes less impacted by the NSM loss and resembles the prior.

\begin{figure}
    \centering
    \includegraphics[width=\linewidth]{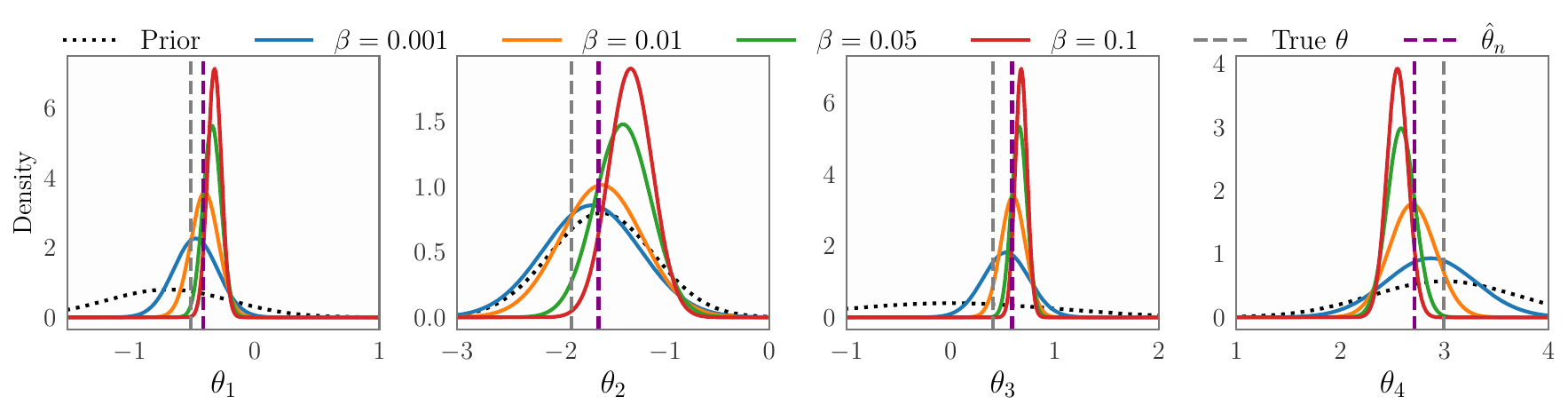}
    \caption{Varying the learning rate $\beta$ for NSM-Bayes-conj in the SIR undercounting experiment.}
    \label{fig:sir_varying_beta}
\end{figure}

\begin{wrapfigure}[]{r}{0.35\textwidth}
\begin{minipage}{0.35\textwidth}
    \centering
    \vspace{-2ex}
    \captionof{table}{Ablation study on NSM-Bayes-conj method involving the estimators used in the IMQ weight function $w(x)$.}
    \label{tab:sir_not_robust}
    \scalebox{0.77}{
    \begin{tabular}{@{}lll@{}}
    \toprule
                        & MSE                           & $\text{MMD}^2_{\text{ref}}$                     \\ \midrule
    Robust $\hat \nu_n$, $\hat \Xi_n$      & \textbf{0.42} \textcolor{gray}{(0.05)}  & \textbf{0.26} \textcolor{gray}{(0.06)} \\
    Sample $\hat \nu_n$, $\hat \Xi_n$  & 2.27 \textcolor{gray}{(0.00)} & 0.41 \textcolor{gray}{(0.04)} \\ \bottomrule
    \end{tabular}}
    \vspace{-2ex}
\end{minipage}
\end{wrapfigure}

\paragraph{Ablation study.} We now analyse the impact of our choice of robust mean (i.e. median) $\hat \nu_n$ and robust covariance $\hat \Xi_n$ estimators on the performance of NSM-Bayes-conj. To that end, we replace them with their non-robust, sample-based estimators, i.e., sample mean and sample covariance in the SIR undercounting experiment. The performance metrics are reported in \Cref{tab:sir_not_robust}. We observe that the performance degrades across both metrics when choosing the non-robust estimators for $\nu$ and $\Xi$ in the IMQ weight function.

\subsubsection{The radio propagation simulator}
 
Finally, the inference times for all the methods in the radio propagation experiment are reported in \Cref{tab:turin_inference_time}. Note that the MCMC in NLE is much faster in this case due to the conditional density being a mixture density network instead of the normalising flow. NSM-Bayes takes longer in this case due to repeated MCMC needed during learning rate tuning, as the effective sample size of the importance sample kept going below 30\%. When the data-dimension is higher, tuning $\beta$ takes longer in NSM-Bayes-conj.

\begin{table}
\centering
\caption{Inference times for the radio propagation example shown in \Cref{fig:turin_data}.}
\label{tab:turin_inference_time}
\scalebox{0.85}{
\begin{tabular}{@{}lcccc@{}}
\toprule
 & NLE  & ACE & NSM-Bayes & NSM-Bayes-conj \\ \midrule
\multicolumn{1}{r}{Inference time {[}s{]}} & 13.7 &  589.0   & 198.3     & 19.8           \\ \bottomrule
\end{tabular}}
\end{table}

\subsection{Analysis of baseline methods} \label{app:add_exp_baselines}

\paragraph{NPL-MMD.}

This method places a non-informative Dirichlet process prior on the data-generating measure, whereas all the other methods place a prior on the parameter space, which is informative in the g-and-k experiment. Thus, forcing NPL-MMD to have the same simulation budget as the other methods is somewhat unfair to it, considering it needs to learn the posterior without knowledge of the prior. As shown in \Cref{fig:baseline-add}(a), increasing the budget substantially improves accuracy, particularly for $\theta_1$ and $\theta_2$, with $\theta_4$ also moving closer to the true value. However, the posterior for $\theta_3$ remains biased and an even much larger budget would likely be required to improve on this.

\paragraph{ACE.}
Although ACE achieves competitive performance in the g-and-k and the radio propagation experiments, it fails to yield robust posteriors in the SIR experiment. This is primarily due to the limited simulation budget, which leads to a poor approximation of the loss function with the neural network. Doubling the simulation budget to $m=10^5$ substantially improves its performance, as shown in \Cref{fig:baseline-add}(b). This indicates that NSM-Bayes and NSM-Bayes-conj are more sample-efficient than ACE, requiring less simulation budget to achieve similar performance.

\paragraph{GBI-SR.}

While stochastic gradient MCMC could be used for the g-and-k experiment, the non-differentiability of the SIR simulator forced us to use pseudo-marginal MCMC.  Given the slower convergence and poorer mixing of pseudo-marginal MCMC \citep{Pacchiardi2024}, 500 posterior samples were insufficient for adequate mixing. We therefore increase the simulation budget for GBI-SR to more than $4\times$ the original budget and draw 10,000 posterior samples. This leads to a substantial improvement in its performance, as shown in \Cref{fig:baseline-add}(c). Once again, this shows that NSM-Bayes and NSM-Bayes-conj are more sample-efficient (in terms of the number of simulations).

\begin{figure}[H]
    \centering
    \begin{subfigure}[Marginal NPL-MMD (\legendbox{nplmmdcolor}) posterior for the g-and-k model using $500$ posterior samples. We set $n' = 2^9, N_{\text{step}} = 1000$, and step size to $0.1$, leading to a total simulation budget of $2.56 \times 10^{7}$.]{
        \includegraphics[width=0.9\linewidth]{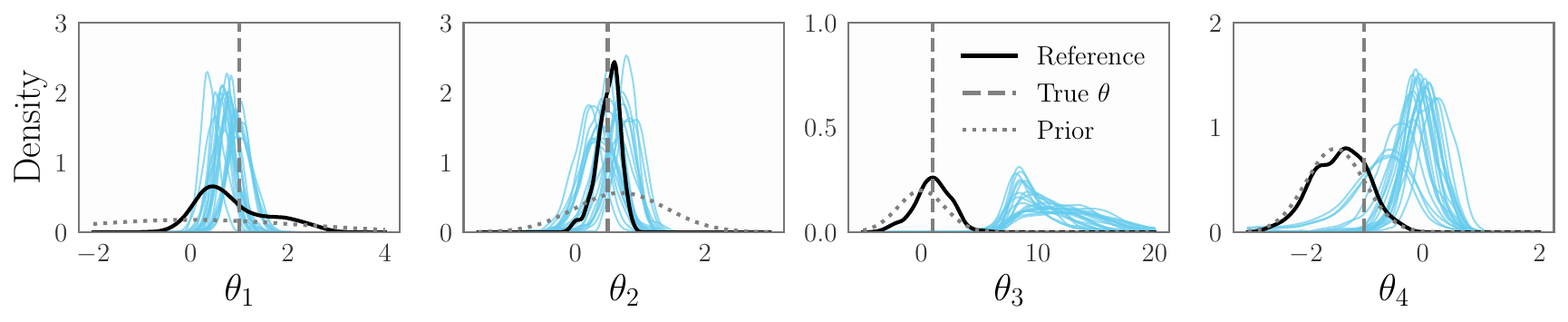}}
    \end{subfigure}
    \begin{subfigure}[Marginal ACE (\legendbox{acecolor}) posterior for the SIR undercounting experiment. We set $m = 1000$ with fixed $\beta = 100$, resulting in a simulation budget of $100{,}000$, which is twice the budget used in \Cref{sec:sir}.]{
        \includegraphics[width=0.9\linewidth]{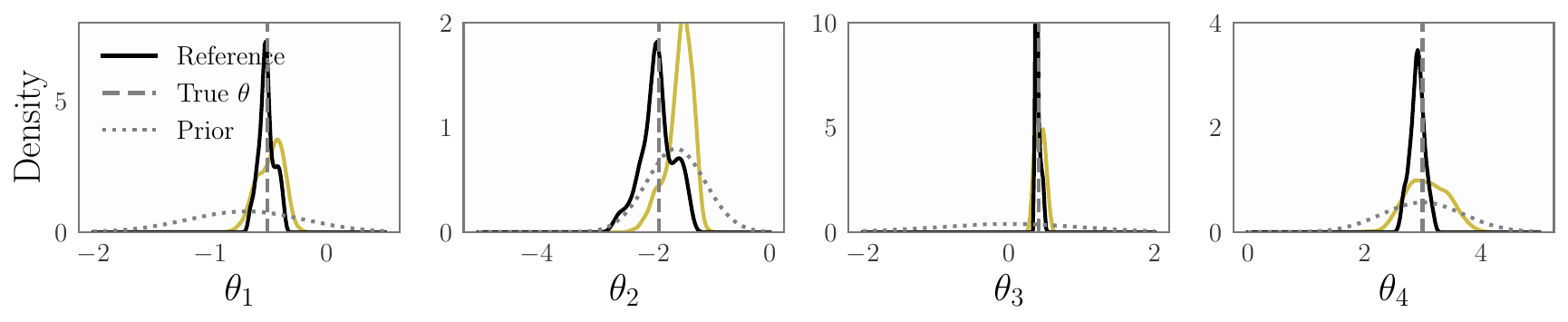}}
    \end{subfigure}
    \begin{subfigure}[Marginal GBI-SR (\legendbox{gbisrcolor}) posterior for the SIR undercounting experiment. We set $N_\text{post} = 10{,}000$, $N_\text{warmup}= 1000$, $n' = 200$, and fix $\beta = 1$, such that total simulation budget is $2.2 \times 10^{6}$.]{
        \includegraphics[width=0.9\linewidth]{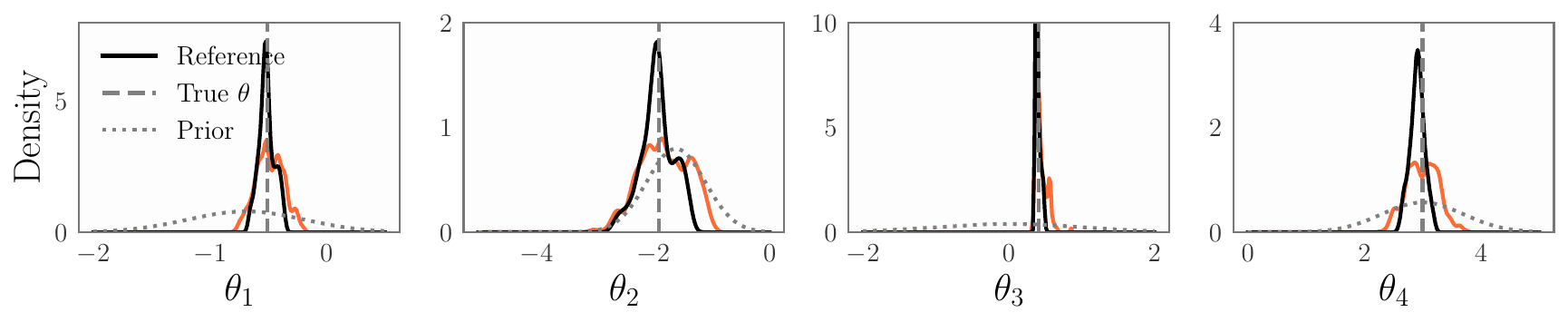}}
    \end{subfigure}
    \caption{Performance of NPL-MMD, ACE and GBI-SR under increased simulation budget.}
    \label{fig:baseline-add}
\end{figure}

\paragraph{RSNLE.}

\begin{wrapfigure}[]{r}{0.3\textwidth}
    \centering
    \vspace{-3ex}
    \includegraphics[width=0.9\linewidth]{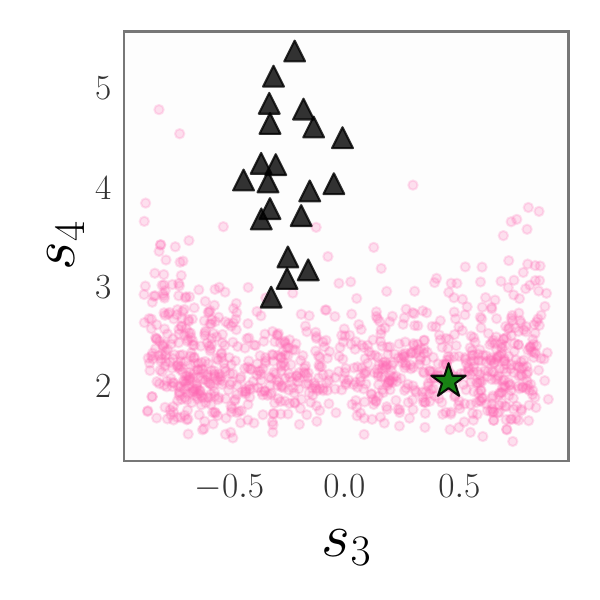}
    \vspace{-2ex}
    \caption{Analysing summary statistics in RSNLE.}
    \label{fig:gnk_rsnle_summary}
\end{wrapfigure}

Finally, we investigate the poor performance of RSNLE in estimating the $\theta_3$ parameter of the g-and-k distribution. To that end, we look at the scatter plot of the third and fourth simulated statistics (\legendbox{rsnlecolor}), $s_3$ and $s_4$, used for RSNLE in \Cref{fig:gnk_rsnle_summary}. We also plot the observed statistics from the 20 runs of the g-and-k experiment (\legendbox{black}), and the statistic corresponding to the uncorrupted data (\legendbox{darkgreen}) obtained from $\theta^\star$ with $\epsilon=0$. We see that the outliers cause a shift in both the observed $s_3$ and $s_4$. Note that while the observed $s_4$ go out of the simulated statistics distribution, this does not happen for $s_3$. Thus, the simulator can produce statistics that match the observed $s_3$, and so the posterior RSNLE yields for $\theta_3$ (which is determined by $s_3$) becomes biased. This points to a failure mode of RSNLE, wherein contamination in the data biases the observed statistic away from the uncorrupted statistic, but still within the training distribution that RSNLE sees.

\stopappendixtoc
\end{document}